\documentclass[11pt]{article}

\usepackage{moreverb,url}
\usepackage{natbib}

\usepackage[hidelinks]{hyperref}

\usepackage{times}
\usepackage{multicol}
\usepackage[LGR,T1]{fontenc}
\newcommand{\textgreek}[1]{\begingroup\fontencoding{LGR}\selectfont#1\endgroup}

\usepackage{amsmath}    
\usepackage{amsthm}    
\usepackage{graphicx}   
\usepackage[font=footnotesize,labelfont=footnotesize]{caption}
\usepackage{verbatim}   
\usepackage{color}      
\usepackage{mathtools}  

\usepackage{subfigure}  
\usepackage{hyperref} 
\usepackage{multirow}
\usepackage{rotating}
\usepackage{array}
\usepackage{xspace}
\usepackage{indentfirst}
\usepackage{amssymb}
\usepackage{float}
\usepackage{comment}
\usepackage{sidecap}
\usepackage{slashbox}

\usepackage[ruled,vlined,noend]{algorithm2e}
\usepackage{color}
\definecolor{darkblue}{rgb}{0.15,0.09,0.3}

\SetCommentSty{mycommfont}

\definecolor{darkred}{rgb}{.3333,0.0,0.0}

\SetAlFnt{\small}
\SetAlCapFnt{\normalsize}
\SetAlCapNameFnt{\normalsize}
\usepackage{algorithmic}
\algsetup{linenosize=\tiny}

\hyphenation{homo-morphic}
\hyphenation{non-des-tructive}

\newcommand\BibTeX{{\rmfamily B\kern-.05em \textsc{i\kern-.025em b}\kern-.08em
T\kern-.1667em\lower.7ex\hbox{E}\kern-.125emX}}


\let\emptyset\varnothing

\setcounter{secnumdepth}{3}

\usepackage{graphicx} 




\newcommand*{\medcap}{\mathbin{\scalebox{1.5}{\ensuremath{\cap}}}}%

\newcommand{\set}[1]{\ensuremath{{\rm #1}}\xspace}
\newcommand{\lang}[1]{\ensuremath{{\rm #1}}\xspace}
\newcommand{\func}[1]{\ensuremath{{\rm #1}}\xspace}
\newcommand{\defn}{\coloneqq}
\newcommand{\ks}{\ast}
\newcommand{\emptyseq}{\varepsilon}

\renewcommand{\emptyset}{\varnothing}
\newcommand{\Vinit}{V_{\scriptscriptstyle 0}}
\newcommand{\Vu}{V_{u}}
\newcommand{\Vy}{V_{y}}

\newcommand{\Vterm}{V_{\rm term}}
\newcommand{\Vtermp}{V'_{\rm term}}
\newcommand{\Vgoal}{V_{\rm goal}}
\newcommand{\Vgoalp}{V'_{\rm goal}}

\newcommand{\card}[1]{\ensuremath{|{#1}|}\xspace}

\newcommand{\Pterm}{P_{\rm term}}
\newcommand{\Qterm}{Q_{\rm term}}
\newcommand{\Rterm}{R_{\rm term}}

\newcommand{\mysq}{\scalebox{0.35}{$\square$}}
\newcommand{\act}[1]{\ensuremath{{#1}^{{}^{{}_\bullet}}\xspace}}
\newcommand{\obs}[1]{\ensuremath{{#1}^{{}^{{}_{\mysq}}}}\xspace}

\def\markatright#1{\leavevmode\unskip\nobreak\quad\hspace*{\fill}{#1}}

\newcommand{\edge}[3]{{#1}\xrightarrow{#2}{#3}}

\newcommand{\gobble}[1]{}

\newcommand{\eseq}{e_1 \cdots e_k}
\newcommand{\epseq}{e'_1 \cdots e'_m}
\newcommand{\pow}[1]{\ensuremath{{2}^{#1}}\xspace}
\newcommand{\real}{\ensuremath{\mathbb R}\xspace}

\newcommand{\natzero}{\ensuremath{\mathbb N_0}\xspace}

\newcommand{\langof}[1]{\ensuremath{\mathcal{L}({#1})}\xspace}

\newcommand{\defeq}{\ensuremath{\coloneqq}}

\makeatletter
\let\example\@undefined
\let\endexample\@undefined    
\makeatother


\makeatletter
\let\theorem\@undefined
\let\endtheorem\@undefined    
\let\c@theorem\@undefined
\makeatother

\makeatletter
\let\lemma\@undefined
\let\endlemma\@undefined    
\let\c@lemma\@undefined
\makeatother


\newtheorem{definition}{Definition}[section]
\newtheorem{theorem}{Theorem}[section]
\newtheorem{corollary}{Corollary}[theorem]
\newtheorem{lemma}[theorem]{Lemma}

\theoremstyle{definition} 
\newtheorem{example}{Example}[section]

\begin{document}

\title{Toward a language-theoretic foundation\\ for planning and filtering}

\author{Fatemeh Zahra Saberifar, Shervin Ghasemlou,\\ Dylan A. Shell, and Jason~M.~O'Kane}

\date{July 6, 2018}

\maketitle


\begin{abstract}
  We address problems underlying the algorithmic question of automating the
  co-design of robot hardware in tandem with its apposite software.
  Specifically, we consider the impact that degradations of a robot's sensor
  and actuation suites may have on the ability of that robot to complete its
  tasks.
  We introduce a new formal structure that generalizes and consolidates a
  variety of well-known structures including many forms of plans, planning
  problems, and filters, into a single data structure called a procrustean
  graph, and give these graph structures semantics in terms of ideas based in
  formal language theory.
  We describe a collection of operations on procrustean graphs (both
  semantics-preserving and semantics-mutating), and show how a family of
  questions about the destructiveness of a change to the robot hardware can be
  answered by applying these operations.
  We also highlight the connections between this new approach and
  existing threads of research, including combinatorial filtering, Erdmann's
  strategy complexes, and hybrid automata.
  %
  %
  {\emph {Keywords: planning; combinatorial filter; design automation}}
\end{abstract}

\section{Introduction}\label{sec:intro}

The process of designing effective autonomous robots---spanning the selection
of sensors, actuators, and computational resources along with software to
govern that hardware---is a messy endeavor.
There appears to be little hope of fully automating this process, at least in
the short term.  There would, however, be significant value in \emph{design
tools} for roboticists that can manipulate partial or tentative designs, in
interaction with a human co-designer.
For example, one might imagine algorithms that answer questions about the
relationship between a robot's sensors and actuators and that robot's ability
to complete a given task.

A crucial requirement for this kind of automation is a general formal model
that can describe, in a precise way, a robot's sensing and actuation
capabilities in the context of its interaction with an environment.
To that end, this paper lays theoretical groundwork for reasoning about sensors
and actuators and their associated estimation and planning processes.  The
underlying goal is to strengthen the link between idealized models and
practical\,---that is, imperfect, imprecise, and limited---\,realizations of
those idealized models in actual, available hardware.

To motivate these questions more concretely, consider the pair of scenarios
that follows.

\begin{example}
  Your robot is stationed on a distant planet and, though fully operable
  initially, has recently encountered a problem. It appears that debris has
  become affixed to one of the sensors.  Should operations be altered by taking
  more conservative paths around obstacles because the robot's position
  estimates now involve greater error than previously?  Or has the mission been
  entirely compromised?  Assuming that the debris cannot be dislodged, what
  tasks are still feasible?
\end{example}

\begin{example}
  You lead an R\&D team who have built and tested a successful prototype robot,
  which performs cosmetic services (e.g.,~manicures, pedicures, facials,
  hair-weaves, etc.) efficiently and safely.
  %
  Then\dots disaster!  You discover that the sensor provided to your factory in
  bulk (say $S_1$), differs from the device ($S_0$) supplied by the same
  manufacturer to the team who built and tested the prototype.
  A successful redesign of the robot might require answers to these kinds of
  questions: Can $S_1$ be used directly as a plug-and-play replacement for
  $S_0$?  If not, can we adjust some software parameters to make it work?
  Which parameters and what should the adjustments be? If $S_1$ necessarily incurs
  a loss in performance, how can this be understood---perhaps only the
  hair-styling functionality is affected?  Supposing we can procure $S_0$ at
  greater cost through another vendor, is this worth doing?
\end{example}

\noindent Underlying these scenarios is the problem of how to ascertain whether
or not a particular modification to one's model of a robot is destructive for a given
task.  In this paper, we formalize this question, providing theoretical
foundations as well as algorithms to address problems of this type.
This can be seen visually in Figure~\ref{fig:lasers'n'wheels}.

\begin{figure}
  \centering
  \includegraphics[scale=0.6]{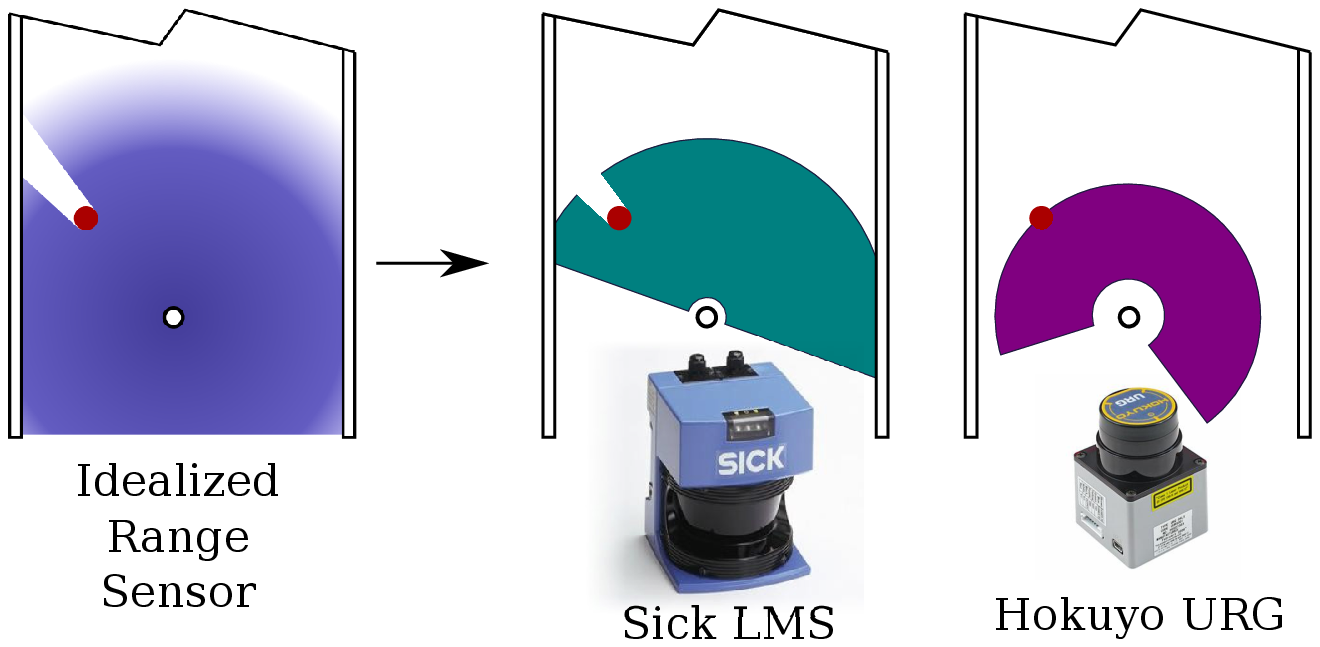}\\
  \vspace{10pt}
  \hspace*{-24pt}\includegraphics[scale=0.6]{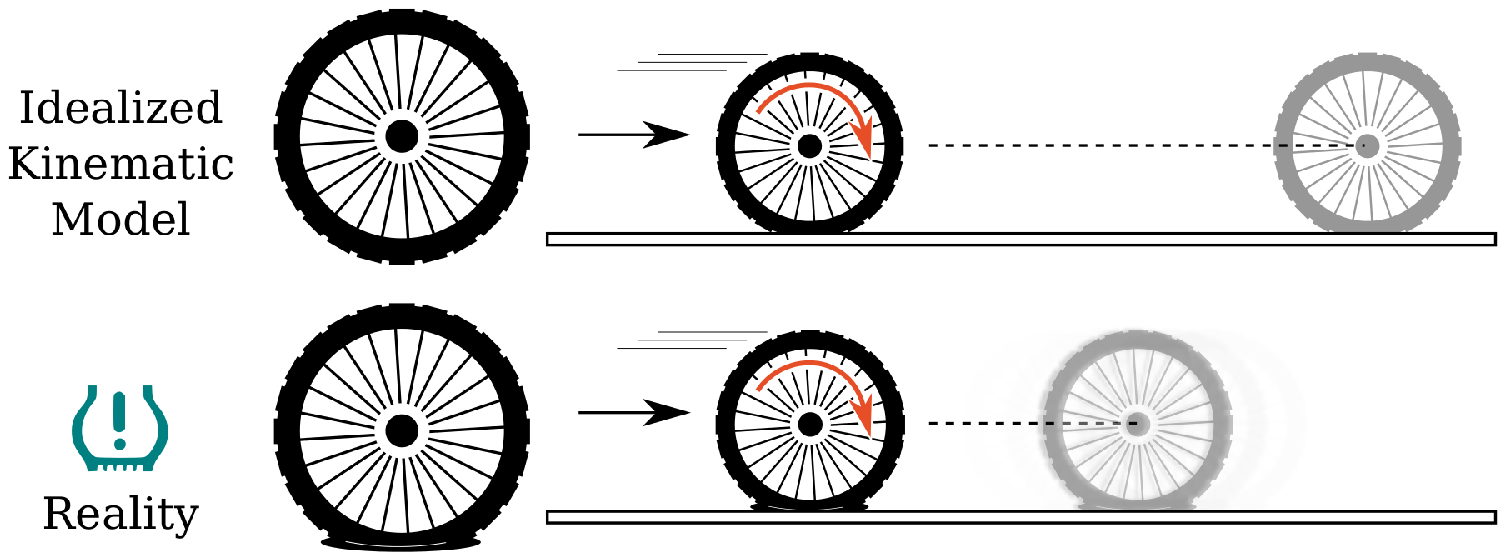}
  \caption{
    Given a specification of robot capabilities---encompassing both sensors
    (top) and actuators (bottom)---and a task, which changes to those sensors
    or actuators make the task infeasible?  This question is fundamental
    because it addresses the link between idealized models and practical
    (imperfect, imprecise, and limited) realizations of those idealized models
    in hardware, and because its answers can can lead to insight about weakest
    robots that suffice for a given task. 
    \label{fig:lasers'n'wheels}
  }
\end{figure}

This paper makes several new contributions.
\begin{enumerate}
  \item We introduce, in Section~\ref{sec:ilang}, the notion of an
  \emph{interaction language}, which models the interactions between a robot
  and its environment using the theory of formal languages.  This approach
  unifies several previously distinct conceptual classes of object.  

  \item We contribute, in Section~\ref{sec:pgraph}, a general representation
  called a \emph{procrustean graph}\footnotemark{} for interaction languages.
  This representation is constructive, in that it can be used to instantiate a
  data-structure from which various questions can be posed and addressed
  concretely.
  \footnotetext{Named for Procrustes (\textgreek{Prokro\'{u}stes}), son of
  Poseidon, who, according to myth, took the one-size-fits-all concept to
  extremes.} 
  
  \item We show, in Section~\ref{sec:maps}, how to model degradations to
  sensing and action capabilities in this framework as \emph{label maps}.  We
  address the question of deciding whether a label map is \emph{destructive},
  in the sense of preventing the achievement of a previously-attainable goal,
  for both filtering and planning problems, in Sections~\ref{sec:filters} and
  \ref{sec:plans} respectively.  We also prove that the broader question of
  finding a non-destructive label map that is, in a certain sense, maximal, is
  NP-hard.

\end{enumerate}
The d\'{e}nouement of the paper includes a review of related work interleaved
with a discussion of the outlook for continued progress
(Section~\ref{sec:related}) and some concluding remarks
(Section~\ref{sec:conclusion}).

Preliminary versions of this work appeared in 2016 at RSS~\citep{SabGha+16} and WAFR~\citep{GhaSab+16}.

\section{Actions, observations, and interaction languages}\label{sec:ilang}

We begin with some basic definitions for modeling the interaction between an
agent or robot and its environment.  The robot executes \emph{actions} drawn
from a non-empty \emph{action space} $\set{U}$; the environment yields
\emph{observations} drawn from a non-empty \emph{observation space} $\set{Y}$.
We assume that $\set{U} \cap \set{Y} = \emptyset$.
Neither need necessarily be finite.
Noting the duality between actions and observations---an observation can be
viewed merely as an `action by nature'---we treat actions and observations as
specific subtypes of a more general class of events.

\begin{definition}[event]
  An \emph{event} is an action or an observation.  The \emph{event space} is
  $\set{E} \defn \set{U} \cup \set{Y}$.
\end{definition}

\begin{definition}[event sequence]
  An \emph{event sequence} over $E$ is a finite sequence of events $e_1 \cdots
  e_m$ drawn from $E$.
  An event sequence is called \emph{action-first} if $e_1 \in \set{U}$, or
  \emph{observation-first} if $e_1 \in \set{Y}$.
  Likewise, an event sequence is called \emph{action-terminal} if $e_m \in
  \set{U}$, or \emph{observation-terminal} if $e_m \in \set{Y}$.
\end{definition}

\begin{definition}[successor]
  For two event sequences $s_1$ and $s_2$ over the same event set $\set{E}$, we
  say that $s_2$ is a \emph{successor} of $s_1$, if there exists some
  $e\in\set{E}$ such that $s_2 = s_1 e$.
\end{definition}

In what follows, we describe sets of event sequences using standard notation
for regular expressions: Concatenation (represented implicitly using
juxtaposition), alternation (using the binary $+$ operator), the empty sequence
(the $\emptyseq$ symbol), and the Kleene star (the unary $\ks$ operator).

\begin{definition}[interaction language]\label{def:ilang}
  An \emph{interaction language} $\lang{L}$ over an event space~$\set{E}$ is a
  set of event sequences which is either
  \begin{enumerate}
    \item a subset of $(\set{U}\set{Y})^\ks(\set{U} + \{\emptyseq\})$, or
    \item a subset of $(\set{Y}\set{U})^\ks(\set{Y} + \{\emptyseq\})$,
  \end{enumerate}
  and which is closed under prefix.
\end{definition}
The intuition behind interaction languages is that they describe sequences of
events which alternate between action and observation.  The definition admits
two distinct types of interaction languages: those whose members begin with
actions (hereafter, \emph{action-first languages}) and those whose members
begin with observations (\emph{observation-first languages}).

Interaction languages encode an interaction between an agent or robot (which
selects actions) and its environment (which dictates the observations made by
the robot). The definition is intentionally ecumenical in regard to the nature
of that interaction, because we intend this definition to serve as a starting
point for more specific structures which, once specific context and semantics
are added, lead to special cases that represent particular (and familiar)
objects involving planning, estimation, and the like. 

The prefix-closure requirement in Definition~\ref{def:ilang} ensures that for
all event sequences $e_1 e_2\cdots e_m \in \lang{L}$, every subsequence $e_1
\cdots e_k$ with $k < m$ is also in $\lang{L}$.
If a language expresses properties of some structured interaction, then some
event sequences are excluded from that language.  In such cases then the prefix
condition captures the idea that part of the way through a sequence, or even in
a sequence stopped short, anterior structure is present.


The examples that follow illustrate a few different kinds of interaction languages.

\begin{example}[Filters]\label{ex:filter}
  Filtering, in very broad terms, refers to any process by which observations
  are processed to produce specified outputs.  That is, a specification of a
  filter tells us, for any plausible history of observations that an agent
  might have made, what the correct output from the filter should be.
  Filters are, of course, objects of intense and sustained interest within the
  robotics community.
  
  In light of Definition~\ref{def:ilang}, we can describe a filter as an
  action-first interaction language $L$, in which the filter's outputs are
  modeled as actions ${\rm emit}_{x}$ for various outputs $x$.



\end{example}

\begin{example}[Schoppers's universal plans]
\label{ex:schoppers}
  For observable domains, a universal plan~\citep{schoppers87} is a
  specification of an appropriate action for each circumstance that an agent
  might find itself in.
  This kind of model can be expressed as an interaction language $L$ in which
  each observation corresponds to a world state, and for each observation $y
  \in Y$, every event sequence ending in $y$ has exactly one successor, and
  that these successors are all formed by appending a single unique action $u$.
  The intuition is that this unique $u$ indicates the action that should be
  taken when the robot is in the state corresponding to $y$.
\end{example}

\begin{example}[Erdmann-Mason-Goldberg-Taylor plans]
\label{ex:emgt1}
  Several classic papers\\
  \citep{ErdMas88,mason88planningseq,Gol93} find
  policies for manipulating objects in sensorless (or nearly sensorless)
  conditions.  The problems are usually posed in terms of a polygonal
  description of a part; the solutions to such problems are sequences of
  actions.  
  Such plans can be expressed as interaction languages containing all event
  sequences in which the actions (e.g., a squeeze-grasp or a tray tilt at a
  particular orientation) in each sequence guarantee a known final orientation
  of the part regardless of its unknown initial orientation.
  In the event sequences of the interaction language, these actions are
  interleaved with with a special $\eta$ which constitutes the sole
  element in $Y$, acting as dummy `no observation' token.
\end{example}

\begin{example}[Counting amidst beams]\label{ex:notreg}
  As another example, consider a system in which an unknown number of agents
  moves through a known network of rooms.  Their movements are observed by
  discrete beam sensors that detect the passage of an agent from one room to
  another, but not the identity of that agent.  Actions allow barriers between the
  rooms to be opened or closed.  (Similar problems were addressed by
  \citet{EriYu+14} and \citet{GieBob+14}.)
  The evolution of this kind of system can be modeled as an interaction
  language whose event sequences are those that correspond to valid traces of
  this system.

  Figure~\ref{fig:rooms} shows an example, in which $a$ observations indicate
  an agent moving from $r_1$ to $r_2$, and $b$ observations indicate an agent
  moving from $r_2$ to $r_1$.  Some unknown number of agents begins in $r_1$,
  whereas $r_2$ is initially empty.
  Interestingly, even for this very simple case, the interaction language is
  not a regular language.
  To see this, note that the number of $a$ observations must be no less
  than the number of $b$ observations in any event sequence that
  occurs in this system---no agent can leave $r_2$ if that room is
  already empty---and no finite-state automaton can do this kind of `counting'
  for arbitrarily many agents.

  \begin{figure}[t]
    \centering
    \includegraphics[trim={0 0 2.2cm 0},clip,scale=1.33]{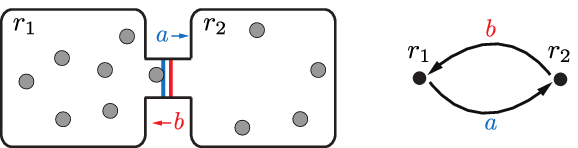}
    \caption{ A simple environment with two regions and beam sensors in the
    corridor connecting them; a grey body is moving from region $r_1$
    to region $r_2$ (drawn after~\citet{EriYu+14}, but simplified). 
    For an unknown number of grey agents, all of which start in $r_1$.
    The interaction language for this system is not a regular language.
    \label{fig:rooms}}
  \end{figure}
\end{example}

Next, we introduce some definitions for reasoning about relationships between
pairs of interaction languages, in terms of the event sequences that are shared
between them.  In Section~\ref{sec:plans}, we model both planning problems and
plans themselves via interaction languages.  The next definitions will be
helpful for formalizing the relationships between those two languages.

At the simplest level, we recall the distinction between
action-first and observation-first interaction languages.

\begin{definition}[akin]
  Two interaction languages $\lang{L_A}$ and $\lang{L_B}$, both over the same
  set of events, are \emph{akin} if they are both action-first languages, or
  they are both observation-first languages.
\end{definition}

We can also consider the set of event sequences shared between a pair of akin
interaction languages.

\begin{definition}[joint event sequence]\label{def:joint}
  Given two interaction languages $\lang{L_A}$ and $\lang{L_B}$ that are akin, an
  event sequence $s$ is a \emph{joint event sequence} if $s \in \lang{L_A}$ and
  $s \in \lang{L_B}$.
\end{definition}

As an aside, we note that the structure required of interaction languages is
preserved when we consider only the set of joint event sequences for a pair of
languages.

\begin{theorem}[Joint event sequences form an interaction language]
  For any two interaction languages $L_A$ and $L_B$ that are akin, the set $L_A \cap
  L_B$ of their joint event sequences is itself an interaction language. 
\end{theorem}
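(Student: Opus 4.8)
The plan is to verify directly that $L_A \cap L_B$ satisfies the two requirements in Definition~\ref{def:ilang}: membership in one of the two alternating-structure regular expressions, and closure under prefix. Both follow by elementary set-theoretic reasoning from the fact that $L_A$ and $L_B$ individually satisfy these properties, together with the hypothesis that they are akin.

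First I would handle the structural constraint. Since $L_A$ and $L_B$ are akin, they are either both action-first or both observation-first. In the action-first case, $L_A \subseteq (\set{U}\set{Y})^\ks(\set{U} + \{\emptyseq\})$ and likewise for $L_B$, so $L_A \cap L_B \subseteq L_A \subseteq (\set{U}\set{Y})^\ks(\set{U} + \{\emptyseq\})$; the observation-first case is symmetric. Thus $L_A \cap L_B$ satisfies clause~1 or clause~2 of Definition~\ref{def:ilang} accordingly. (This is the step where the akin hypothesis is essential: without it, one language could be a subset of the action-first expression and the other of the observation-first expression, and their intersection, while still a set of event sequences, would not obviously sit inside either regular expression in a single consistent way --- though in fact the intersection would then contain only $\emptyseq$ at most, so the statement would still hold vacuously; I would remark on this but rely on the clean akin argument.)

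Next I would verify prefix-closure. Let $s = e_1 \cdots e_m \in L_A \cap L_B$ and let $s' = e_1 \cdots e_k$ be a prefix with $k < m$. Since $s \in L_A$ and $L_A$ is prefix-closed, $s' \in L_A$; since $s \in L_B$ and $L_B$ is prefix-closed, $s' \in L_B$; hence $s' \in L_A \cap L_B$. So $L_A \cap L_B$ is closed under prefix. Combining this with the structural observation above, $L_A \cap L_B$ is an interaction language over $\set{E}$, which is exactly what we wanted.

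There is no real obstacle here; the proof is a short verification, and the only subtlety worth flagging is the role of the akin hypothesis in pinning down which of the two regular-expression forms the intersection inhabits. I would present the argument in two brief paragraphs mirroring the two defining conditions, noting explicitly at the start that being a set of event sequences over $\set{E}$ is immediate since $L_A \cap L_B \subseteq L_A$.
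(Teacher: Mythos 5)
Your proof is correct and is simply a fully worked-out version of the paper's one-line argument ("follows directly from the definitions"): containment in the appropriate regular expression via the akin hypothesis, plus preservation of prefix-closure under intersection. No substantive difference in approach.
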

\begin{proof}
  Follows directly from Definitions~\ref{def:ilang} and \ref{def:joint}.
\end{proof}

Of particular interest in the context of planning will be pairs of languages
for which there is some bound on the longest joint event sequence.  The next
definition makes that intuition more precise.

\begin{definition}[finite on]\label{def:finite}
  Given two akin event languages $\lang{L_A}$ and $\lang{L_B}$, if there exists
  an integer $k$ that bounds the length of every joint event sequence of
  $\lang{L_A}$ and $\lang{L_B}$, we say $\lang{L_A}$ is \emph{finite on}
  $\lang{L_B}$.
\end{definition}

Note that `finite on' is a symmetric relation, though the way it is written
does not immediately emphasize this fact. Some caution is likely warranted as the
definitions have made a subtle departure from standard language theory.
In particular, finite on does not require that the either of the interaction
languages be finite sets, but only that there exist some finite bound on the
lengths of their joint event sequences.  In fact, the set of joint sequences
may form an infinite set since $\set{U}$ or $\set{Y}$ need not be finite;
Definition~\ref{def:finite} requires only a bound on the length of the
sequences.

Finally, we consider a notion of `compatibility' between two interaction
language.

\begin{definition}[safety]\label{def:safe}
  Given two event languages $\lang{L_A}$ and $\lang{L_B}$, both akin to one
  another, $\lang{L_A}$ is \emph{safe on} $\lang{L_B}$ if, for every joint
  event sequence $s$, the following holds:
  \begin{enumerate}
    \item if $s$ is observation-terminal, then for every successor $s'$ of $s$,
        \[s'\in\lang{L_A} \implies s'\in\lang{L_B};\]
    \item if $s$ is action-terminal, then for every successor $s'$ of $s$,
        \[s'\in\lang{L_B} \implies s'\in\lang{L_A}.\]
  \end{enumerate}
\end{definition}

To understand the intuition, imagine a joint event sequence constructed one event at a
time, with actions selected via the successors in $L_A$ of the current event sequence and observations
selected via the successors of $L_B$.  When the next event should be an action,
Definition~\ref{def:safe} requires that $L_B$ must be `ready' (in the sense of
containing at least one suitable event sequence) for any action that might be
selected from the successor event sequences in $L_A$.  Likewise, when the next
event should be an observation, $L_A$ must be ready for any observation that
might be selected from the successor event sequences in $L_B$.

Though the symmetry in the definition is perhaps aesthetically pleasing,
one should not be misled: safety of $L_A$ on $L_B$ does not imply that $L_B$ is
safe on $L_A$.  Moreover, safety is not transitive.  (Note that appearing on
the left differs from appearing on the right, as the quantifiers shift.)
However, safety is indeed is reflexive ($L_A$ is always safe on~$L_A$).

\section{Procrustean graphs and set labels}\label{sec:pgraph}
\subsection{Procrustean graphs}

The languages and other definitions in the preceding section express the fact
that interactions may possess structure. Though formal, they are not a directly
useful construct for algorithmic manipulation nor for reasoning about causality
in the aspects involved. To automate (or help automate) design-time processes,
we introduce a new representation called a procrustean graph for a broad class
of interaction languages, based on graphs with transitions labeled by sets.

\begin{definition}[p-graph]\label{def:pgraph}
  A \emph{procrustean graph (p-graph)} over event space $\set{E}$
  is a finite edge-labeled bipartite directed
  graph in which
  \begin{enumerate}
    \item the finite vertex set, of which each member is called a \emph{state}, can be
    partitioned into two disjoint parts, called the \emph{action vertices}
    $\set{\Vu}$ and the \emph{observation vertices} $\set{\Vy}$, with $\set{V} = \set{\Vu} \cup \set{\Vy}$,

    \item each edge $e$ originating at an action vertex is labeled with a set
    of actions $\set{U(e)}\subseteq\set{U}\subset\set{E}$ and leads to an observation vertex,

    \item each edge $e$ originating at an observation vertex is labeled with a
    set of observations $\set{Y(e)}\subseteq\set{Y}\subset\set{E}$ and leads to an action vertex, and

    \item a non-empty set of states $\set{\Vinit}$ are designated as \emph{initial
    states}, which may be either exclusively action states ($\set{\Vinit} \subseteq
    \set{\Vu}$) or exclusively observation states ($\set{\Vinit} \subseteq \set{\Vy}$).
  \end{enumerate}

\end{definition}

\noindent A small example, intended to illustrate the basic intuition of the
definition, follows.

\begin{example}[wheels, walls, and wells]
  Figure~\ref{fig:pgraph-example} show a p-graph that
  models a Roomba-like robot that uses
  single-bit wall and cliff sensors to navigate through an environment.
  Action states are shown as unshaded squares; observation states are shaded
  circles.
  Action labels are subsets of $[0,500]\times[0,500]$, of which each element
  specifies velocities for the robot's left and right drive wheels, expressed
  in mm/s.  Observations are bit strings of length 2, in which the first bit is
  the output of the wall sensor, and the second bit is the output of the cliff
  sensor.
  \begin{figure}[t]
    \centering
    ~~\includegraphics[scale=0.4]{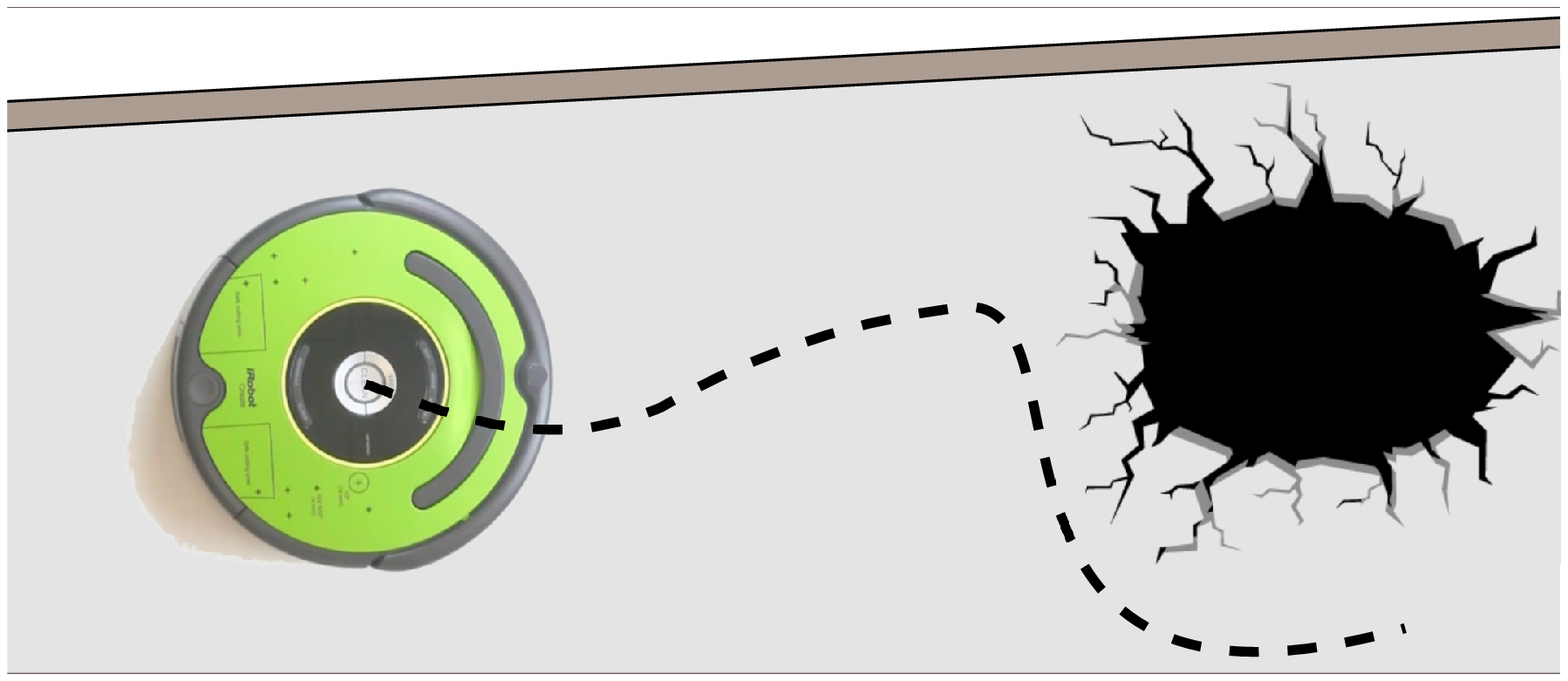}\hfill\phantom{}

    \vspace*{-110pt}\phantom{}\hfill\includegraphics[scale=0.52]{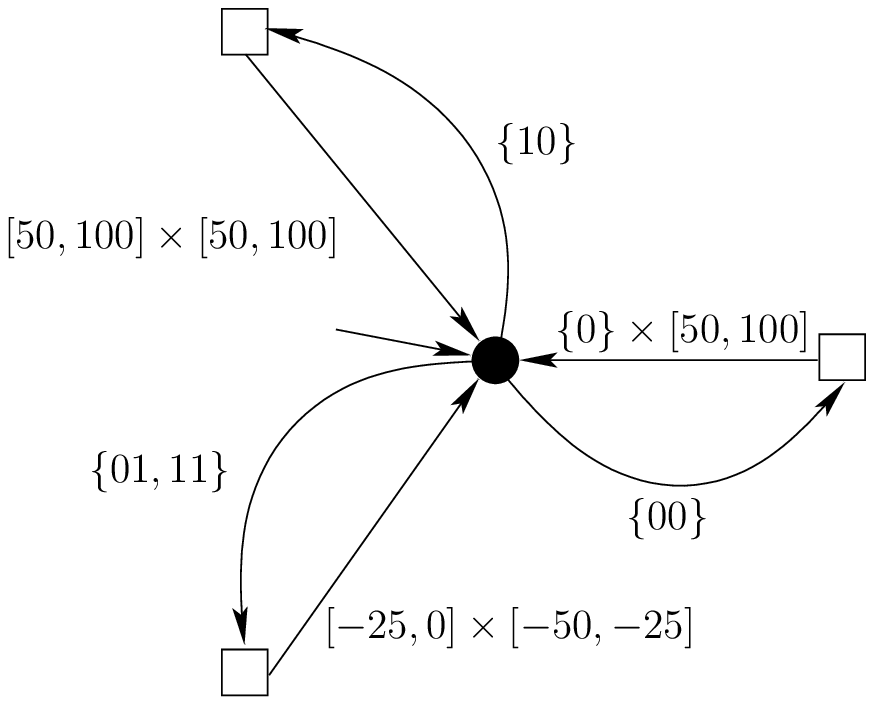}~~
    \caption{[left] A differential drive robot with sensors for obstacles, both
    positive (walls) and negative (holes). [right] An example p-graph that
    models behavior in which the robot follows a wall while avoiding negative
    obstacles. This graph, and those that follow, have solid circles to
    represent elements of $\Vu$, and empty squares for $\Vy$. The 
    arcs are labeled with sets; those that leave the central vertex have two
    digits, the first digit is `$1$' iff the wall is detected by the IR sensor
    on the left-hand side; the second digit is `$1$' iff the downward pointing IR
    sensor detects a cliff. The actions, on the edges leaving squares, represent
    sets of left and right wheel velocities, respectively.
    \label{fig:pgraph-example}}
  \end{figure}
\end{example}


P-graphs bear a close relationship
to interaction languages---they describe sets comprised of sequences of
actions and observations that alternate. The intention is for p-graphs to serve
as concrete data structures for representing interaction languages. This helps realize 
the paper's objective, which is to treat p-graphs, in a general sense, as
first-class objects, suitable for manipulation by automated means.  

Before formalizing the details of the connection between p-graphs and
interaction languages, we make a minor detour to show that p-graphs are
sufficiently rich to describe things that have been of broad interest to
roboticists for a long time.

\begin{example}[Combinatorial filters]
\label{ex:combfilter}
  Recall from Example~\ref{ex:filter} that filtering problems can be cast in
  terms of interaction languages in which the filter outputs are encoded as
  actions of the form ${\rm emit}_x$.
  A particular class of filters that is well suited to representation as p-graphs
  are the \emph{combinatorial filters}.  As formalized by
  \citet{lavalle12sensing}, combinatorial filters are discrete expressions of
  estimation problems. 
  More precisely, combinatorial filters are finite-state transition systems in
  which each state has a specific output associated with it.
  Such filters can be cast as p-graphs by having observations and observation
  transitions exactly as in the filter, but with action vertices having only a
  single out-edge that is labeled with a singleton set bearing the output
  (which, as in Example~\ref{ex:filter}, we label ${\rm emit}_{x}$ for output
  each $x$).
  Figure~\ref{fig:pgraph-filter} shows a canonical example in which the
  property of interest is whether two agents, in an annulus-shaped environment
  with three beam sensors, are apart or not.
  \begin{figure}[t]
    \centering
    \includegraphics[scale=0.8]{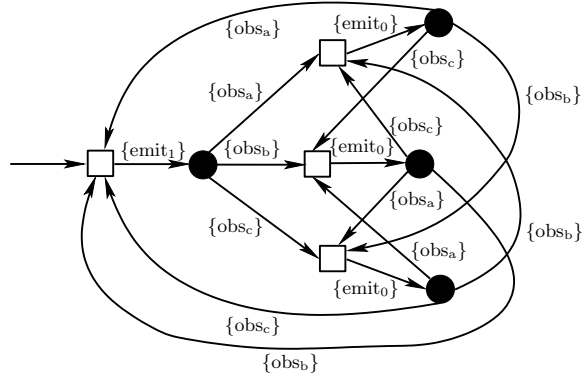}
    \caption{%
      The `agents together' filter devised by \citet{TovCoh+14} expressed as a
      p-graph.  The ${\rm emit}_0$ action indicates that the agents are
      separated by a beam, and ${\rm emit}_1$ indicates that the agents are
      together.
      \label{fig:pgraph-filter}
    }
  \end{figure}
\end{example}

\begin{example}[P-graphs for universal plans]
  The interaction languages for universal plans introduced in
  Example~\ref{ex:schoppers} can be cast as p-graphs in a straightforward way.
  The p-graph has a single observation vertex, with one uniquely-labeled
  out-edge corresponding to each world state, and one action state for each of
  the distinct available actions.  See Figure~\ref{fig:pgraph-universal}.
  \begin{figure}[t]
    \centering
    \includegraphics[scale=0.7]{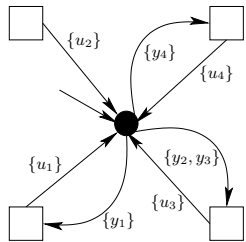}
    \caption{A universal plan expressed as a
    p-graph.\label{fig:pgraph-universal}}
  \end{figure}
\end{example}

\begin{example}[Erdmann-Mason-Goldberg-Taylor plans]
  Figure~\ref{fig:pgraph-emgt} shows an example of a sensorless manipulation
  plan, in the form described in Example~\ref{ex:emgt1}, expressed as a
  p-graph.
  Of particular note is the fact that this plan exhibits an unexpected
  dimension of nondeterminism: at each step it indicates sets of allowable
  actions, rather than a single predetermined one.  This degree of `choice' in
  the actions appears in the interaction language as a large collection of
  individual event sequences, but is expressed compactly within the p-graph.
  Also of note is that, generally, the graphs of knowledge states searched to
  produce such plans are themselves p-graphs.
  \begin{figure}[t]
    \centering
    \includegraphics[width=0.7\textwidth]{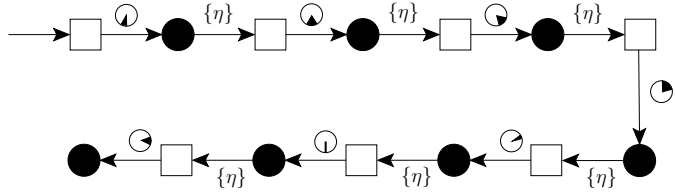}

    \caption{A plan for orienting an Allen wrench via tray tilting, expressed
    as a p-graph.  Action edges are labeled with sets of azimuth angles for
    the tray.  There is a single dummy observation, $\eta$.  This plan
    is shown as Figure~2 in~\citet{ErdMas88}.
    \label{fig:pgraph-emgt}}
  \end{figure}
\end{example}

\begin{example}[Nondeterministic graphs]\label{ex:erdmann}
  Recent work by Erdmann~\citeyearpar{erdmann10topology,erdmann12topology} encodes
  planning problems using finite sets of states, along with nondeterministic
  actions represented as collections of edges `tied' together into single
  actions.  One might convert such a graph to a p-graph by replacing each group
  of action edges with an observation node, with an outgoing observation edge
  for each edge constituting the original action.  Figure~\ref{fig:threefig}
  shows an example.
  
  \begin{figure}[t]
    \begin{minipage}[t]{0.3\textwidth}
      \centering       
      \includegraphics[scale=0.5]{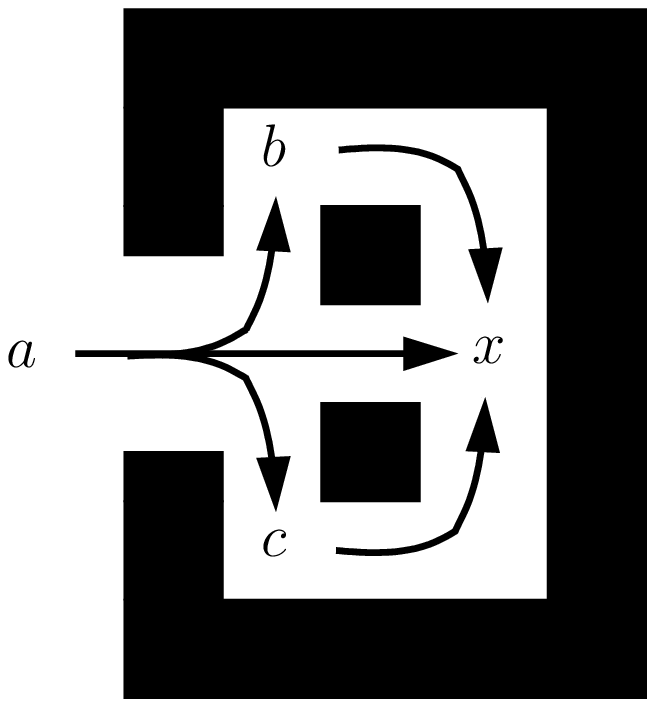}
    \end{minipage}  
    \begin{minipage}[t]{0.27\textwidth}
      \centering   
      \includegraphics[scale=0.68]{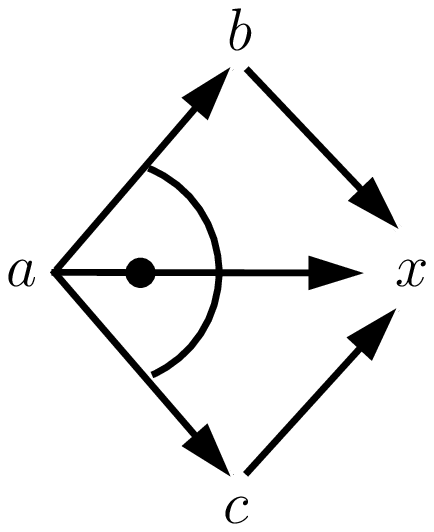} 
    \end{minipage}  
    \begin{minipage}[t]{0.3\textwidth}
      \centering   
      \includegraphics[scale=0.85]{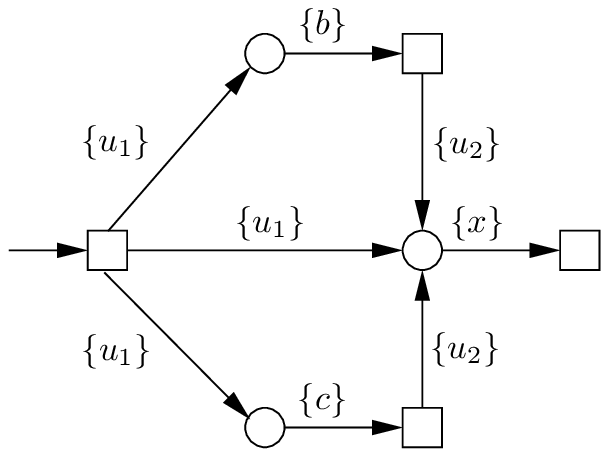} 
    \end{minipage}  
      \caption{[left] A planning problem due to Erdmann~\citeyearpar{erdmann12topology}. [middle] The nondeterministic graph of this problem. [right] An equivalent p-graph.
      \label{fig:threefig}}

  \end{figure} 
\end{example}


The intent in these examples is to illustrate that p-graphs form a general
class that unifies, in a relatively natural way, a number of different kinds of
objects that have been studied over a long period of time.  The particular
constraints applied in each case impose certain kinds of structure that proved
useful in the original context.  


While graph and graph-like objects appear in the prior work in various guises,
few have formalized the semantics of those objects by connecting them
to the languages they induce. The definitions we present next make precise the
way in which a p-graph is an implicit definition of an interaction language.

\begin{definition}[transitions to]\label{def:transto}
  For a given p-graph $G$ and two states $v,w \in V(G)$, a sequence of events $\eseq$ \emph{transitions in $G$ from $v$ to $w$} if there exists a sequence of
  states $v_1,\ldots,v_{k+1}$, such that
  \begin{enumerate}
    \item $v_1=v$,
    \item $v_{k+1} = w$, and
    \item for each $i=1,\ldots,k$, there exists an edge
    $\edge{v_k}{E_k}{v_{k+1}}$ for which $e_k \in E_k$.
  \end{enumerate}
\end{definition}

\noindent The states $v$ and $w$ need not be distinct; for every $v$, the empty
sequence transitions in $G$ from $v$ to $v$.  Longer cycles may result in
non-empty sequences of states that start at some $v$ and return.

\begin{definition}[valid]\label{def:valid}
  For a given p-graph $G$ and a state $v\in V(G)$, 
  a sequence of events $\eseq$ is \emph{valid} from $v$ if there
  exists some $w \in V(G)$ for which $\eseq$ transitions from $v$ to $w$.
\end{definition}

\noindent Observe that the empty sequence, $\emptyseq$, is valid from all states in any p-graph.

\begin{definition}[execution]
  An \emph{execution} on a p-graph $G$ is a sequence of events valid from some
  start state in $\Vinit(G)$.
\end{definition}

The preceding definitions prescribe when a sequence is valid on a p-graph,
placing few restrictions on the sets involved. There are several
instances of choices recognizable as forms of non-determinism: 
(i)\,there may be multiple elements in $\Vinit$; 
(ii)\,from any $v \in \Vu$ some action $u$ may be an element in sets on multiple outgoing action edges; 
(iii)\,similarly, from any $w \in \Vy$ some observation $y$ may qualify for multiple outgoing observation edges.

We can now `close the loop' between interaction languages and p-graphs.

\begin{definition}[induced language]
\label{def:inducedlang}
  Given a p-graph $G$, its \emph{induced language} is the set of all of its
  executions.  It is denoted $\langof{G}$.
\end{definition}

\begin{theorem}[induced languages are interaction languages]
  For any p-graph $G$, the induced language $\langof{G}$ is an interaction
  language.
\end{theorem}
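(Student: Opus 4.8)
The statement to establish is that for any p-graph $G$, its induced language $\langof{G}$ is an interaction language in the sense of Definition~\ref{def:ilang}. Unpacking that definition, we must verify three things: (a) $\langof{G}$ is a set of event sequences that respects the alternating structure, i.e.\ it is a subset of either $(\set{U}\set{Y})^\ks(\set{U}+\{\emptyseq\})$ or $(\set{Y}\set{U})^\ks(\set{Y}+\{\emptyseq\})$; (b) exactly one of these two cases holds for all of $\langof{G}$ simultaneously (so the language is unambiguously action-first or observation-first); and (c) $\langof{G}$ is closed under prefix. The plan is to extract these from the bipartite structure of the p-graph plus the constraint in Definition~\ref{def:pgraph}(4) that $\Vinit$ is entirely contained in $\Vu$ or entirely in $\Vy$.

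**Key steps.** First, I would set up a bookkeeping lemma about transitions: by induction on the length $k$ of an event sequence $\eseq$ that transitions in $G$ from $v$ to $w$, track the ``type'' (action vs.\ observation vertex) of each intermediate state $v_i$. Since every edge out of an action vertex carries an action label and lands at an observation vertex, and dually for observation vertices (Definition~\ref{def:pgraph}(2)--(3)), the vertex types strictly alternate along the state sequence $v_1,\dots,v_{k+1}$, and correspondingly the events $e_1,\dots,e_k$ alternate between $\set{U}$ and $\set{Y}$, with $e_1$'s type determined by the type of $v_1 = v$. Second, I would specialize to executions: an execution starts at some $v \in \Vinit(G)$. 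If $\Vinit(G) \subseteq \Vu$, then every nonempty execution begins with an action and alternates thereafter, so $\langof{G} \subseteq (\set{U}\set{Y})^\ks(\set{U}+\{\emptyseq\})$; the symmetric argument handles the observation case. Because Definition~\ref{def:pgraph}(4) forces $\Vinit(G)$ to lie wholly in one part, case (b) is immediate — there is no mixing. Third, for prefix-closure: if $\eseq$ is an execution witnessed by states $v_1,\dots,v_{k+1}$ with $v_1 \in \Vinit(G)$, then for any $j \le k$ the truncation $e_1\cdots e_j$ is witnessed by the prefix $v_1,\dots,v_{j+1}$ of that same state sequence, so it too is valid from $v_1 \in \Vinit(G)$ and hence is an execution. (The empty sequence is handled by the remark after Definition~\ref{def:transto} that $\emptyseq$ transitions from any state to itself, and $\Vinit(G)$ is nonempty by Definition~\ref{def:pgraph}(4), so $\emptyseq \in \langof{G}$.)

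**Main obstacle.** None of the individual steps is technically deep — the content is really just careful unwinding of definitions — so the ``hard'' part is mostly a matter of organizing the alternation bookkeeping cleanly, and being careful about edge cases: the empty sequence, executions of length one, and the fact that a witnessing state sequence for a given execution need not be unique (nondeterminism items (i)--(iii) noted after the execution definition). For prefix-closure in particular one should be slightly careful to say ``there \emph{exists} a witnessing state sequence, and any such one has the desired prefix property,'' rather than accidentally asserting uniqueness. I would therefore phrase the alternation claim existentially and carry the witnessing state sequence explicitly through the prefix argument. With that care the proof is short.
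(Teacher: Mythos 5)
Your proposal is correct and follows essentially the same route as the paper's proof: alternation from the bipartite structure, the action-first/observation-first dichotomy from the constraint that $\Vinit$ lies wholly in $\Vu$ or wholly in $\Vy$, and prefix-closure by truncating a witnessing state sequence. Your version is simply more careful about the bookkeeping (explicit induction for alternation, the empty sequence, and the non-uniqueness of witnessing state sequences), which the paper glosses over.
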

\begin{proof}
  Because $G$ is bipartite, with its states partitioned into action states and
  observation states, all of its executions alternate between actions and
  observations.  Moreover, if $\set{\Vinit} \subset \set{\Vu}$, then its
  non-empty executions begin with actions, matching the first regular
  expression in Definition~\ref{def:ilang}.  If $\set{\Vinit} \subset
  \set{\Vy}$, then the non-empty executions if $G$ begin with actions, matching
  the second regular expression in Definition~\ref{def:ilang}.

  It remains only to confirm that $\langof{G}$ is prefix-closed.  Consider some
  execution $s = e_1 e_2 \cdots e_m \in \lang{L}$, and some prefix of $s$,
  denoted $s' = e_1 \cdots e_k$ with $k < m$.  We need to show that $s' \in
  \langof{G}$.
  We know from Definition~\ref{def:valid} that $s$ transitions from some start
  state $v$ to some final state $w$.  Therefore, via
  Definition~\ref{def:transto}, we know that there exists sequence of states
  $v_1,\ldots,v_{m+1}$ in $G$, with $v_1=v$, reached by followed edges labeled
  with events in $s$.  But considering only $v_1, \ldots, v_{k+1}$, we see that
  $s'$ is valid from $v_1$ as well.  Therefore, $s' \in \langof{G}$.
\end{proof}


This theorem establishes a tight relationship between interaction languages and
p-graphs. Every p-graph induces an interaction language (though some
interaction languages cannot be expressed as p-graphs with finitely many states, cf.
Example~\ref{ex:notreg}).
Thus, we can meaningfully apply terms defined for interaction languages to
p-graphs as well:  Given two p-graphs $G_1$ and $G_2$, one might refer to the
set of joint executions (that is, joint event sequences) of $G_1$ and $G_2$.
We might say that $G_1$ and $G_2$ are akin to one another, or that $G_1$ is
finite on $G_2$, or that $G_1$ is safe on $G_2$.  These kinds of statements
should be read as referring to the interaction languages induced by the
p-graphs.

\begin{example}[Pentagonal world] 
  Figure~\ref{fig:pentaworld} presents concrete realizations of several of the
  preceding definitions in a single scenario.  A robot moves in a pentagonal
  environment. Information---at least at a certain level of
  abstraction---describing the structure of the environment, operation of the
  robot's sensors, its actuators, and their inter-relationships is represented
  in the p-graph associated with the scenario.
  The induced interaction language is
    $$ \langof{G} = \operatorname{Pref}((u_1y_1u_1y_1u_1y_1u_1y_2(u_2y_2)^\ks u_1y_1)^\ks), $$
  in which $\operatorname{Pref}(\cdot)$ denotes the prefix-closure of its language
  operand.
  Both filtering and planning questions can be posed as problems on this interaction
  language, as represented in this p-graph.
    \begin{figure}[t]
      \centering
      \includegraphics[scale=0.7]{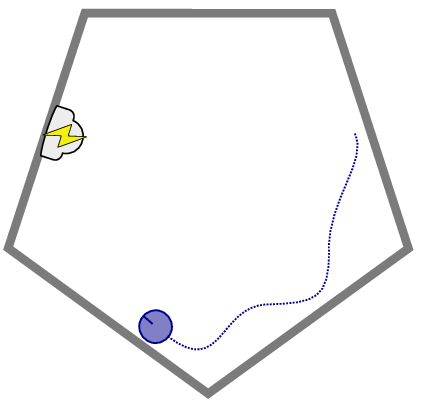}
      \includegraphics[width=0.38\textwidth]{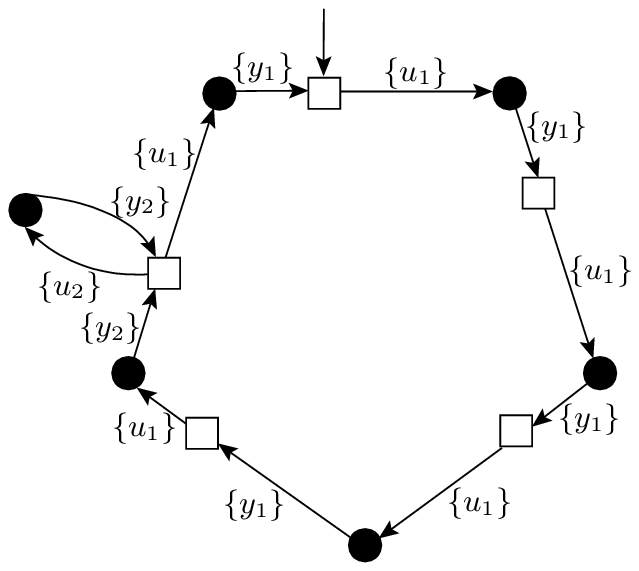}     
      \caption{[left] A robot wanders around a pentagonal environment; the segment with the lightning-bolt contains a battery charger.  [right] A p-graph model of this world.
      \label{fig:pentaworld}}
    \end{figure}
\end{example}

\subsection{Labels}

To keep the model amenable to direct algorithmic manipulation, we have required
that a p-graph consist of only finitely many states.  However, the labels for
each edge, either $\set{U}(e)$ or $\set{Y}(e)$, are sets that need not be finite.  This
detail is important for modeling real systems. For example, robots typically
have observation spaces which are large or infinite---including most nontrivial
real sensor systems---in which it would be, at best, computationally
intractable to list observations individually. The same can be said
for actions too.

We can permit labels that describe infinite sets if some simple operations on
the set algebra over $\set{U}$ and $\set{Y}$ are available.  
Any observation-originating edge $e$ is labeled with the set $\set{Y}(e)$,
such that $\set{Y}(e) \in \pow{\set{Y}}$. The analogous relation holds for 
action-originating edges too.
In what follows, we assume that both
$\pow{\set{U}}$ and $\pow{\set{Y}}$ 
are equipped with the following six operations:\footnotemark

\footnotetext{To save presenting distracting technicalities,
we use $\pow{\set{U}}$ for the set algebra over $\set{U}$, though the need for
finite constructions usually means that the algebra is a proper subset of the powerset.}

\begin{enumerate}
  \item[1--3.] \textsc{Union}, which accepts two labels and computes a new label 
  representing their union, along with \textsc{Intersection} and
  \textsc{Difference}, which operate {\it mutatis mutandis} for the
  set intersection and set difference operations.

  \item[4.] \textsc{Empty}, which accepts a label and returns \textsc{True} if
  and only if the label represents the empty set.

  \item[5.] \textsc{Contains}, which accepts a label and an event,
and decides whether that event is member of the set represented by
  that label.

  \item[6.] \textsc{Representative}, which accepts a non-empty label and returns an
  event contained in the set represented by that label.
\end{enumerate}
Any data structure capable of answering these queries is suitable for
representing the labels in the algorithms in this paper.  
Some examples follow.

\begin{example}\label{ex:interval}
  Suppose $\set{U} = \real$ or $\set{Y} = \real$.  Since each label
  should represent a set of real numbers, one option is to let each represent a
  finite union of real intervals.  The intervals may be bounded or unbounded.
  Each interval may also be open, closed, or half-closed.
  Figure~\ref{fig:intervals} shows an example.
  To represent a label from this label space, we use a data structure with three parts:
  \begin{enumerate}
    \item A list of $n \in \natzero$ real number \emph{endpoints} $e_1,\dots,e_n \in \real$.

    \item A list of $n+1$ boolean \emph{interval flags} $f_1,\dots,f_{n+1}$.
    The interpretation is that, for each $1<j<n$, the real numbers between $e_j$
    and $e_{j+1}$ are included in the set if and only if $f_j$ is \textsc{True}.
    At the extremes, real numbers less than $e_1$ are in the set when $f_1$ is
    \textsc{True}, and likewise numbers greater than $e_n$ are in the set when
    $f_n$ is \textsc{True}.

    \item A list of $n$ boolean \emph{endpoint flags} $p_1,\dots,p_n$, with the
    semantics that, for any $1 \le j \le n$, the real number $e_j$ is in the
    label's observation set if and only if $p_j$ is \textsc{True}.
  \end{enumerate}
  Note that any finite union of real intervals (including, for example, the
  empty set and the full real line, which have \mbox{$n=0$}) can be expressed in this
  format.

  \begin{figure}
    \begin{center}
      \includegraphics[width=\columnwidth]{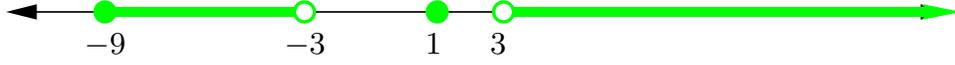}
    \end{center}
    \newcommand{\T}{\textsc{True}}
    \newcommand{\F}{\textsc{False}}
    \caption{An interval label for the set
      $[-9,-3) \cup \{ 1 \} \cup (3,\infty)$.
    The label data structure has 4 endpoints
      $(-9, -3, 1, 3)$,
    5 interval flags
      $(\F, \T, \F, \F, \T)$,
    and 4 endpoint flags
      $(\T, \F, \T, \F)$.
    }
    \label{fig:intervals}
  \end{figure}

  The \textsc{Union}, \textsc{Intersection}, and \textsc{Difference} operations
  can be implemented by performing a left-to-right sweep, adding endpoints and
  flags appropriately to the resulting label.  The \textsc{Empty} method requires a
  simple check for any endpoint flags or interval flags that are \textsc{True}.
  The \textsc{Contains} check can be implemented by a binary search for the
  correct interval, followed by a check against the relevant flag.
  \textsc{Representative} should return an element, either an endpoint or in the
  interior of an interval (in the general case, perhaps the midpoint between two
  endpoints) for which the corresponding flag is \textsc{True}.
\end{example}

\begin{example}
  Labels that represent a finite number of events---as is the case for many
  simple sensors such beam detectors or bump sensors, or simple actuators with
  a discrete modes of operation---can be modeled by storing the elements explicitly
  in almost any container data structure, such
  as a balanced binary tree or a hash table.
\end{example}

\begin{example}
\label{ex:productlabels}
  We expect that a common case will involve action or observation sets that are
  composed, via Cartesian product, from simpler sets.  That is, we may generally have
    $ \set{X} = \set{L}_1 \times \dots \times \set{L}_m, $
  in which each $\set{L}_i$ is a set for which we have the requisite operations,
  and $\set{X}$ is $\set{U}$ or $\set{Y}$.
  In such a case, we can define a set algebra over $\set{X}$ in which each
  label represents a union of Cartesian products of sub-labels, in the form
    $ \bigcup_i \left( \ell_1^{(i)} \times \dots \times \ell_m^{(i)} \right)$,
    $\ell_k^{(i)} \subseteq \set{L}_k$, where $i \in \{1,\dots,n\}$. 
  Under this representation, a \textsc{Union} between labels becomes a mere
  concatenation of Cartesian product lists.  The \textsc{Intersection}
  operation requires pairwise intersections between each of the constituent
  Cartesian products of each of the two labels:
    \begin{multline*}
      \left[ \bigcup_{\!\!\phantom{j}i} \left( \ell_1^{(i)} \times \dots \times \ell_m^{(i)} \right) \right]
        \medcap
      \left[ \bigcup_j \left( p_1^{(j)} \times \dots \times p_m^{(j)} \right) \right]
      \\ = \bigcup_i \bigcup_j \left( (\ell_1^{(i)} \cap p_1^{(j)}) \times \dots \times (\ell_n^{(i)} \cap p_m^{(j)}) \right).
    \end{multline*}
  The \textsc{Difference} operation is similar, but first requires a refinement
   of the labels (see Section~\ref{sec:refine} below) along each dimension.
\end{example}

Example~\ref{ex:productlabels} also illustrates that
while it is natural to think of labels as the descriptions of sets borne on
edges, such as either $\set{U}(e)$ or $\set{Y}(e)$ for some $e$, it is also
meaningful to think of sets which are basic constituents from which to make up
such labels. For this reason, in what follows we use the general notation of
$\ell_i$, which can describe either sets of actions or observations.

\subsubsection{Label refinement}\label{sec:refine}

Several of the algorithms in subsequent sections rely
on a subroutine to compute of a \emph{refinement} of a set of labels.
Specifically, in several places we need an algorithm that
 accepts as input an unordered set of labels $\ell_1,\dots,\ell_n$, and
 produces as output an unordered set of labels $\ell'_1, \dots,
  \ell'_m$, such that
    $ \bigcup_i \ell_i = \bigcup_j \ell'_j $
  and, for each $\ell' \in \{\ell'_1, \dots, \ell'_m \}$ and each $x_1, x_2
  \in \ell'$, we have
    \begin{equation*}
      \big\{ \ell \in \{ \ell_1, \dots, \ell_n \} \mid x_1 \in \ell \big\}
       = \big\{ \ell \in \{ \ell_1, \dots, \ell_n \} \mid x_2 \in \ell \big\}.
    \end{equation*}
The intuition is, given a set of labels, to compute a partition of the
events spanned by those labels.  This partition should be fine enough to
separate the input labels from one another, in the sense that the set of
corresponding input labels is constant across all events in each output
label.  Such a partition is valuable because it enables us to `drop down' from
the level of sets to the level of individual events, by selecting a
\textsc{Representative} from each of the output labels, without danger of
missing any structure inherent to the input label set.

\begin{algorithm}[t]
  \caption{\textsc{RefineLabels}($\ell_1,\dots,\ell_n$)}
  \label{alg:refine}
  \DontPrintSemicolon
  \SetKwInOut{Input}{Input}
  \SetKwInOut{Output}{Output}
  \Input{A list of labels, $\ell_1, \ell_2,\dots, \ell_n$.}
  \Output{A list of refined labels $\ell'_1, \ell'_2,\dots,\ell'_m$.}
  \BlankLine
  \tcp{Construct the union of all the input labels}
  {$r \leftarrow \ell_1$}\; 
  \For{$\ell \in \{ \ell_2,\dots,\ell_n\}$}{
    {$r \leftarrow \textsc{Union}(r, \{ \ell \} )$}\; 
  }
  $R \leftarrow \{ r \}$ \;
  \tcp{Refine each at at time}
   \For{$\ell \in \{ \ell_1,\dots,\ell_n\}$}{
    $R' \leftarrow \emptyset$\;
    \For{$r \in R$}{
      $R'.\textrm{append}($\textsc{Intersection}$(r, \ell))$;~\tcp{The part inside $r$\dots}
      $R'.\textrm{append}($\textsc{Difference}$(r, \ell))$;~\tcp{\dots and then the rest}
    }
    $R \leftarrow R'$\;
  }
  \Return{$R$}
\end{algorithm}

Algorithm~\ref{alg:refine} shows how one can perform this operation in a
general way, for any labels that support the \textsc{Union},
\textsc{Intersection}, and \textsc{Difference} operations.  The algorithm
starts with a single label representing the complete set of relevant
events, and then refines that partition using each of the input labels.

\begin{figure}[h]
\vspace*{1cm}
    \centering
  \includegraphics[scale=0.8]{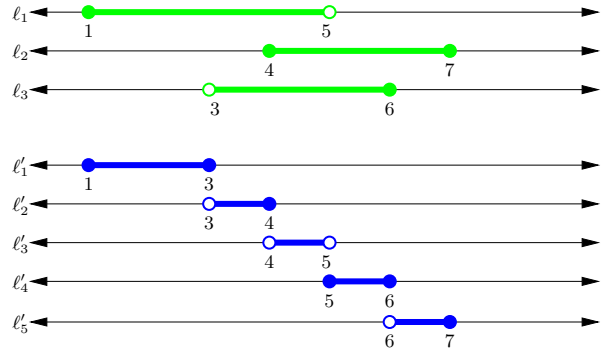}
  \caption{An example of label refinement with interval labels.  The three
  input labels, representing the overlapping intervals $[1,5)$, $[4,7]$, and
  $(3,6]$, are refined into six disjoint output labels
  $[1,3]$, $(3,4]$, $(4,5)$, $[5,6]$, and $(6,7)$.}
  \label{fig:interval_refine}
\end{figure}

We note, however, that for certain kinds of labels, such as the interval labels
introduced in Example~\ref{ex:interval}, it may be practical and efficient to
implement this operation directly, utilizing the internal details of the label
data structure, rather than this generalized approach.  For the interval label
space introduced in Example~\ref{ex:interval}, the label refinement operation
can be implemented directly.  Figure~\ref{fig:interval_refine} shows an example
of the computation.  The approach is to form a combined, sorted list of all of
the endpoints for each of the input labels, and then form the refined output
labels using a left-to-right sweep, starting a new label each time the set
input labels touched by the sweep line changes.

\subsection{Basic operations on p-graphs}

Next, we examine operations on p-graphs. Of particular
interest is the question of how these operations affect the induced interaction
language: some will mutate the language; others will preserve it.

We give an example of a constructive operation which produces a new p-graph with a
new interaction language, exploiting initial state nondeterminism.

\begin{definition}[union of p-graphs]\label{defn:pairproduct}
  The \emph{union} of two p-graphs $U$ and $W$, each akin to
  the other, denoted by $U \uplus W$, is the p-graph constructed by
  including both sets of vertices, both sets of edges, and 
    with initial states equal to $V_0(U) \cup V_0(W)$.
  \label{def:pairproduct}
\end{definition}

The intuition is to form a graph that allows, via the nondeterministic
selection of the start state, executions that belong to either $U$ or $W$.

\begin{theorem}
For p-graphs $P$ and $Q$: $\langof{P} \cup \langof{Q} = \langof{P\uplus Q}$
\end{theorem}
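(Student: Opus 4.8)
The plan is to prove the set equality $\langof{P} \cup \langof{Q} = \langof{P \uplus Q}$ by double inclusion, working directly from the definitions of execution (Definition~\ref{def:inducedlang}), valid sequence (Definition~\ref{def:valid}), and ``transitions to'' (Definition~\ref{def:transto}), together with the construction of $P \uplus Q$ in Definition~\ref{defn:pairproduct}. The central observation, which I would state up front, is that since $P$ and $Q$ are akin, the union $P \uplus Q$ is a well-defined p-graph: its vertex set $V(P) \cup V(Q)$ is still bipartite into action and observation vertices, its initial set $V_0(P) \cup V_0(Q)$ is nonempty and still lies entirely within $\Vu$ or entirely within $\Vy$ (this is exactly where akinness is needed), and no new edges are introduced. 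Consequently every edge of $P \uplus Q$ is an edge of exactly one of $P$ or $Q$.

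For the inclusion $\langof{P} \cup \langof{Q} \subseteq \langof{P \uplus Q}$, I would take an execution $s = \eseq \in \langof{P}$ (the case for $Q$ being symmetric). By Definition~\ref{def:valid} there is a witnessing state sequence $v_1, \dots, v_{k+1}$ in $P$ with $v_1 \in V_0(P)$ and each consecutive pair joined by an edge of $P$ whose label contains the corresponding event. Since $V(P) \subseteq V(P \uplus Q)$, every edge of $P$ is an edge of $P \uplus Q$, and $v_1 \in V_0(P) \subseteq V_0(P \uplus Q)$, the very same state sequence witnesses that $s$ transitions from $v_1$ to $v_{k+1}$ in $P \uplus Q$ starting from an initial state; hence $s \in \langof{P \uplus Q}$. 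The empty sequence case is immediate since $\emptyseq$ is valid from every state.

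For the reverse inclusion $\langof{P \uplus Q} \subseteq \langof{P} \cup \langof{Q}$, I would take an execution $s = \eseq$ of $P \uplus Q$ with witnessing state sequence $v_1, \dots, v_{k+1}$, where $v_1 \in V_0(P \uplus Q) = V_0(P) \cup V_0(Q)$. Without loss of generality suppose $v_1 \in V_0(P)$. The key step is to argue that the entire state sequence stays within $V(P)$: since $P$ and $Q$ share no vertices (or, if one does not assume disjointness, one argues instead that the reachable component is determined by the starting graph) and every edge of $P \uplus Q$ belongs to exactly one of $P$, $Q$, an induction on $i$ shows that if $v_i \in V(P)$ then the edge from $v_i$ to $v_{i+1}$ must be an edge of $P$, so $v_{i+1} \in V(P)$ as well. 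Therefore the witnessing sequence lies entirely in $P$ and starts at an initial state of $P$, so $s \in \langof{P}$; symmetrically, $s \in \langof{Q}$ when $v_1 \in V_0(Q)$. Combining both inclusions gives the claimed equality.

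I expect the main obstacle to be the reverse inclusion, specifically the argument that an execution of $P \uplus Q$ cannot ``cross over'' between the two constituent graphs mid-run. If Definition~\ref{defn:pairproduct} is read as forming a disjoint union of the vertex sets, this is clean: no edge has an endpoint in both $V(P)$ and $V(Q)$, so the induction goes through immediately. If instead the two graphs are allowed to share vertices, then the statement as written is false in general (an execution could legitimately use edges from both), so I would either add the disjointness hypothesis explicitly or interpret the union as a disjoint union; I would flag this point and then proceed under the disjoint-union reading, which is the standard one and is what the phrase ``including both sets of vertices, both sets of edges'' most naturally denotes. Everything else is routine unwinding of definitions.
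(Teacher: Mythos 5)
Your proof is correct and is simply the fully spelled-out version of what the paper dispatches with ``follows directly from Definitions~\ref{def:inducedlang} and \ref{def:pairproduct}'': a double inclusion by reusing the witnessing state sequences, which is the only argument available. Your flag about needing disjoint (or disjointly-interpreted) vertex sets for the reverse inclusion is a genuine subtlety the paper silently elides, and your resolution --- reading the construction as a disjoint union --- is the intended one.
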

\begin{proof}
  Follows directly from Definitions~\ref{def:inducedlang} and \ref{def:pairproduct}.
\end{proof}

In general, the sets labeling two edges departing a vertex of a p-graph need
not be disjoint, allowing multiple `next' states to be indicated for the same
event.  It can be useful to distinguish p-graphs where this circumstance arises
from those where it is absent.  Our next definition formalizes this, while also
highlighting that multiple p-graphs can induce the same interaction language. 

\begin{definition}[state-determined]
  \label{defn:sd}
  A p-graph $P$ is in a {\em state-determined presentation} if $\card{\Vinit(P)}=1$ and 
    from every action vertex $u\in \Vu$, the edges $e_u^1, e_u^2, \dots, e_u^n$ originating at~$u$
          bear disjoint labels: $U(e_u^i)\cap U(e_u^j)=\emptyset, i\neq j$, and
    from every observation vertex $y\in \Vy$, the edges $e_y^1, e_y^2, \dots, e_y^m$ originating at~$y$
          bear disjoint labels: $Y(e_y^i)\cap Y(e_y^j)=\emptyset, i\neq j$.
\end{definition}

The intuition is that in a p-graph in a state-determined presentation it is
easy to determine whether an event sequence is an execution: one starts at the
unique initial state and always has an unambiguous edge to follow.  We note,
however, that the p-graph with a state-determined presentation for some set of
executions need not be unique.

\begin{algorithm}
  \caption{\textsc{ToStateDeterminedPresentation}(${G}$)}
  \label{alg:expansion}
  \DontPrintSemicolon
  \SetAlgoLined \SetKwInOut{Input}{Input} \SetKwInOut{Output}{Output}
  \Input{A p-graph $G$ with vertex set $\set{V}$ and starting set $\set{V_0}$.}
  \Output{An equivalent state-determined p-graph $G'$ with $\set{W}$ and $\set{W_0}$, respectively.}
  \BlankLine
  {Initialize  $\set{W}$, $\set{W_0}$, as empty}\;
  {$\set{\textrm{Corresp}}[\cdot] = \emptyset$} ~\tcp{Construct an empty map to associate vertices between p-graphs}
  {Add $v_0'$ to $\set{W}$ and $\set{W_0}$}~~\tcc{Construct an initial the vertex in $G$, preserving action-\newline \phantom{}\quad\quad\qquad\qquad~originating or observation-originating type of elements in~$\set{V_0}$} 
  {$\set{\textrm{Corresp}}[v_0'] \leftarrow \set{V_0}$} ~\tcp  {Associate $v_0'$ with all $v_0\in\set{V_0}$ in $G$}
  {Initialize queue ${Q} \leftarrow \set{W_0}$}\;
  \While{${Q}$ {\upshape not empty}}{
    {$s' \leftarrow Q.\textrm{pop}$}\;

  \BlankLine
  \tcp{Refine each label and determine which states each refinement maps to:}
        {$\set{L} \leftarrow$ All outgoing edge labels of $\set{\textrm{Corresp}}[s']$}\;
        {$\set{L'} \leftarrow \textsc{RefineLabels}(\set{L})$ ~\tcp{cf. Algorithm~\ref{alg:refine}}}
        {$\set{\textrm{Lab}}[\cdot] = \emptyset$} ~\tcp{Construct an empty map to associate refined labels to states}
        \For{$\ell' \in \set{L'}$}{
            {Determine the set of states reached by tracing event $\textsc{Representative}(\ell')$ from each
                $\set{\textrm{Corresp}}[s']$, adding them to $\set{\textrm{Lab}}[~\ell'~]$}\;
        }
  \BlankLine
  \tcp{Produce new states as needed:}
        \For{\upshape $s \in \set{\textrm{Lab}}[\ell]$ {\upshape for some} $\ell$}{
            \If{$t \in \set{W}$, {\upshape where} $t$ {\upshape corresponds with} $s$}{
                {Add $\edge{s'}{\ell}{t}$} ~~\tcp{Add transition on $\ell$ to $G'$}
            }
            \Else{
                {Create new state $t$ corresponding to $s$}~~\tcp{Type should correspond too}
                {$Q.\textrm{push}(t)$} ~~\,~\tcp{Add to queue to be processed}
                {Add $\edge{s'}{\ell}{t}$} ~\tcp{Add transition on $\ell$ to $G'$}
            }
        }
  }
  {\textbf{return} $G'$}\;

\end{algorithm}


\begin{figure}[t]
  \centering
  {~~\includegraphics[scale=0.43]{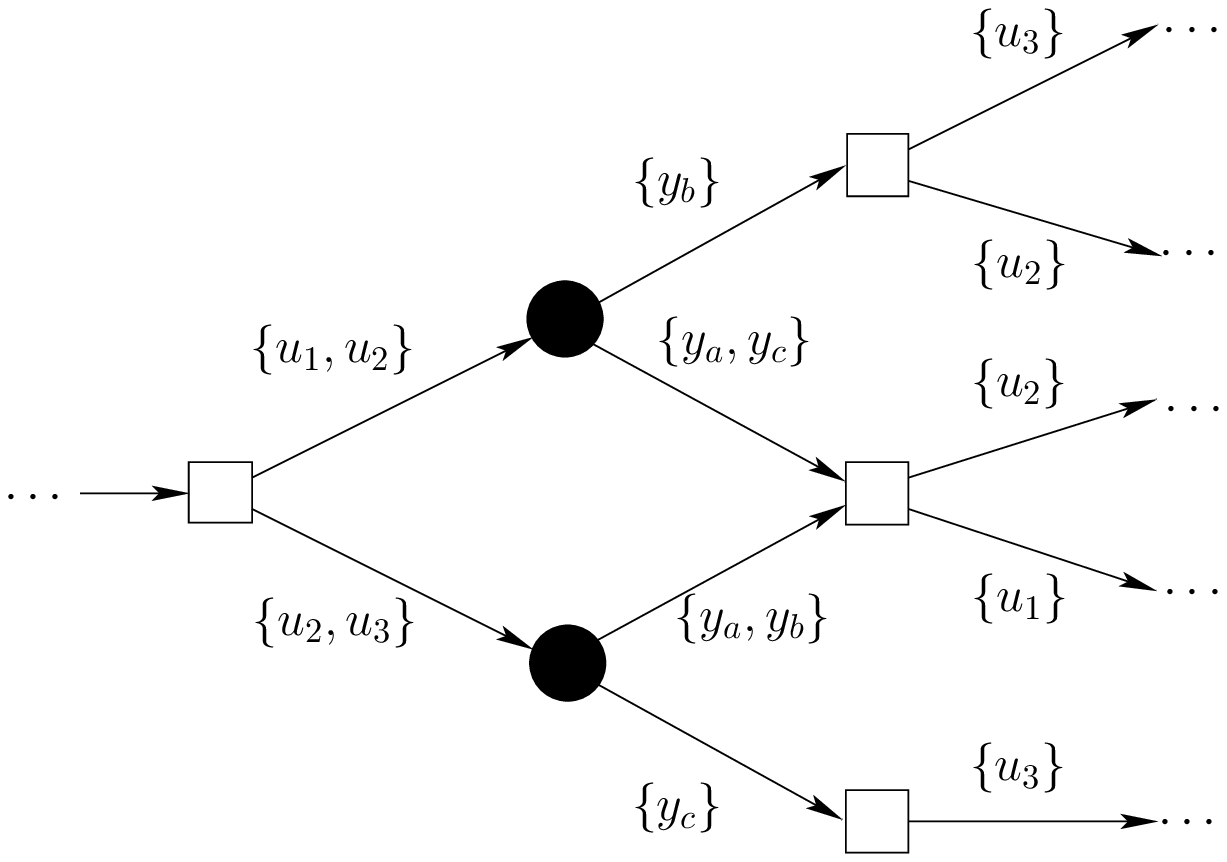}\hfill\phantom{}}\\
  \vspace*{-139pt}
  {\phantom{}\hfill\includegraphics[scale=0.43]{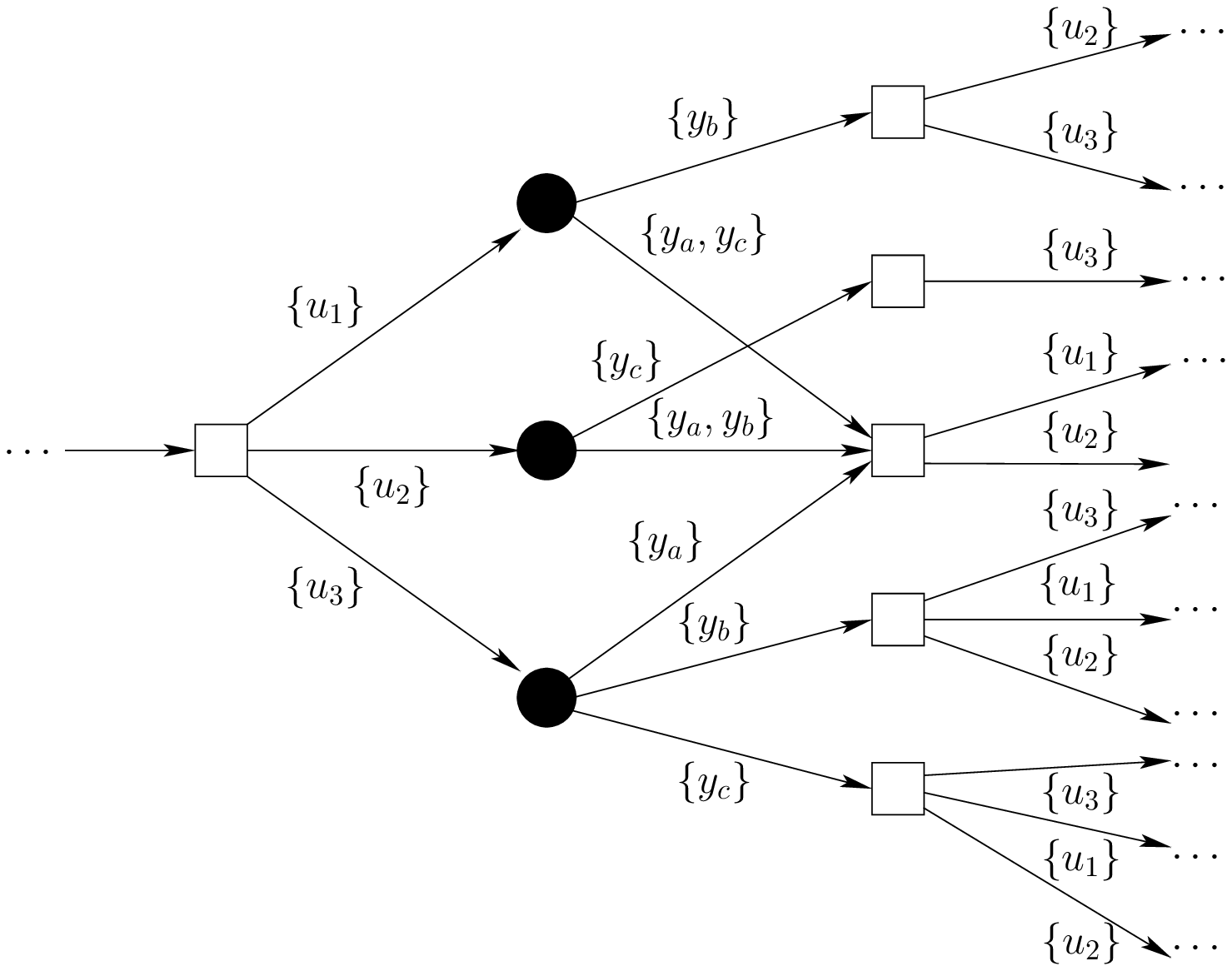}~\quad}
\caption{A fragment of a p-graph on the left is processed into the state-determined presentation on the right with the concomitant increase in number of states.
\label{fig:st}}
\end{figure}

Given any p-graph it is possible to construct a new p-graph that has the same
set of executions on it, but which is in a state-determined presentation.
Algorithm~\ref{alg:expansion} shows how to convert an arbitrary p-graph into
state-determined presentation.  The basic idea is a forward search that
performs a powerset construction on the input p-graph. We begin by constructing
a single state to represent the ``superposition'' of all initial states, and
push that onto a empty queue. While the queue has elements, remove a vertex and
examine the edges leaving the set of vertices associated with it in the
original input p-graph.  The labels on those edges are refined by
constructing a partition of the set spanned by the union of the labels in a way
that the subsequent sets of states in the input p-graph is clear. Edges are
formed with the refined sets connecting to their target vertices, constructing
new ones as necessary, and placing these in the queue.  This requires the use
of Algorithm~\ref{alg:refine} to ensure that the edges in the new filter are
drawn correctly.  Figure~\ref{fig:st} gives a simple example of the process for
part of a p-graph.  Though this shows a moderate increase in size, in general,
following the procedure above may produce a p-graph as output that has an
exponentially larger set of states than the input. 

%

\section{Label maps}\label{sec:maps}

We express modification of capabilities through maps that mutate the labels
attached to the edges of a p-graph.

\begin{definition}[action, observation, and label maps]\label{def:set-map} An
\emph{action map} is a function $h_u:\set{U} \to \pow{\set{U'}} \setminus
\{\emptyset\}$ mapping from an action space $\set{U}$ to a non-empty set of actions
in a different action space $\set{U}'$.  Likewise, an \emph{observation map} is a
function $h_y:\set{Y} \to \pow{\set{Y}'} \setminus \{\emptyset\}$ mapping from
an observation space $\set{Y}$ to a non-empty set of observations in a different
observation space $\set{Y}'$.  A \emph{label map} combines an action map $h_u$ and a
sensor map $h_y$:
\begin{equation*}
    h(a) = \begin{cases}
      h_u(a) & \text{if } a \in \set{U} \\
      h_y(a) & \text{if } a \in \set{Y}
    \end{cases}.
  \end{equation*}
\end{definition}


It is useful to extend this notion, so we do this immediately.

\begin{definition}[label maps on sets and p-graphs]
  Given a label map $h$, its \emph{extension to sets} is
  a function that applies the map to a set of labels:
    $$h(E) = \bigcup_{e \in \set{E}} h(e).$$
  The \emph{extension to p-graphs} is a function that mutates p-graphs by
  replacing each edge label $\set{E}$ with $h(\set{E})$. We will write $h(P)$
  for application of $h$ to p-graph $P$.
\end{definition}

\begin{example}[label maps on intervals]
  Representation of the action or observation spaces that are
  $\real$ via unions of intervals, 
  as detailed in Example~\ref{ex:interval}, 
  lends itself to definition of label maps.
  To represent a label map on such an event space, we might, for example,
  take bounding polynomials $p_1(x)$ and $p_2(x)$, and define
    $$ h(x) = \{ x' \mid p_1(x) \le x' \le p_2(x) \}. $$
  Given a finite-union-of-intervals label $\ell \subset \mathbb{R}$, we can
  evaluate this kind of $h$ by decomposing $h$ into monotone sections,
  selecting the minimal and maximal values of $p_1$ and $p_2$ within that
  range, and computing the union of the results across all of the monotone
  sections.
  Figure~\ref{fig:map_interval} shows an example.
  \begin{figure}[t]
    \centering
      \includegraphics[scale=1.0]{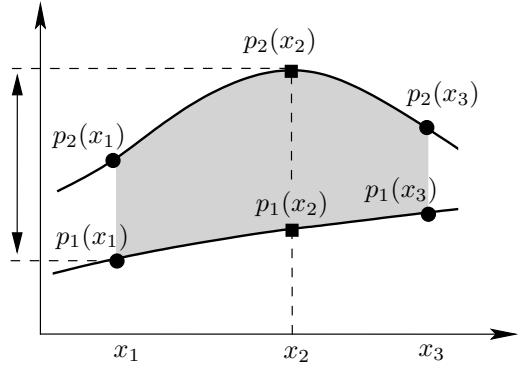}  
      \caption{A label map from $\mathbb{R}$ to $2^\mathbb{R}$ may be described by functions $p_1$ and $p_2$ as lower and
      upper bounds, respectively.  The marked vertical interval, spanning
      $p_1(x_1)$ to $p_2(x_2)$ illustrates the image of $h$ across the monotone
      segment from $x_1$ to $x_2$.  Values for other monotone segments would be
      computed similarly.
      \label{fig:map_interval}}
  \end{figure}
\end{example}


Label maps allow one to express weakening of capabilities as follows.  If
multiple elements in the domain of $h(\cdot)$ map to sets that are not
disjoint, this expresses a conflation of two elements that formerly were
distinct.
\begin{enumerate}
\item When they are observations, this directly models a sensor undergoing a
  reduction in fidelity since the sensor loses the ability to distinguish
  elements.
  \item When they are actions, this models circumstances where uncertainty
  increases because a single action can now potentially produce multiple
  outcomes, and the precise outcome is unknown until after its execution.
\end{enumerate}

Further, when the image of element $\set{E}$ is a set with multiple constituents,
this also expresses the fact that planning becomes more challenging.
\begin{enumerate}
\item[3.] For observations, it means that several observations may result from
  the same state and, as observations are non-deterministic, this increases the
  onus for joint-executions to maintain safety (for example, plans must account
  for more choices).

\item[4.] For actions, while there is a seemingly larger choice of actions, this
  increase does not represent an increase in control authority because several
  actions behave identically.  
\end{enumerate}
  
  In both action and observation instances, the
  map may become detrimental when the outputs of $h(\set{E})$ intersect for multiple
  $\set{E}$s and thus `bleed' into each other.  Broadly, one would expect that this
  is more likely when the output sets from $h(\cdot)$ are larger.

The next two sections address questions of how to reason about this sort of
destructiveness for filtering and planning problems, respectively.

\section{Destructiveness in filters}\label{sec:filters}
The management of uncertainty via integration of sensor readings has been
a central theme in robotics research for decades. It is, thus, worth
examining how p-graphs might be specialized to express structures suitable
for such operations.  Earlier,
Example~\ref{ex:filter} (along with Example~\ref{ex:combfilter} presented
thereafter) showed how both interaction languages, generally, and p-graphs, in
particular, can describe estimation processes in the form of filters.  The word
filter is most familiar as a term used to describe practical
estimation components of robots and their controller software. The filters
treated in this section subsume those, representing a larger class, providing a
broad, abstract
theoretical treatment of algorithmic processes that aggregate information.  

A p-graph over event space $\set{E}$ is useful as a filter if the elements of
$\set{U}$, the actions, are interpreted as merely publishing or emitting
information.  The idea is that filters influence an agent's representation of
state, rather than altering the underlying state of the world itself.  This act
of interpretation gives p-graphs for filtering special significance to the
agent that beholds them. (The idea that some \emph{interpretation} must be
provided to apply a p-graph to some circumstance is an important recurring
theme in this work.) In this section, we will use the word output, rather than
action, to emphasize the interpretation of a p-graph as a filter; occasionally
we also just use the word filter to refer to such a p-graph.

When a p-graph that is being used as a filter has some edge $e$ bearing a
non-singleton set $\set{U(e)}$, or when such a graph has an action edge with
multiple out-edges, the resulting language includes choices for the information
to be emitted.  That choice is made arbitrarily, so whoever the consumer of the
outputs might be (perhaps it is a controller or a planner), it must be able to
operate with any in the set.  Of course, if the p-graph is to be a faithful
estimator this will constrain the p-graph's  $\set{U(e)}$ sets. One expects
that a p-graph acting as a filter would produce information that is sound given
the stream of inputs seen; such a filter can produce multiple outputs so long
as the information from the filter need not be `tight.'  In the argot used to
describe probabilistic filters, this corresponds, roughly, to fact that many
possible filters may satisfy an unbiasedness criterion, though relatively few
that satisfy only that constraint are actually good estimators.

The previous discussion notwithstanding, it is far more usual to imagine a
unique output being produced in response to a particular history of events. We
find it useful to be precise about the p-graphs which, for any action vertex,
do not have any non-determinism on the output produced:

\begin{definition}\label{def:single-output}
A p-graph $G$ is {\em single-outputting} if, for all $v \in \set{\Vu}$ reached by an execution, there is at most one edge $e$ originating at $v$, and it bears a set $\set{U(e)}$ with $\card{\set{U(e)}}=1$.
\end{definition}

Though much previous discussion has emphasized the ability to use labels that
describe infinite sets,  the following establishes that this is not needed for
dealing with the outputs of filters if they are single-outputting.  There is
still, however, significant value in use of infinite sets of observations in
these cases.

\begin{theorem}[finiteness of single outputting p-graphs]
  For any single-outputting p-graph $G$, there is a single-outputting
  p-graph $G'$ that is equivalent in the sense that $\langof{G} = \langof{G'}$, but where 
  $G'$ is defined over an event space with finite $\set{U}$.
\end{theorem}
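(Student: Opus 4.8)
The plan is to exploit the fact that single-outputtingness, together with the finiteness of the vertex (hence edge) set, leaves only finitely many distinct actions ``in play,'' so that the action space can simply be restricted to those. The one subtlety is that action vertices \emph{not} reached by any execution are unconstrained by Definition~\ref{def:single-output} and may carry arbitrary, possibly infinite, labels; these must be excised first, and I would do so by a reachability pruning argument.

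\emph{Step 1 (prune).} Call a state $w$ \emph{reachable} if some execution transitions from an initial state to $w$. The reachable states are closed under following edges: if $w$ is reachable via execution $\eseq$ and there is an edge $\edge{w}{E}{w'}$ with $e \in E$, then $\eseq e$ witnesses reachability of $w'$. Consequently every edge incident to an unreachable state has an unreachable source, so deleting all unreachable states together with their incident edges removes no edge that participates in any execution. Let $G_1$ be the result. Every initial state is reachable (via $\emptyseq$), so $\Vinit$ is untouched and $G_1$ is still a p-graph; moreover the reachable states of $G_1$ and the edges among them coincide with those of $G$, so $\langof{G_1} = \langof{G}$ and $G_1$ is again single-outputting --- now with \emph{every} action vertex reachable.

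\emph{Step 2 (restrict the action space).} In $G_1$ every action vertex has at most one outgoing edge, bearing a singleton label. Let $\set{U}'$ be the set of all actions $u$ such that $\{u\}$ labels some edge leaving an action vertex of $G_1$; this is finite because $G_1$ has finitely many edges, and if it happens to be empty we replace it by $\{u_0\}$ for an arbitrary $u_0 \in \set{U}$. Then $\set{U}'$ is a finite subset of $\set{U}$, so $\set{U}' \cap \set{Y} = \emptyset$, and every action-edge label of $G_1$ is already a subset of $\set{U}'$ while every observation-edge label is a subset of $\set{Y}$. Declaring $G' \defn G_1$ but over the event space $\set{E}' \defn \set{U}' \cup \set{Y}$ therefore yields a legitimate p-graph over an event space with finite action component. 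Since validity and execution depend only on the graph and its labels, not on the ambient action space, $\langof{G'} = \langof{G_1} = \langof{G}$, and the single-outputting property transfers verbatim.

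I expect no substantive difficulty; the work is essentially bookkeeping. The main thing to get right is the pruning argument --- that removing unreachable states changes neither the induced language nor the single-outputting property, and that unreachable action vertices are the only possible source of ``bad'' (infinite or non-singleton) action labels --- together with the degenerate cases, such as reachable action vertices with no outgoing edge or an empty $\set{U}'$, which are handled trivially as above.
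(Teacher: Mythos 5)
Your proposal is correct and takes essentially the same route as the paper: the paper's proof also copies $G$ and restricts $\set{U}$ to the union of the $\set{U(e)}$ over action-vertex edges occurring in executions, observing that finitely many edges each contribute at most one action. Your explicit Step 1 (pruning unreachable vertices) makes rigorous a detail the paper leaves implicit --- namely that unreachable action vertices could otherwise carry labels not contained in the restricted action space --- so your write-up is, if anything, slightly more careful than the paper's.
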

\begin{proof}
When $G$ does not have a finite $\set{U}$, 
one constructs $G'$ by copying $G$ and simply restricting the set of actions
for $G'$ to be the union of the $\set{U(e)}$ for edges originating at action
vertices in the executions. This $\set{U}$ is finite for there are finitely many edges
and each $\set{U(e)}$ contributes no more than one element.
\end{proof}

\newcommand{\mysetedge}[3]{\edge{#1}{\scalebox{0.5}{${#2}$}}{#3}}

\begin{algorithm}
  \caption{\textsc{ToSingleOutputtingPresentation}(${F}$)}
  \label{alg:singleton}
  \DontPrintSemicolon
  \SetAlgoLined
  \SetKwInOut{Input}{Input}
  \SetKwInOut{Output}{Output}
  \Input{A p-graph $F$ over an event space~$\set{E}$ with finite $\set{U}$.}
  \Output{An equivalent filter $F'$ that is single-outputting.}
  \BlankLine
  {Copy $\set{\Vy}$ to $F'$}\; 
  \For{\text{\bf every pair} $v_o,v_{o'}\in\set{\Vy}$, \text{\rm where} $\mysetedge{v_o}{Y(v_0,v_a)}{v_{a}}$$\mysetedge{\!}{U(v_a,v_{o'})}{v_{o'}}$}{
    \For{$i \in \set{U}(v_a,v_{o'})$}{
              Add action vertex $v_i$ to $\set{\Vu}$ for $F'$\;  
              Add edge $\mysetedge{v_o}{Y(v_0,v_a)}{v_i}$ to $F'$\;
              Add edge $\mysetedge{v_i}{\{i\}}{v_{o'}}$ to $F'$\;
    }
  }
  \If{$F$ \text{\rm is an action-first p-graph}}{
        \For{\text{\bf every } $v_1\in\set{\Vinit}$, \text{\rm where} $\mysetedge{v_1}{U(v_1,v_{o})}{v_{o}}$}{

            \For{$i \in \set{U}(v_1,v_{o})$}{
              Add action vertex $v_i$ to $\set{\Vinit}$ and $\set{\Vu}$ for $F'$\;  
              Add edge $\mysetedge{v_i}{\{i\}}{v_{o}}$ to $F'$\;
            }
        }
  }
  \Else {
        Copy $\set{\Vinit}$ from $F$ to $F'$.
  }

  {\textbf{return} $F'$}\;
\end{algorithm}

Moreover, there is a sort of converse that is true too. If the set $\set{U}$ is
finite, then having at most one singleton output set at each vertex, while a seemingly
significant constraint, does not limit the expressivity of such filters.  Every
finite action-space p-graph has an equivalent single-outputting presentation.

One can convert an arbitrary finite action-space p-graph to an equivalent
single-outputting p-graph by making duplicates of each action vertex, one for
each output in the original, and making the sole transitions from those new vertices
carry a single output. Algorithm~\ref{alg:singleton} gives the procedure in
detail.

There is a pattern worth noting here.  Definition~\ref{defn:sd} details a
particular structure that certain p-graphs possess, and
Algorithm~\ref{alg:expansion} then shows how any p-graph can transformed into a
p-graph with that structure. For finite output sets,
Definition~\ref{def:single-output} gives a particular structure that certain
p-graphs possess, and Algorithm~\ref{alg:singleton} shows how one can be
transformed into a form with that structure. Both operations transform p-graphs
whilst preserving the interaction languages they induce, which is why we call
them presentations.  There is more: Definitions~\ref{defn:sd}
and~\ref{def:single-output} describe two distinct ways of presenting a p-graph,
each of which places some restrictions on the kind of nondeterminism directly
expressed in the p-graph, but they are, in a certain sense, duals of one
another.  Algorithm~\ref{alg:singleton} may, in splitting edges with
non-singleton labeled action sets, introduce some observation labels which
overlap or multiple initial states.
Algorithm~\ref{alg:expansion} may, in eliminating overlapping labels on edges
incident the same observation vertex (and also, in eliminating multiple initial
states), produce a result which has action-states that have multiple edges
departing it. 

We claim that a certain class of p-graphs, however, can be represented in a way
that is simultaneously single-outputting and state-determined.\footnotemark
\footnotetext{Proof of the claim that the class of p-graph defined in Definition~\ref{def:deterministic} actually is this class
appears in Theorem~\ref{thm:so_is_enough_for_det_if_sd}
below; first we elaborate on the version of determinism we define and, in
particular, its relation to the classical notion.} We call these filters
\emph{deterministic}.

\begin{figure}[t]
  \centering
  \includegraphics[scale=0.48]{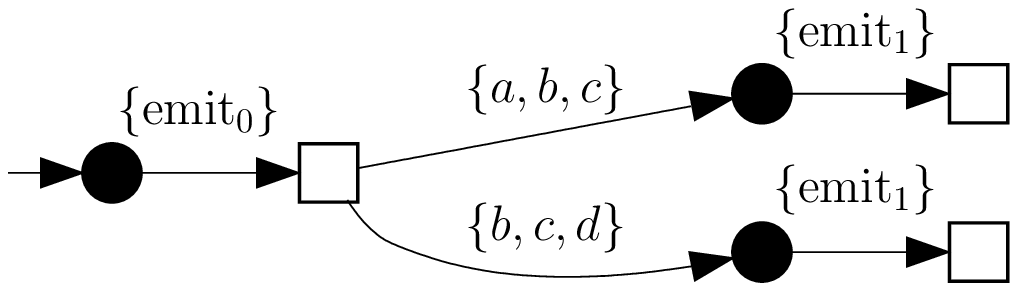}
  \quad
  \includegraphics[scale=0.48]{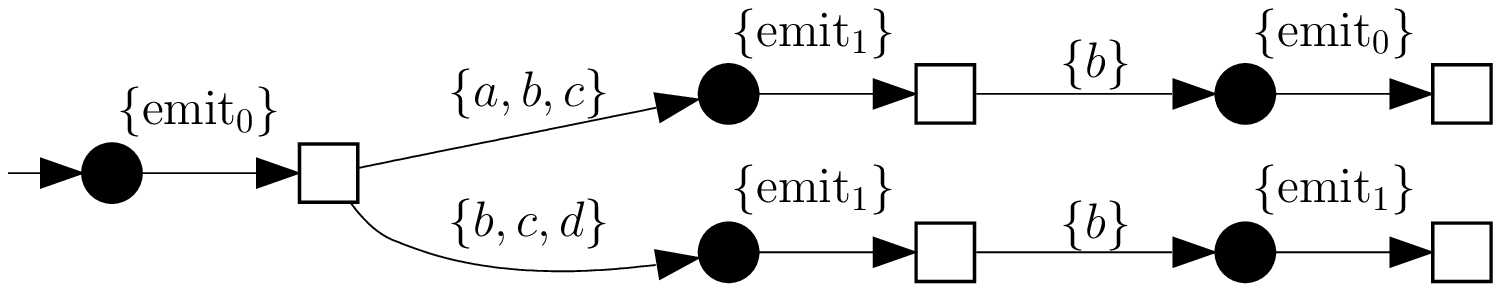}
  \caption{[left] A p-graph representation of a filter that, despite some
  labels not being disjoint, is a deterministic filter.  [right] A
  p-graph that is not deterministic.  To see this, note that the event sequence 
    `${\rm emit}_{0}$ $b$ ${\rm emit}_{0}$ $b$' has 
  two distinct successors: `${\rm emit}_{0}$' and `${\rm emit}_{1}$'.}
  \label{fig:deterministic}
\end{figure}

\begin{definition}\label{def:deterministic}
  \label{def:det-filt}
  A p-graph $F$ is a {\em deterministic filter} if every
  observation-terminal sequence has at most one
  successor in $\langof{F}$.
%
%
%
\end{definition}

In the sense we have defined here, the property of being a deterministic filter
is a property of the p-graph's function, rather than a property of its
representation; it is thus fitting that this notion can be expressed in
terms of  the induced interaction language.  Note that the determinism does not
mean that each observation-terminal sequence arrives at a unique state, but
only that each observation sequence, if it yields any output, yields a single,
determined output. (Figure~\ref{fig:deterministic} helps clarify this
distinction by illustrating the difference.) This second idea is closely tied
to the usual sense of the word deterministic in classic automata theory:
the notion that sequences of observations (note, solely, observations) drive
the transitions of the automata, including dictating the precise state reached.
This form is also important because the filters we described in
Examples~\ref{ex:filter} and~\ref{ex:combfilter} are typically deterministic in
this more traditional sense---no arbitrary choices need be made during their
execution, observation inputs command the behavior.
We define this class as follows:

\begin{definition}\label{def:practicable}
  \label{def:stand-filt}
  A single-outputting p-graph $F$ is a {\em practicable filter} if 
  the validity of every sequence $\eseq \in\langof{F}$ is
  the consequence of precisely one sequence of state transitions in $F$.
\end{definition}

They are named practicable because such filters are directly amenable to
implementation. This, no doubt, goes some way to explaining why the filters
that have appeared in the robotics literature are of this form.  

Naturally, there is a connection between these practicable filters and the
preceding notion of determinism, captured by these two lemmas:

\begin{lemma}
 Every practicable filter is a deterministic filter.
  \label{lem:pract-are-det}
\end{lemma}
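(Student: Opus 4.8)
The plan is to unwind the two definitions and show directly that the uniqueness-of-state-transitions condition in Definition~\ref{def:practicable} forces the at-most-one-successor condition of Definition~\ref{def:deterministic}. Let $F$ be a practicable filter, so in particular $F$ is single-outputting, and suppose for contradiction that some observation-terminal sequence $s = \eseq$ has two distinct successors $s_1 = s\,u_1$ and $s_2 = s\,u_2$ in $\langof{F}$ with $u_1 \neq u_2$ (both necessarily actions/outputs, since $s$ is observation-terminal and executions alternate). First I would note that $s$ itself is valid from some initial state, and because $F$ is practicable its validity is witnessed by exactly one sequence of state transitions; call the state reached $v$, which must lie in $\set{\Vu}$ since $s$ is observation-terminal. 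Now $s_1 \in \langof{F}$ means there is an edge out of $v$ whose label contains $u_1$, and $s_2 \in \langof{F}$ means there is an edge out of $v$ whose label contains $u_2$ — but $F$ is single-outputting, so $v$ has at most one outgoing edge and it bears a singleton label. A singleton set cannot contain both $u_1$ and $u_2$, contradiction.

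The one subtlety I would be careful about is the following: single-outputting (Definition~\ref{def:single-output}) controls only the outgoing structure at an action vertex reached by an execution, and practicability (Definition~\ref{def:practicable}) controls the uniqueness of the transition sequence for a valid string. The argument above uses the latter to pin down a \emph{single} action vertex $v$ at which to apply the former. Without practicability one could imagine $s$ reaching two different action vertices $v, v'$ (via two different transition sequences), with the $u_1$-successor coming from $v$ and the $u_2$-successor from $v'$, so that no single vertex violates single-outputting; practicability is precisely what rules this out. Hence the main (really the only) obstacle is to make sure this reduction step is stated cleanly — that ``exactly one sequence of state transitions realizes $s$'' is invoked \emph{before} appealing to single-outputting.

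Finally I would remark that once $v$ is fixed, determinism is immediate even in the degenerate case that $v$ has no outgoing edge at all: then $s$ has no successor in $\langof{F}$, which trivially satisfies ``at most one successor.'' So the case analysis is: either $v$ has no out-edge (zero successors) or $v$ has one out-edge with a singleton label $\{u\}$ (exactly the single successor $s\,u$), and in both cases the deterministic-filter condition holds. This completes the proof; no calculation is needed, only a careful chaining of Definitions~\ref{def:valid}, \ref{def:single-output}, \ref{def:deterministic}, and \ref{def:practicable}.
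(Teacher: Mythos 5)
Your argument is correct and is essentially the paper's own proof: practicability pins down a unique transition sequence (hence a unique action vertex reached by the observation-terminal sequence), and single-outputting then bounds the successors at that vertex to at most one. The contradiction framing and the explicit degenerate case of a vertex with no out-edge are just presentational variations on the same two-step chain.
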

\begin{proof}
Every observation-terminal sequence in the induced language traces a single
trajectory through the filter's states and so arrives at precisely one state. It is single-outputting 
so there is at most one edge from that vertex and, hence, at most one successor.
\end{proof}

\begin{lemma}
  For every deterministic filter $F$
  there exists a practicable filter $F'$ with $\langof{F}=\langof{F'}$.
  \label{lem:det-filts-are-pract}
\end{lemma}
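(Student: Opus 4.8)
The plan is to construct, from a given deterministic filter $F$, a practicable filter $F'$ by two transformations, applied in sequence. First I would apply Algorithm~\ref{alg:expansion} (the powerset construction of Section~\ref{sec:pgraph}) to obtain a state-determined presentation, and then Algorithm~\ref{alg:singleton} to obtain a single-outputting presentation. Both algorithms preserve the induced language, so the resulting $F'$ satisfies $\langof{F}=\langof{F'}$; the work is to show that $F'$ is \emph{practicable}, i.e. that every valid execution of $F'$ corresponds to exactly one sequence of state transitions.

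The first step is to verify that a state-determined \emph{and} single-outputting p-graph is automatically practicable. In a state-determined presentation there is a unique initial state and, from every vertex, the outgoing edges carry disjoint labels; hence tracing any event sequence is deterministic — at each event there is at most one edge that can be followed, so the sequence of states visited is uniquely determined by the sequence of events. That immediately gives ``precisely one sequence of state transitions'' for every valid execution, which is the definition of practicable. So the core of the proof reduces to checking that applying Algorithm~\ref{alg:singleton} \emph{after} Algorithm~\ref{alg:expansion} does not destroy state-determinedness.

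Here lies the main obstacle, and it is exactly the subtlety flagged in the prose just before the lemma: Algorithm~\ref{alg:singleton}, in splitting a non-singleton action label into one duplicated action vertex per output, may ``introduce some observation labels which overlap or multiple initial states'' — that is, it can re-introduce the kind of nondeterminism that Algorithm~\ref{alg:expansion} just eliminated. The key observation that rescues the argument is that $F$ is a \emph{deterministic filter}: every observation-terminal sequence has at most one successor in $\langof{F}$. After running Algorithm~\ref{alg:expansion}, each observation-terminal execution reaches a unique action vertex $v$; determinism of $F$ then forces $v$ to have \emph{at most one outgoing edge}, and that edge must bear a \emph{singleton} action set (if it bore a set of size $\ge 2$, or if there were two edges, the observation-terminal sequence would have two distinct successors in the language, contradicting Definition~\ref{def:deterministic}). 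So on the reachable part of the state-determined presentation, action vertices are \emph{already} single-outputting, and Algorithm~\ref{alg:singleton} makes no splits there — hence introduces no overlapping observation labels and no spurious initial states on the reachable subgraph. (One may safely prune unreachable vertices first, or observe that only reachable vertices matter for the induced language and for the trajectory count of valid executions.)

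Assembling the pieces: let $G$ be the state-determined presentation of $F$ produced by Algorithm~\ref{alg:expansion}, restricted to its reachable vertices; by the determinism of $F$ every reachable action vertex of $G$ has at most one outgoing edge and its label is a singleton, so $G$ is already single-outputting, hence Algorithm~\ref{alg:singleton} returns $G$ unchanged (up to renaming) and $F'\defeq G$ is both state-determined and single-outputting. By the first step, $F'$ is practicable, and by the language-preservation of Algorithm~\ref{alg:expansion} we have $\langof{F'}=\langof{F}$, which completes the proof. The one routine check I would still spell out is that ``reachable in $G$'' captures exactly the vertices that can appear in an execution, so that discarding the rest changes neither $\langof{G}$ nor the one-trajectory property.
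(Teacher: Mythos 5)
Your proposal is correct and takes essentially the same route as the paper: the paper's proof is precisely the powerset (state-determined) construction on $F$, followed by the observation that determinism forces each merged action vertex to bear a single singleton output label, so the result is single-outputting and traces each execution along a unique state sequence. Your framing via Algorithm~\ref{alg:expansion} followed by a vacuous application of Algorithm~\ref{alg:singleton} is just a repackaging of that same argument, with the same key use of Definition~\ref{def:deterministic} at the reachable action vertices.
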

\begin{proof}
  One constructs $F'$ by taking sets of vertices in $F$ as 
  vertices for $F'$; the start state of $F'$ is $\set{\Vinit}$.
  One adds edges in $F'$ by exploring each of the (finite) sequence
  prefixes that arrive at vertices in $F$, and labeling the transitions that
  are made. Tracing a prefix string on $F$ may result in a branch: two edges
  leaving an observation vertex may have labels which overlap (though this
  cannot happen with action vertices). At such choice points
  both choices should be taken, which is why the states in $F'$ 
  are subsets of vertices of $F$. 
  An edge in $F'$ originating from action vertex labeled, say, $\{v_i, v_j\}$
  (being associated with both action vertex $v_i$ and action vertex $v_j$ in
  $F$), only ever bears one label because the edge originating from 
$v_i$  and from $v_j$  in $F$ must produce the same output, otherwise  
otherwise $F$ would not be a
deterministic filter.
\end{proof}

Having established that deterministic filters may be of practical importance
because they can, ultimately, be turned into practicable filters, next we 
establish a relationship between the set of deterministic filters and the
presentations (state-determined and single-outputting) introduced earlier.

%
%
%
%
%
%

\begin{theorem}
  \label{thm:so_is_enough_for_det_if_sd}
  Any state-determined p-graph is deterministic if and only if it is
  single-outputting.
\end{theorem}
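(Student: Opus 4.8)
The plan is to prove the two implications separately, both resting on one preliminary observation about state-determined presentations. Because $\card{\Vinit}=1$ and, at every vertex, the outgoing edges bear pairwise disjoint labels, tracing a given event sequence forward from the unique initial state is forced: at each step at most one outgoing edge can carry the current event. Hence every execution of a state-determined p-graph reaches a \emph{unique} vertex; and since any edge entering an action vertex is an observation edge (Definition~\ref{def:pgraph}), an observation-terminal execution reaches a unique \emph{action} vertex $v\in\Vu$. I would prove this small lemma first, by induction on sequence length; the rest of the argument turns on it.

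For the direction ``single-outputting $\Rightarrow$ deterministic'', let $G$ be state-determined and single-outputting, and let $s$ be an observation-terminal execution. By the preliminary lemma, $s$ reaches a unique action vertex $v$, and $v$ is reached by an execution (namely $s$), so single-outputting applies at $v$: there is at most one edge out of $v$, and if present it bears a singleton label $\{u\}$. Since $s$ is observation-terminal and $\langof{G}$ is an interaction language, any successor of $s$ in $\langof{G}$ must append an action taken from an edge out of $v$, so there is at most one successor, namely $su$. Thus $G$ is a deterministic filter. (This is essentially the argument of Lemma~\ref{lem:pract-are-det}, with state-determinedness replacing the ``exactly one sequence of state transitions'' hypothesis used there.)

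For the converse I would argue the contrapositive: if $G$ is state-determined but \emph{not} single-outputting, then $G$ is not a deterministic filter. The failure of single-outputting hands us an action vertex $v$, reached by some execution $s$, that either has two distinct outgoing edges or has a single outgoing edge whose label contains at least two events. (We may assume every edge carries a non-empty label, since an empty-labeled edge never affects $\langof{G}$ and can be deleted at the outset.) In the first case state-determinedness forces the two labels to be disjoint, so we may pick distinct $u_1,u_2$, one from each; in the second case we pick distinct $u_1,u_2$ from the single label. Either way $su_1$ and $su_2$ are both valid from the initial state (follow $s$ to $v$, then the relevant edge), hence executions, and they are distinct successors of $s$. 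Since $s$ enters the action vertex $v$ along its last edge, that edge is an observation edge, so --- provided $s$ is non-empty --- $s$ is observation-terminal with two successors, contradicting determinism.

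The step needing the most care is the preliminary lemma: checking by induction that a state-determined p-graph traverses deterministically, and lining up the parities (action-first versus observation-first) so the ``unique action vertex'' conclusion is correct. The one genuinely delicate corner is the degenerate situation where the \emph{only} failure of single-outputting sits at an initial action vertex of an action-first p-graph, so the witnessing execution $s$ is $\emptyseq$, which is not observation-terminal and hence not caught by Definition~\ref{def:deterministic} as literally stated. I would dispose of this either via the non-empty-label convention together with the natural reading that a deterministic filter must also emit a unique initial output, or by observing that it cannot arise for the observation-first filters that chiefly motivate the definition.
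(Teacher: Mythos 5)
Your proof is correct and follows essentially the same route as the paper's: the deterministic-implies-single-outputting direction is the same contrapositive argument (state-determinedness forces disjoint labels, so either multiple departing edges or a multi-element label yields two distinct successors of the witnessing sequence), and your unique-trace lemma plays exactly the role that the paper's appeal to the no-branching behavior of the construction from Lemma~\ref{lem:det-filts-are-pract} plays in the other direction. The empty-sequence corner case you flag (a failure of single-outputting at an initial action vertex of an action-first p-graph, which Definition~\ref{def:deterministic} does not literally constrain because $\emptyseq$ is not observation-terminal) is a genuine subtlety that the paper's own proof silently glosses over, so your explicit handling of it is a small improvement rather than a defect.
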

\begin{proof}
  Forward direction:
If $F$ is a deterministic state-determined filter which is not
single-outputting, there must be some sequence arriving at a vertex in
$\set{V_u}$ which, either has more than one departing edge, or it must possess an edge 
with a label containing at least two elements. If it has more than one departing edge,
the labels cannot overlap because $F$ is state-determined. But either multiple
edges with distinct labels or an edge bearing a label with multiple elements imply
multiple successors, contradicting the requirement that $F$ be deterministic. \newline
The other direction: Following the procedure described in Lemma~\ref{lem:det-filts-are-pract} with a
single-outputting and state-determined filter never leads to any choices.
Therefore, only singleton subsets of $\pow{\set{V}}$ are involved. 
As a result, any observation terminal sequence can yield at most one
output.
\end{proof}

In the next section
 we use this result.

\subsection{Ascertaining destructiveness of observation maps on filters}

Since p-graphs are capable of representing filters, the next question is how
they might enable a roboticist to evaluate tentative designs and to better
understand solution space trade-offs. A class of interesting design-time
questions arises when one considers how modifications to a given robot's
capabilities alter the estimation efforts that the robot must undertake.  
In the specific context of filtering, consider the following illustrative
examples of how maps might come into play when we apply them to observation sets. 

\begin{example}
\label{ex:rose}
  Your robot is equipped with a camera, and triplets of red--green--blue values within
  an array comprise $\set{Y}$.  Now imagine that rose-tinted lenses are placed
  over the camera. Applied pixel-wise, $\func{h_{\textrm{rose}}}: \langle r, g,
  b\rangle \mapsto \{\langle r, 0, 0\rangle\}$. Certain scenes that produce distinct
  inputs, $y_1\neq y_2$, may now be indistinguishable under the transformation,
  as when
 two scenes differ only in elements of the spectrum filtered out by the
  lenses, and
  $\func{h_{\textrm{rose}}}(y_1) = \func{h_{\textrm{rose}}}(y_2)$.
\end{example}

\begin{example}
  Observation maps need not only reduce the set. Suppose your sensor incurs
  cross-talk due to poor cable routing and cheap shielding. Where formerly a
  given circumstance would produce an observation $y_i$, this might be modeled
  with an observation map \mbox{$y_i \mapsto \{y_i, y_i', y_i''\}$}.   It may
  be that $y' \in \set{Y}$, or it might be some heretofore unseen class of
  signal.  What we are interested in is whether this cross-talk is destructive
  or not. The answer to this depends on whether some other $y_j$
  exists where \mbox{$y_i' \in \func{h}(y_j)$}.  Even existence of such a $y_j$
  is insufficient, as $y_i$ and $y_j$ might occur in every pre-image together.
\end{example}

\newcommand{\eqmod}{\:\raisebox{0.01in}{\scalebox{0.8}{$\geqq$}}\:}

Next, we formalize the notion of a destructive observation map for filters. 

\begin{definition}[filter equivalence]
  \label{defn:filteq}
    Given two p-graphs for filtering,
      $F$ over event space $\set{E}=\set{U} \cup \set{Y}$ and
      $F'$ over event space $\set{E'}=\set{U} \cup \set{Y'}$, and
      an observation map $\func{h_y}: \set{Y} \to \pow{\set{Y}'} \setminus \{\emptyset\}$ mapping from the observation space of
      $F$ to sets of observations of $F'$, we say that \emph{
      $F$ is equivalent to $F'$ modulo $\func{h_y}$}, denoted
        \[
          {F} \eqmod {F'} \mod \func{h_y},
        \]
      if, for every observation-terminal sequence $e_1 e_2 e_3\cdots e_m$ in
      $\langof{F}$, 
      we have  that
      $\left\{ u \;\middle|\;  e_1 e_2 e_3\cdots e_m u \in \langof{F} \right\} = \\
             \phantom{}\hfill\left\{ v \;\middle|\; f_1 f_2 f_3 \cdots f_m v \in \langof{F'}, \mathop{\forall}\limits_{i} \left(e_i \in \set{Y}\implies f_i \in \func{h_y}(e_i)\right) \right\}.$

\end{definition}

Note that we eschew the traditional equivalence symbol `$\equiv$' for this
relation because it is not symmetric: $F \eqmod F' \mod \func{h_y}$ does not necessarily imply $F' \eqmod F \mod \func{h_y}$.

To understand the preceding condition, observe that, on $F$, the sequence $s =
e_1 e_2\cdots e_m$ produces an output that is an element of 
$\left\{ u \;\middle|\;  s u \in \langof{F} \right\}$, the 
set on the left-hand side. 
Paying attention to only the observations that comprise $s$,
which are every other element of the sequence,
each of these $e_i$ result in a set under $\func{h_y}$. 
We consider all sequences that have observations such that every
observation at position $i$ in the sequence, which we denote $f_i$, is from the
set $\func{h_y}(e_i)$.  Using all such sequences, we ask whether $F'$ produces
outputs that match $F$ on
$s$.

(Definition~\ref{defn:filteq} has a simpler presentation if we assume that the
filters involved have a finite action-space, in which case there exists a $k$,
for which we are permitted to write the set on the left as $\{u_1, u_2, \dots,
u_k\}$.  In what appears above we have not assumed that the action-space is
finite, nor even denumerable.)

The intuition behind the definition is that if $F'$, given observations mutated by $\func{h_y}$, exhibits the
same behavior that $F$ exhibits when given those same observations, but
unmutated, then any difference between $F$ and $F'$ is merely in
the change in manifestation of the observations that was induced by
$\func{h_y}$ and the underlying structure is the same.
In contrast, if the two filters can generate different outputs under these
conditions, then there must be some other explanation for those differences.
This suggestion motivates the idea of a nondestructive observation map.

\begin{definition}[non-destructive]
    \label{def:non-destructive-eq}
    Given a p-graph $F$ and 
      an observation map $\func{h_y}: \set{Y} \to \pow{\set{Y}'} \setminus \{\emptyset\}$, 
      we say that $\func{h_y}$ is {\em
    non-destructive} if
      $$ F \eqmod \func{h_y}(F) \mod \func{h_y}.$$
\end{definition}

Informally, a nondestructive observation map is one that preserves enough structure that
the filter still works after applying it, as long as the labels are updated
accordingly.  A destructive observation map is one that
creates enough ambiguity (initially expressed in the resulting p-graph by
states with out-edges whose labels overlap) that the correct outputs can no longer be
determined solely by the observations.

\begin{algorithm}
  \caption{{\sc EquivalenceModuloMap}$({F_1}, {F_2}, \func{h_y})$}
  \label{alg:equiv}
  \DontPrintSemicolon
  \SetAlgoLined
  \SetKwInOut{Input}{Input}
  \SetKwInOut{Output}{Output}
  \Input{Two finite action-space p-graphs ${F_1}$ and ${F_2}$, and observation map $\func{h_y}: \set{Y} \to \pow{\set{Y}'} \setminus \{\emptyset\}$.}
  \Output{True iff $F_1 \eqmod F_2 \mod \func{h_y}$}
  \BlankLine
  \If{$F_1$ and $F_2$ are not akin}{
      {\textbf{return} False}
  }
  {Convert ${F_1}$ and ${F_2}$ to state determined presentation if needed.}\;
  \tcc{Basic idea: conduct a forward search, computing the finite-set of observations needed to make all potential transitions along the way.}\;
  {Initialize queue ${Q} \leftarrow \set{V^{(F_1)}_0} \times \set{V^{(F_2)}_0}$} \;
      \While{$Q$ is not empty}{
          $(s_1,s_2) \leftarrow Q.\textrm{pop}$\;
          \If{$s_1$ and $s_2$ are observation vertices}{
              $\set{Y_1} \leftarrow \textsc{RefineLabels}(\mbox{labels leaving}~s_1)$ \;
              $\set{Y_2} \leftarrow \textsc{RefineLabels}(\mbox{labels leaving}~s_2)$ \;
              $\set{Y_2'} \leftarrow \{$ pre-image of each element of $\set{Y_2}$ under $\func{h_y}\}$\;
              $\set{L}  \leftarrow \textsc{Representatives}(\set{Y_1} \cup \set{Y_2'})$\;
              \tcc{$\set{L}$ has a partition of the observation space which is just fine enough to exercise each filter.}
              \For{$\ell \in \set{L}$}{
                  {$s_1' \leftarrow $ state that ${F_1}$ transitions to on $\ell$}\;
                  {$s_2' \leftarrow $ state that ${F_2}$ transitions to on $\func{h_y}(\ell)$}\;
                  {$Q.\textrm{push}((s_1',s_2'))$} \tcp{To be processed}
              }
          }
          \Else{\tcp{Both are action vertices}
              $\set{O_1} \leftarrow \textsc{Union}(\mbox{labels leaving}~s_1)$ \;
              $\set{O_2} \leftarrow \textsc{Union}(\mbox{labels leaving}~s_2)$ \;
              {\tcp{Equality of label sets computed using \textsc{Difference} and \textsc{Empty}}}
              \If{$\set{O_1} \neq \set{O_2}$}{
                {\textbf{return} False}~\tcp{Output sets are not equal}
              }
              $\set{U_1} \leftarrow \textsc{RefineLabels}(\mbox{labels leaving}~s_1)$ \;
              $\set{U_2} \leftarrow \textsc{RefineLabels}(\mbox{labels leaving}~s_2)$ \;
              $\set{L}  \leftarrow \textsc{Representatives}(\set{U_1} \cup \set{U_2})$\;
              \For{$\ell \in \set{L}$}{
                  {$s_1' \leftarrow $ state that ${F_1}$ transitions to on $\ell$}\;
                  {$s_2' \leftarrow $ state that ${F_2}$ transitions to on $\ell$}\;
                  {$Q.\textrm{push}((s_1',s_2'))$} \tcp{To be processed}
              }
          }
      }
  {\textbf{return} True}\;
\end{algorithm}

\begin{example}
\label{example:monot}
  Suppose $\func{h_y}$ 
  is an injective map so that if
  $\func{h_y}(y) = \func{h_y}(z)$, then $y=z$.  Because this kind of map does
  not introduce the possibility of conflating any two observations, it is clear
  that $\func{h_y}$ is non-destructive.
  In the particular case of interval labels (recall Example~\ref{ex:interval}),
  this implies that any sensor map that is a strictly-increasing or
  strictly-decreasing\,---including, for example, affine maps---\,is
  non-destructive.
  Contrapositively, we can also conclude that every destructive sensor map is
  non-injective.
\end{example}

Restricting ourselves to finite
action-spaces, 
we can now pose the problem posed by the examples in Section~\ref{sec:intro}
precisely and address it algorithmically. 
Given a p-graph $F$ and an
observation map $\func{h_y}: \set{Y} \to \pow{\set{Y}'} \setminus
\{\emptyset\}$, we wish to determine if $\func{h_y}$ is non-destructive on $F$
or not. We check explicitly whether $F \eqmod \func{h_y}(F) \mod \func{h_y}$.
Algorithm~\ref{alg:equiv} shows how to perform this check.  After converting to
state-determined normal form, if necessary, the algorithm uses a forward search
over pairs of states, one from each p-graph, that are reachable by some
event sequence.  For each such
pair, if they are action vertices we verify that the outputs specified by the p-graphs are the same.
For full generality, we show the algorithm for arbitrary pairs of p-graphs, not
just for an $F$ and its $\func{h_y}(F)$.

\subsubsection{Deterministic filters}
\label{sec:det_filter_destructive}

Now suppose we have a deterministic filter $F$ and an observation map $\func{h_y}$,
and wish to ascertain whether $\func{h_y}$ is non-destructive on $F$---a special
case that should be quite common, since  those which are directly
implementable, \emph{viz.} practicable filters, are deterministic filters.
For these filters we can use
Algorithm~\ref{alg:expansion} along with
Theorem~\ref{thm:so_is_enough_for_det_if_sd} to determine whether $\func{h_y}$
is destructive.  The intuition is to compute $\func{h_y}(F)$, then convert that
mapped filter to a state-determined presentation and check whether the result
is also single-outputting. Checking whether a filter output from
Algorithm~\ref{alg:expansion} is single-outputting is especially straightforward because
it only outputs reachable
vertices, thus, one simply checks that 
each action vertex has at most
one singleton labeled edge departing it.
Algorithm~\ref{alg:smdt-d} gives the overall test for destructiveness, which is strikingly simple. 

\begin{algorithm}
  \caption{\textsc{ObservationMapDestructivenessTest}$(F, \func{h_y})$}
  \label{alg:smdt-d}
  \DontPrintSemicolon
  \SetAlgoLined \SetKwInOut{Input}{Input} \SetKwInOut{Output}{Output}
  \Input{A deterministic filter $F$ and an observation map~$\func{h_y}$.}
  \Output{\textsc{True} iff $\func{h_y}$ is non-destructive on $F$.}
  \BlankLine
  $G \leftarrow \textsc{ToStateDeterminedPresentation}(\func{h_y}(F))$ \;
  \textbf{return} $\textsc{IsSingleOutputting}(G)$ 
\end{algorithm}

\subsection{Hardness}
\label{sec:hardness}

The preceding treatment of observation maps raises the question of
why it is of interest to consider a variety of maps.  Instead, why not
simply find the observation map that is, in some sense, the `most aggressive'
nondestructive map for a given filter?  In this section, we present a hardness
result establishing that, unless $P=NP$, no efficient algorithm can find the
nondestructive sensor map of minimal image size for a given filter, even
approximately.
Specifically, we consider the following decision problem:

\begin{definition}[sensor minimization]\label{def:sm}
The \emph{sensor minimization decision problem} is: 
Given a p-graph $F$ and integer $n$, return \textsc{True} if there exists a set
$\set{K}$ and an observation map 
$\func{h_y}: \set{Y} \to \pow{\set{K}} \setminus \{\emptyset\}$, nondestructive for
$F$, with $|\set{K}|\le n$, and \textsc{False} otherwise.
\end{definition}

\begin{figure}
  \centering
  \includegraphics[scale=0.65]{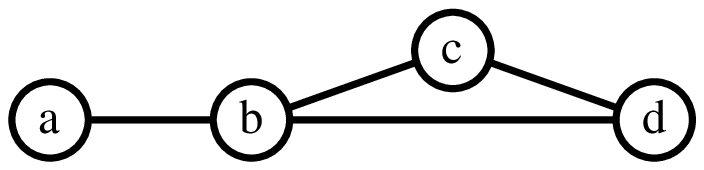}\qquad
  \includegraphics[scale=0.65]{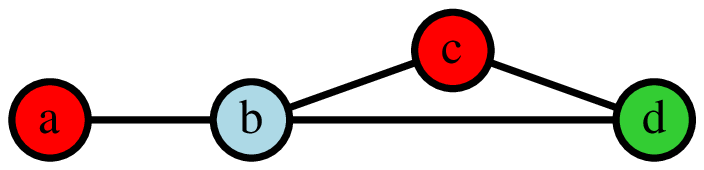}
  \caption{[left] An example instance of the 3-coloring problem.  [right] A coloring of
  that graph using three colors.}
  \label{fig:hardness-gc}
\end{figure}

\begin{theorem}\label{thm:sm-hard}
The sensor minimization decision problem is NP-hard.
\end{theorem}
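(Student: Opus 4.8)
The plan is to reduce from graph 3-coloring, as hinted by Figure~\ref{fig:hardness-gc}. Given a graph $\set{G}=(\set{V},\set{E})$, I will construct in polynomial time a p-graph $F_{\set{G}}$ (a deterministic filter) together with an integer $n$, such that $\set{G}$ is 3-colorable if and only if the sensor minimization decision problem returns \textsc{True} on $(F_{\set{G}},n)$. The natural correspondence is: the observations of $F_{\set{G}}$ correspond to the vertices of $\set{G}$, the target set $\set{K}$ corresponds to the available colors, and the observation map $\func{h_y}$ — once we argue it can be taken to collapse each vertex-observation to a single element of $\set{K}$ — is exactly an assignment of colors to vertices. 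The constraint that $\func{h_y}$ be \emph{non-destructive} must be engineered to say precisely ``adjacent vertices get different colors,'' and setting $n=3$ then makes the two problems coincide.

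First I would set up the gadget. For each vertex $v\in\set{V}$, introduce an observation symbol $y_v$; for each edge $\{v,w\}\in\set{E}$, build a small filter fragment that branches on $y_v$ versus $y_w$ and must subsequently emit distinct outputs — something in the spirit of the right-hand ``not deterministic'' example in Figure~\ref{fig:deterministic}, where reading two observations that later get conflated forces two incompatible successors. Concretely, the fragment reaching a common observation vertex via the two edge-symbols, followed by action vertices emitting $\mathrm{emit}_{\{v,w\}}^{1}$ on one branch and $\mathrm{emit}_{\{v,w\}}^{0}$ on the other, will have the property that it remains a deterministic filter exactly when $y_v$ and $y_w$ are \emph{not} conflated by $\func{h_y}$. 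Taking the union ($\uplus$) of all these per-edge fragments over a shared initial structure gives $F_{\set{G}}$; by the union theorem its induced language is the union of the fragment languages, so non-destructiveness of $\func{h_y}$ on $F_{\set{G}}$ holds iff it holds on every edge fragment, i.e.\ iff $\func{h_y}(y_v)\cap\func{h_y}(y_w)=\emptyset$ for every edge.

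Next I would prove the two directions. For the ``3-colorable $\Rightarrow$ \textsc{True}'' direction: given a proper 3-coloring $c:\set{V}\to\{1,2,3\}$, set $\set{K}=\{1,2,3\}$ and $\func{h_y}(y_v)=\{c(v)\}$; since $c$ is proper, adjacent vertices map to disjoint singletons, so by the observation above $\func{h_y}$ is non-destructive, and $|\set{K}|=3\le n$. For the converse: suppose some $\func{h_y}\colon\set{Y}\to\pow{\set{K}}\setminus\{\emptyset\}$ with $|\set{K}|\le 3$ is non-destructive for $F_{\set{G}}$. Non-destructiveness forces $\func{h_y}(y_v)$ and $\func{h_y}(y_w)$ disjoint for every edge, so picking any \textsc{Representative} from each $\func{h_y}(y_v)$ yields a map $\set{V}\to\set{K}$ that is still proper on edges (disjoint sets have distinct representatives); since $|\set{K}|\le 3$ this is a proper 3-coloring. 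This ``pick a representative'' step is why the non-singleton images in the definition of label maps cause no trouble.

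\textbf{The main obstacle} I anticipate is ensuring the reduction is airtight about what ``non-destructive'' contributes: I must verify that the \emph{only} way for the constructed filter to become non-deterministic after applying $\func{h_y}$ is through conflating the two endpoints of some edge — not through some unintended interaction between different edge gadgets sharing initial states, nor through the map expanding an image so it ``bleeds'' across gadgets. This requires disjointifying the non-edge observation alphabet carefully (e.g.\ isolating each edge gadget so that conflations \emph{within} a gadget are the only ones that matter, while tolerating or forbidding cross-gadget conflations as needed), and checking that $F_{\set{G}}$ is genuinely a deterministic filter to begin with so that Definition~\ref{def:non-destructive-eq} applies cleanly. A secondary subtlety is the ``even approximately'' claim in the surrounding text: since $n$ is fixed at $3$ and $3$-coloring is already \np-hard (whereas $2$-coloring is easy), the same construction shows no efficient algorithm can even decide whether the minimal image size is $\le 3$ versus $>3$, which already rules out any multiplicative approximation better than $4/3$ — though the theorem as stated asks only for \np-hardness, so I would keep the core reduction as above and remark on inapproximability afterward.
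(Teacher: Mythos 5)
Your overall strategy is the paper's: reduce from 3-coloring with observations standing for vertices, $\set{K}$ for colors, $n=3$, and non-destructiveness engineered to mean ``adjacent vertices get disjoint images.'' The forward direction and the representative-picking argument in the converse are both sound and essentially match the paper. The gap is in how you assemble the per-edge gadgets, and it is exactly the obstacle you flagged but did not resolve. Combining the gadgets with $\uplus$ over shared initial states makes the induced language the union of the gadget languages, and Definition~\ref{defn:filteq} then quantifies over \emph{all} sequences $f_1\cdots f_m$ with $f_i\in\func{h_y}(e_i)$: if $\func{h_y}(y_v)\cap\func{h_y}(y_x)\neq\emptyset$ for two \emph{non-adjacent} vertices $v$ and $x$, a mapped image of the sequence ending in $y_v$ can be traced into the gadget of an edge incident to $x$, acquiring successor outputs (e.g.\ your $\mathrm{emit}^{1}_{\{x,z\}}$) that the unmapped sequence does not have. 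So under your construction every conflation of two distinct vertex-observations is destructive, adjacent or not, the minimal image size becomes $|\set{V}(G)|$ rather than the chromatic number, and the reduction fails. Your proposed repair---disjointifying the observation alphabet per gadget---breaks the reduction the other way: with a private copy of $y_v$ in each gadget, the map may ``color'' $v$ differently in different gadgets, and two symbols then always suffice.

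The missing idea is the paper's sequential chaining: fix an arbitrary order on $\set{E}(G)$, give the filter a single initial state, and link the gadget for $e=\{v,w\}$ to the next gadget by edges labeled with the full set $\{v,w\}$ out of \emph{both} branches ($\obs{s}_e$ and $\obs{t}_e$), after interposing a fixed $\mathrm{emit}_0$ at the entry vertex $\act{i}_e$ of each gadget. Then every observation-terminal execution sits at the branch point $\obs{i}_e$ of exactly one gadget (determined by its length), where the only admissible observations are $v$ and $w$; conflating $y_v$ with the observation of a non-adjacent vertex creates no new branch anywhere, while conflating $y_v$ with $y_w$ creates two successors with distinct outputs $\mathrm{emit}_1$ and $\mathrm{emit}_2$ at $\obs{i}_e$. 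This is what makes adjacency, and only adjacency, the destructive pattern, and it also keeps $F$ state-determined (hence a deterministic filter) as you wanted.
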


\begin{figure}
  \centering

  \includegraphics[width=\columnwidth]{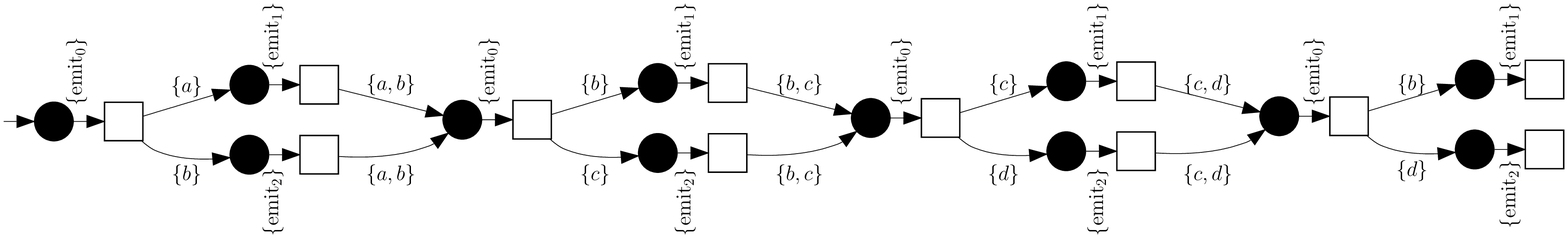}

  \bigskip

  \includegraphics[width=\columnwidth]{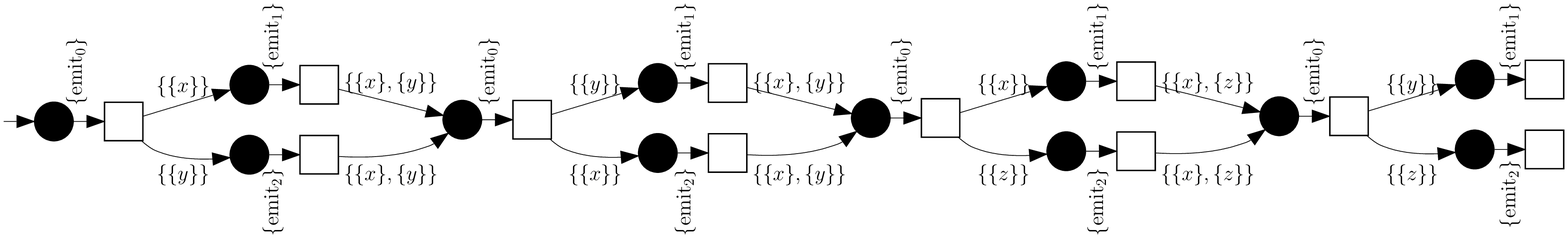}
  \caption{[top] The p-graph expressing a filter constructed from the graph coloring problem 
  shown in Figure~\ref{fig:hardness-gc}. [bottom] The result of applying an observation
  map under which $a \mapsto \{x\}$, $b \mapsto \{y\}$, $c \mapsto \{x\}$, and $d \mapsto
  \{z\}$.  This mapped filter is equivalent to the original filter, modulo this
  map.  Because this filter has a non-destructive map with image of size 3, the
  graph in
  Figure~\ref{fig:hardness-gc} can be colored with 3 colors.}
  \label{fig:hardness-filter}
\end{figure}

\begin{proof}
  Reduction from the graph 3-coloring problem \textsc{Graph-3c}, which is known
  to be NP-complete~\citep{GareyJohnson1979}.  Given an instance $G$ of
  \textsc{Graph-3c}, we construct an instance of the sensor minimization
  decision problem, building an action-first p-graph
  $F$ as follows:
  Use one observation in $\set{Y}$ for each vertex of $G$, so that $\set{Y} = \set{V}(G)$.
  For the set of actions, $\set{U}$, select $\{{\rm emit}_{0}, {\rm emit}_{1}, {\rm emit}_{2}\}$.
  Assign an arbitrary but fixed ordering to the edges $\set{E}(G)$.
  For each edge $e \in \set{E}(G)$ connecting nodes $v$ and $w$: 
     (1)\,insert three action vertices $\act{i}_e$, $\act{s}_e$, and $\act{t}_e$, into $\set{\Vu}$;
     (2)\,insert three observation vertices $\obs{i}_e$, $\obs{s}_e$, and $\obs{t}_e$, into $\set{\Vy}$.  Here, the names mnemonically indicate
     `initial,' `source node,' and `target node.'
  Continue to build the p-graph $G$ by adding an edge from $\act{i}_e$ to $\obs{i}_e$
  labeled with output $\{{\rm emit}_{0}\}$, adding an edge from $\act{s}_e$ to $\obs{s}_e$
  labeled with output $\{{\rm emit}_{1}\}$, and adding an edge from $\act{t}_e$ to $\obs{t}_e$
  labeled with output $\{{\rm emit}_{2}\}$.
  Add an edge labeled with the observation set $\{ v \}$ from $\obs{i}_e$ to $\act{s}_e$.
  Likewise, add an edge labeled with the observation set $\{ w \}$ from
  $\obs{i}_e$ to $\act{t}_e$.
  Unless $e$ is the final edge in the ordering, let $e'$ denote the next
  $G$-edge in the arbitrary ordering, and add edges to $F$ labeled $\{ v, w \}$
  from $\obs{s}_e$ to $\act{i}_{e'}$ and from $\obs{t}_e$ to $\act{i}_{e'}.$
  For the first edge $e \in \set{E}(G)$ in the ordering, designate $\act{i}_e$ as the
  single initial node $\set{\Vinit}$.
  Select $n=3$.

  Figure~\ref{fig:hardness-gc} shows an example instance of \textsc{Graph-3c},
  and Figure~\ref{fig:hardness-filter} shows the corresponding filter.
  The construction takes time linear in the size of $G$.
  It remains to show that $G$ is 3-colorable if and only if $F$ has a
  nondestructive observation map $\func{h_y}: \set{Y} \to \pow{\set{K}}$
  with $|\set{K}|\le 3$.

  Assume that $G$ is 3-colorable.  Let $\func{c}: V(G) \to \{0, 1, 2\}$ be a 3-coloring
  of $G$.  
  Since $\set{Y} = \set{V}(G)$, we construct an observation map for $F$ using $c$
  as follows. We let $\set{K} = \{0, 1, 2\}$ and map only to singleton subsets: $\func{h_y}(y) \mapsto \{c(y)\}$.

  Note that $\func{h_y}(F)$ is state-determined; the only states with multiple out-edges are
  the $\obs{i}_e$ states, and since $c$ is a coloring of $G$, the two observations
  labeling these edges in $F$ must map to different sets under $\func{h_y}$.
  Therefore, $\func{h_y}$ is a nondestructive sensor map for $F$, and $|\set{K}|=3$.

  For the other direction, assume $F$ has a nondestructive observation map
  $\func{h_y}: \set{Y} \to \pow{\set{K}} \setminus \{\emptyset\}$ with $|\set{K}|=3$. Then, there must
  also exist a nondestructive map $\func{h^{s}_y}$ mapping to singletons from $\set{K}$, i.e.,
  $\left\{\{k\}\;\middle|\;k \in \set{K}\right\}$ since mapping to sets with more
  than one element only loses information. From $\func{h_y}$ we can construct an
  $\func{h^{s}_y}$ by making some arbitrary choice from items in the image set.
  
  We argue that this $\func{h^{s}_y}$ forms a valid 3-coloring of $G$.
  Suppose, to the contrary, that $\func{h^{s}_y}$ is not a valid 3-coloring of
  $G$.  Then there must exist some edge $e \in \set{E}(G)$, connecting two
  nodes $v$ and $w$, such that $\func{h^{s}_y}(v)=\func{h^{s}_y}(w)$.  But in
  that case, in $h^{s}_y(F)$, and hence $h_y(F)$, from the node $\obs{i}_e$
  there are two out-edges, both intersecting labels, leading to
  differently-colored states, namely $\act{s}_e$ and $\act{t}_e$.  But
  $\act{s}_e$ results in $\{{\rm emit}_{1}\}$, while $\act{t}_e$ does an $\{{\rm
  emit}_{2}\}$.
   In contrast, the original $F$
  is state-determined.  Therefore $\func{h^{s}_y}$ is destructive of $F$, a contradiction.

  Finally, since \textsc{Graph-3c} is polynomial-time reducible to sensor minimization, we
  conclude that it is NP-hard.
\end{proof}

Note, {\em a fortiori}, that the proof of Theorem~\ref{thm:sm-hard} does not
depend any essential way on the specific number~3.  In fact the chromatic
number of the graph coloring instance and the image size of the smallest
nondestructive observation map for the corresponding p-graph filter are always
equal.  Combined with known results on the inapproximability of chromatic
numbers~\citep{Zuckerman07}, this leads directly to the following stronger
result.

\begin{corollary}
  The optimization problem of finding, for a given filter, the nondestructive
  sensor map with the smallest image size, is NP-hard to approximate to within
  $n^{1-\epsilon}$.
\end{corollary}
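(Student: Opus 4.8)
The plan is to observe that the reduction built in the proof of Theorem~\ref{thm:sm-hard} is in fact an \emph{exact, approximation-preserving} reduction from the chromatic number problem, and then to invoke a known inapproximability result for chromatic number. So the work splits into two parts: first, upgrade the NP-hardness reduction to a gap-preserving one with matching optima, and second, cite the right chromatic-number hardness and transfer the factor.

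First I would re-examine the construction $G \mapsto F_G$ from the proof of Theorem~\ref{thm:sm-hard} and note, as the remark following that theorem already asserts, that nothing in it depends on the constant $3$: the identical gadget construction yields, for every integer $k$, a filter $F_G$ that admits a nondestructive observation map with image of size at most $k$ if and only if $G$ is $k$-colorable. Both directions lift verbatim. Given a proper $k$-coloring $c$, the map $\func{h_y}(y)=\{c(y)\}$ has image size $k$ and makes $\func{h_y}(F_G)$ state-determined --- the only branching vertices are the $\obs{i}_e$, whose two out-labels become disjoint precisely because $c$ is proper --- hence the map is nondestructive. Conversely, any nondestructive map of image size $k$ can be reduced (losing only information) to a singleton-valued map over at most $k$ colors, and that map must assign different colors to the endpoints of every edge $e$, since otherwise $\obs{i}_e$ would carry two overlapping out-labels leading to the distinct outputs ${\rm emit}_{1}$ and ${\rm emit}_{2}$ --- contradicting nondestructiveness because the original $F_G$ is state-determined. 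Thus the optimum of the sensor-minimization \emph{optimization} problem on $F_G$ equals $\chi(G)$ exactly, and the reduction is computable in polynomial time.

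Next I would account for sizes. The number of observations of $F_G$ is $|\set{Y}| = |\set{V}(G)|$, and the entire construction is linear in the edge count of $G$. Since the two optima coincide, any polynomial-time algorithm that approximates the minimum nondestructive-map image size of a filter within a factor $r(n)$ --- with $n$ taken to be, say, the number of observations of the filter --- would approximate $\chi(G)$ within $r(|\set{V}(G)|)$. Plugging in Zuckerman's theorem~\citep{Zuckerman07}, that it is NP-hard to approximate $\chi(G)$ within $|\set{V}(G)|^{1-\epsilon}$ for every fixed $\epsilon > 0$, then yields the claimed $n^{1-\epsilon}$ inapproximability for the filter problem.

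The main thing to watch, and the only real obstacle, is bookkeeping on the size parameter so that the exponent $1-\epsilon$ transfers cleanly rather than being attenuated by a polynomial blow-up: one wants the quantity $n$ against which the filter approximation factor is measured to be at most polynomially related to (and, ideally, exactly equal to) $|\set{V}(G)|$, which is why it is convenient to let $n$ count the observations of $F$, matching it to $|\set{V}(G)|$ on the nose. Everything else is a mechanical lift of the preceding proof together with the citation.
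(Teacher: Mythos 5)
Your proposal is correct and follows essentially the same route as the paper: both exploit the fact that the Theorem~\ref{thm:sm-hard} reduction is approximation-preserving with the optimum image size equal to $\chi(G)$, and then transfer Zuckerman's $n^{1-\epsilon}$ inapproximability of chromatic number. If anything, your explicit bookkeeping that $n$ (the number of observations of $F$) equals $|\set{V}(G)|$ is slightly more careful than the paper's own argument, which leaves the size parameter implicit.
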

\begin{proof}
  Let $\epsilon > 0$.
  Suppose that there exists a  polynomial time approximation algorithm $A$ 
  to solve 
  sensor minimization
  with approximation ratio $n^{1-\epsilon}$.
  Let $B$ denote an approximation algorithm for graph coloring that works as
  follows.
  \begin{enumerate}
    \item Given an instance $G$ of graph coloring problem, form the filter
      $F$ as described in the proof of Theorem~\ref{thm:sm-hard}.
    \item Use algorithm $A$ to apply map $\func{h_y}$ on $F$.

    \item Find a coloring of $G$ from the applied map $\func{h_y}(F)$.
  \end{enumerate}
  We now argue that $B$ has approximation ratio $n^{1-\epsilon}$.
 
  Let $B(G)$ denote the number of colors used by algorithm $B$ to color $G$,
  and, likewise,
  let $A(\func{h_y}(F))$ denote the image size of filter produced
  by algorithm $A$ from input filter $F$ and with map $\func{h_y}$ applied.
  Let $OPT(\func{h_y}(F))$ and $OPT(G)$ represent the smallest image size of applying $\func{h_y}$
  on filter $F$ and minimum number of colors for coloring $G$, respectively.
  According to the assumption, we have $A(F)\leqslant n^{1-\epsilon} OPT(\func{h_y}(F))$.
  \newline
  Thus, the above construction would be such an approximation algorithm $B$ for
  graph coloring problem. So, we have $OPT(\func{h_y}(F)) = OPT(G)$. Then, for 
  sufficiently large $n$,
   \begin{eqnarray}
         B(G)
         &=&   A(\func{h_y}(F(G)))  \label{eq:2} \\
         &\leqslant& n^{1-\epsilon}  OPT(\func{h_y}(F))  \label{eq:3} \\
         &\leqslant& n^{1-\epsilon}  OPT(G)  \label{eq:4} .
   \end{eqnarray}
  Therefore, $B$ is an approximation algorithm for graph coloring with
  approximation ratio $n^{1-\epsilon}$, which contradicts
  \citet{Zuckerman07}.

\end{proof}

\subsection{Case study: Minimizing the iRobot Create}
\label{sec:casestudy}

\begin{figure}[h!]
  \centering
  \includegraphics[width=0.4\columnwidth]{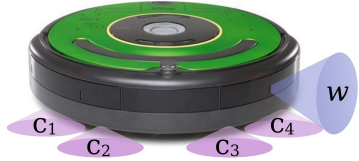}
  \caption{An iRobot Create is equipped with a collection of simple sensors
  including four cliff sensors and a wall sensor, each uses IR to measure
  distance. As a simple example, we consider a filter which maps sensor
  readings into motor commands on a robot tasked with following a wall on its
  left, while avoiding negative obstacles. (A suitable environment is shown in Figure~\ref{fig:pgraph-example}.)}
  \label{fig:roomba}
\end{figure}

The following simple scenario, of the sort that the authors have often assigned
in introductory robotics courses, illustrates the utility of the machinery
developed in this paper. 
\textcolor{black}{
Here, we report transformations computed by
our Python implementation of the algorithms described above.}
Revisiting the scenario in
Figure~\ref{fig:pgraph-example}, we wish to have an iRobot Create vacuum
cleaning robot follow walls (on its port side) while avoiding negative
obstacles. The five range sensors on the robot provide sufficient information
to carry out this basic task.  (See Figure~\ref{fig:roomba} for elaboration of
the sensing details.) We approach this problem by constructing a p-graph whose
outputs map directly to actions for the robot, and then we are able to analyze
the effect of observation maps on this p-graph (as a filter) with our
implementation of the algorithms described in the earlier sections of the
paper.  

\begin{figure}[h!]
  \centering
  \includegraphics[width=0.8\columnwidth]{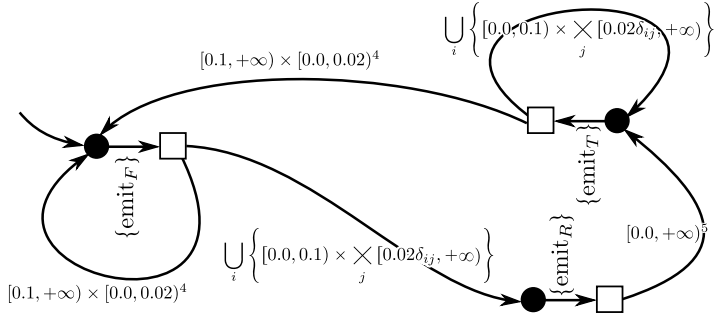}
  \caption{A visual representation of the filter ({\sc Ideal}) that solves navigation problem for the Create  where
  edge labels are subsets of $\real^5$, and the values in each vertex are velocities that the robot executes
  for some small finite time. The action ${\rm emit}_{F}$ generates a `Forward-with-slight-left-bias' motion with $\dot{x} = 0.2, \dot{\theta} = 0.1$,
  the ${\rm emit}_{T}$ generates a `Turn' motion via $\dot{x} = 0, \dot{\theta}=-0.2$, and 
  the ${\rm emit}_{R}$ generates a `Reverse' motion via $\dot{x} = -0.1, \dot{\theta}=0$.
(The $\delta_{ij}$ is the Kronecker delta, $i, j \in \{1,2,3,4\}$.)
  }
  \label{fig:roombafilter}
\end{figure}

First we describe the set of observations for idealized versions of the robot's
sensors. Each of $\{w, c_1, c_2, c_3, c_4\}$ is fundamentally a device that
measures distance, so it is useful to model each output with a real number that
represents the range reading; naturally, the product of these five sensors
gives a label space with $\real^5$. Each state in the filter produces an output
that is interpreted as velocity commands---linear as $\dot{x}$ and angular as
$\dot{\theta}$. The filter is shown pictorially in
Figure~\ref{fig:roombafilter}.

From this a series of filters is constructed via transformations that coarsen
the label space. Observation map $\func{h_y}$ clips to a maximum ranges
supported by the sensors. It is applied to the {\sc Ideal} filter, giving a filter
{\sc Create (Signals)}, whose labels are based on the data that can be read
from the physical sensors through the software interface (see \citet{oi}).
Observation map $\func{f_y}$, transforms {\sc Create
(Signals)} into {\sc Create (Symbols)} representing a second level of
abstraction ---in this case, a quantization based on thresholding--- available
through the robot's hardware interface. Map $\func{g_y}$ further reduces the set
of labels, while the final map we define, $\func{k_y}$, is destructive.
Table~\ref{tab:filts} collects this information.  The rows in the table also
summarize the relationships visually. Starting from {\sc Ideal} one produces
the others via composition of the sensor maps, for example, {\sc Combined
Sensor} results from applying map the~$\func{g_y}\circ\func{f_y}\circ\func{h_y}$.
Only the final map to a sensorless model is destructive. The conclusion 
we reach from this is
that, for this filter, neither the specific distance measurements nor
the
individual identities of the sensors themselves are necessary.  A robot
designed exclusively for this task could, therefore, likely be designed to be
simpler than a Create.

\begin{table}
\noindent
\begin{center}
\begin{tabular}{|llp{0.32\linewidth}|}
\hline
\bf{Name              } &{\bf   Observation Space} &{\bf     Notes}\\[2pt]
\hline &&\\[-5pt]
{\sc Ideal}              &     {\footnotesize$\real\times\real\times\real\times\real\times\real$} & \\[6pt]
\textcolor{darkred}{\bf\quad$\Big\downarrow\func{h_y} = \textrm{clipToRange}(\cdot)$} & &\\ [9pt]
{\sc Create (Signals)}   &     {\footnotesize$[0,1023]\times[0,4095]^4$} & {\scriptsize {\em cf.}~\citet[pg.~27]{oi}.}\\[6pt] 
\textcolor{darkred}{\bf\quad$\Big\downarrow\func{f_y} = \textrm{threshold}_T(\cdot)$}  & \multicolumn{2}{l|}{\textcolor{darkred}{\quad($T=10$ and $20$ for $w$ and $c_i$ respectively.)  }} \\[9pt]  
{\sc Create (Symbols)}   &     {\footnotesize$\{0,1\}\times\{0,1\}^4$} & {\scriptsize {\em cf.}~\citet[pgs.~22--23]{oi}.}\\[6pt]
\textcolor{darkred}{\bf\quad$\Big\downarrow\func{g_y} = \textrm{min}(\cdot)$}  & & \\[9pt]
{\sc Combined Sensor}    &     {\footnotesize$\{0,1\}$}  & \\[6pt]
\textcolor{darkred}{\bf\quad$\Big\downarrow\func{k_y} = 0$}  & \multicolumn{2}{l|}{\textcolor{darkred}{\quad(Constant map)  }} \\[9pt]
{\sc Sensorless}        &     {\footnotesize$\{0\}$}    & (Destructive)\\[12pt]
\hline
\end{tabular}
\end{center}
\caption{A hierarchy of filters for the iRobot Create.\label{tab:filts}}
\end{table}

\section{Destructiveness in planning problems}\label{sec:plans}
\label{sec:plan}

\subsection{Plans and Planning Problems}

The final section of \citet{TovCoh+14}, a substantial and recent paper on the
topic of combinatorial filters, concludes with
the following:
\begin{quotation}%
\noindent\textsl{``Since the methods so far provide only inference, how can their output be used
to design motion plans? In other words, how can the output be used as a filter
that provides feedback for controlling how the bodies move to achieve some
task?''}
\end{quotation}
Next, we make some progress in that direction by using p-graphs to model
planning problems and plans.  The example of the iRobot Create in
Section~\ref{sec:casestudy} had a direct correspondence between filter outputs and
actions that the robot executed. In general, the sequences of
actions that a robot performs will depend on the task it is performing, but in the previous
section there was no direct representation of tasks.  
Thus, though p-graphs have been used up to this point to encode state space
structure, more information must be provided to talk meaningfully about
plans and planning problems. 

\begin{definition}[planning problem]\label{def:problem}
  A \emph{planning problem} is a p-graph $G$ equipped with a \emph{goal region}
  $\Vgoal \subseteq V(G)$.
\end{definition}

The idea is that for a pair that make up the planning problem, the p-graph
describes the setting and form in which decisions must be made, while the
$\Vgoal$ characterizes what must be achieved.  Recall that, because a p-graph
may have multiple initial states, this definition can encompass planning
problems in which the system is known to start from one of possibly many
starting states.

\begin{definition}[plan]\label{def:plan}
  A \emph{plan} is a p-graph $P$ equipped with a \emph{termination region}
  $\Vterm \subseteq V(P)$.
\end{definition}

The intuition is that the out-edges of each action state of the plan show one or
more actions that may be taken from that point---if there is more than one such
action, the robot selects one nondeterministically---and the out-edges of each
observation state show how the robot should respond to the observations received
from the environment.  If the robot reaches a state in its termination region, it
may decide to terminate there and declare success, or it may decide to continue
on normally.
This, then, gives an interpretation of p-graphs as plans.
We can now establish the core relationship between planning problems and plans.

\begin{definition}[solves]\label{def:solves}
  A plan $(P, \Pterm)$ \emph{solves} the planning problem $(W, \Vgoal)$ if
  $P$ is finite and safe on $W$, and every joint-execution $\eseq$ of
  $P$ on $W$
  either reaches a vertex in $\Pterm$, or is a prefix of some joint-execution
  that reaches $\Pterm$ and, moreover, all the $\eseq$ that reach a vertex
  $v\in V(P)$ with $v \in \Pterm$, always reach vertices $w\in V(W)$ with $w \in
  \Vgoal$.
\end{definition}


\textcolor{black}{
The solution concept here, with its stipulation of finiteness 
reminiscent of notions of computability, is concerned only with processes that terminate
in some bounded time. (We defer questions about extensions to other prevalent
concepts\,---such as infinite horizons, models of rewards, and the like---\,to
future work.) }


\begin{example}[Charging around and in the pentagonal world]
  We can construct a planning problem from the p-graph of
  Figure~\ref{fig:pentaworld}, along with a goal region consisting of only
  the fully-charged state reached by action $u_2$.  Figure~\ref{fig:35} shows a
  plan that solves this problem.
  \begin{figure}[t]
    \begin{minipage}[t]{0.5\textwidth}
      \centering
      \includegraphics[scale=0.5]{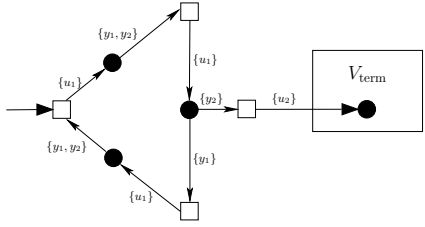}
    \end{minipage}
    \begin{minipage}[b]{0.5\textwidth}
      \caption{A plan that directs the robot of
      Figure~\ref{fig:pentaworld} to its charging station, along a
      \hbox{hyperkinetic} (that is, exhibiting more motion than is strictly
      necessary) path.\label{fig:35}}
    \end{minipage}
  \end{figure}
  However, that plan, a cycle of three actions, is a bit surprising since it
  will take the robot along three full laps around its environment before
  terminating.
  \gobble{Each of the joint executions in this case are prefixes of the event sequence
    $u_1 y_1 u_1 y_1 u_1 y_1 u_1 y_2 u_1 y_1 u_1 y_1 u_1 y_1 u_1 y_1 u_1 y_2 u_1 y_1 u_1 y_1 u_1 y_1 u_1 y_1 u_1 y_2 u_2 $.}
  The existence of such bizarre plans motivates our consideration of
  homomorphic plans, which behave rather more sensibly, in
  Section~\ref{sec:homomorophicsolns}.
  %
  %
  %
  \label{ex:35}
\end{example}

\textcolor{black}{
Both plans and planning problems are pairs consisting of a p-graph and a set of
states. In each case, the p-graph can be converted into a state-determined
presentation using Algorithm~\ref{alg:expansion} and doing so preserves the
interaction language.  But the semantics for both structures,
plans and planning problems, is tied together through the definition of
`solves' (Definition~\ref{def:solves}).  That definition has two parts. The
first concerns finiteness and safety, properties of joint-executions only, and
is consequently unaffected by transformations that preserve the interaction
language.  The second depends on vertices and their relationship to the
associated sets.  Thus, forming something analogous to a state-determined
presentation must require some alteration of the second element of the pair,
i.e., the set of states.  
}

\textcolor{black}{
\begin{definition}[state-determined planning problems] \label{def:stdprob}
The state-determined presentation of planning problem $(W, \Vgoal)$
is $(W', \Vgoalp)$ where $W'$ is the 
state-determined presentation of p-graph $W$, and $\Vgoalp$ is the
subset of $V(W')$ where $v \in V(W')$ is included in $\Vgoalp$ only 
if all the vertices in $V(W)$ that correspond with $v$ are in $\Vgoal$.
\end{definition}%
}

\textcolor{black}{
\begin{definition}[state-determined plans] \label{def:stdplan}
The state-determined presentation of problem $(P, \Vterm)$
is $(P', \Vtermp)$ where $P'$ is the 
state-determined presentation of p-graph $P$, and $\Vtermp$ is the
subset of $V(P')$ where $v \in V(P')$ is included in $\Vtermp$ only 
if there exists a vertex in $V(P)$ corresponding with $v$ that is in $\Vterm$.
\end{definition}%
}

\textcolor{black}{
The previous two definitions differ only in terms of the quantifier involved in
their conditions on associated vertices. 
These mirror the `reach a vertex' and `always reach vertices' in 
Definition~\ref{def:solves}.
Practically, the conditions can be
computed easily by via the $\set{\textrm{Corresp}}[\cdot]$ map used in
Algorithm~\ref{alg:expansion}.  
}

\textcolor{black}{
\begin{lemma}[state-determined presentations preserve solubility]
If $(P, \Vterm)$ is a plan, and $(W, \Vgoal)$ a planning problem, with
their state-determined presentations being 
$(P', \Vtermp)$ and $(W', \Vgoalp)$ respectively, then the following 
are equivalent:
\begin{enumerate}
\item $(P, \Vterm)   \textrm{ solves } (W, \Vgoal)$
\item $(P', \Vtermp) \textrm{ solves } (W, \Vgoal)$
\item $(P, \Vterm)   \textrm{ solves } (W', \Vgoalp)$
\item $(P', \Vtermp) \textrm{ solves } (W', \Vgoalp)$
\end{enumerate}
\end{lemma}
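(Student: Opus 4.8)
The plan is to lean on the decomposition of \textsc{Solves} (Definition~\ref{def:solves}) into clauses of two flavors: those that speak only of joint-executions --- that $P$ is finite on $W$, that $P$ is safe on $W$, and the liveness clause that every joint-execution of $P$ on $W$ reaches, or extends to one that reaches, a $\Pterm$ vertex --- and the one correctness clause that additionally refers to which vertices a joint-execution reaches, namely that every joint-execution reaching a $\Pterm$ vertex reaches only $\Vgoal$ vertices. Since Algorithm~\ref{alg:expansion} preserves the induced interaction language, $\langof{P}=\langof{P'}$ and $\langof{W}=\langof{W'}$, so the set of joint-executions of $P$ on $W$ is literally the same set of event sequences as for $P'$ on $W$, for $P$ on $W'$, and for $P'$ on $W'$. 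Consequently ``finite on'' (Definition~\ref{def:finite}), ``safe on'' (Definition~\ref{def:safe}), and ``is a prefix of a joint-execution that reaches a $\Pterm$ vertex'' are all unaffected by either substitution, provided we also know that ``reaches a $\Pterm$ vertex'' is unaffected. So everything reduces to reconciling the vertex-level statements.

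For the vertex-level part, the crucial step is a bookkeeping lemma about the powerset construction of Algorithm~\ref{alg:expansion}: for any execution $s$ of $W$, the state-determined presentation $W'$ is reached by $s$ at a single vertex $v'$ (unique because $W'$ has one initial state and disjoint out-labels, cf.\ Definition~\ref{defn:sd}), and the set $\textrm{Corresp}[v']$ of $W$-vertices associated with $v'$ equals exactly the set of vertices of $W$ that $s$ reaches, in the sense of transitioning to from some initial state. I would prove this standard subset-construction invariant by induction on $|s|$, using the \textsc{RefineLabels} step to see that a \textsc{Representative} of each refined label induces exactly the transitions that any event of that label would. The same statement holds for $P$ and $P'$. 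From it I get two corollaries whose proofs just match the quantifiers chosen in Definitions~\ref{def:stdplan} and~\ref{def:stdprob}: an execution $s$ reaches some $\Pterm$ vertex in $P$ iff it reaches the (unique) vertex of $P'$ it arrives at and that vertex lies in $\Vtermp$, because $\Vtermp$ is the set of $P'$-vertices with \emph{some} corresponding $\Pterm$-vertex; and $s$ reaches \emph{only} $\Vgoal$ vertices in $W$ iff the vertex of $W'$ it arrives at lies in $\Vgoalp$, because $\Vgoalp$ is the set of $W'$-vertices \emph{all} of whose corresponding vertices are in $\Vgoal$.

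With the corollaries in hand I would state and prove two separate claims and then chain them. \emph{Claim~A:} for any fixed planning problem $(W,\Vgoal)$, the plan $(P,\Vterm)$ solves $(W,\Vgoal)$ iff $(P',\Vtermp)$ solves $(W,\Vgoal)$; this follows because every clause of Definition~\ref{def:solves} mentioning the plan does so only through $\langof{P}$ or through ``reaches a $\Pterm$ vertex'', both of which agree for $P$ and $P'$. \emph{Claim~B:} for any fixed plan $(P,\Vterm)$, it solves $(W,\Vgoal)$ iff it solves $(W',\Vgoalp)$; this follows because $\langof{W}=\langof{W'}$ and ``reaches only $\Vgoal$ vertices in $W$'' agrees with ``reaches a $\Vgoalp$ vertex in $W'$''. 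Then (1)$\Leftrightarrow$(2) is Claim~A with problem $(W,\Vgoal)$; (1)$\Leftrightarrow$(3) is Claim~B with plan $(P,\Vterm)$; and (2)$\Leftrightarrow$(4) is Claim~B applied to the plan $(P',\Vtermp)$. Hence all four statements are equivalent.

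The main obstacle I expect is the bookkeeping lemma about $\textrm{Corresp}[\cdot]$: making ``the vertices $s$ reaches'' precise in the presence of the several forms of nondeterminism discussed after the definition of execution (multiple initial states, overlapping action labels, overlapping observation labels), and checking that the label-refinement step neither merges nor splits reachable-state sets incorrectly --- in particular that choosing one representative per refined label recovers all transitions induced by any event of that label. Everything downstream is routine unwinding of definitions, the only care needed being to keep the existential quantifier on the plan/termination side and the universal quantifier on the problem/goal side throughout.
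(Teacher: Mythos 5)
Your proposal is correct and follows essentially the same route as the paper's (much terser) proof: both rest on the vertex correspondence of the powerset construction, the invariance of the language-level clauses of \emph{solves}, and the observation that the existential quantifier in the definition of $\Vtermp$ and the universal quantifier in the definition of $\Vgoalp$ are chosen precisely to match the ``reach a vertex in $\Pterm$'' and ``always reach vertices in $\Vgoal$'' clauses. Your explicit subset-construction invariant for $\textrm{Corresp}[\cdot]$ and the Claim~A/Claim~B chaining are just a careful elaboration of what the paper leaves implicit.
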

\begin{proof}
The correspondence between vertices in the original p-graph
and the state-determined presentations allows the joint-executions
in one case to be traced in the other.
Under the correspondence, one must check the requirements for
being a solution do in fact hold.
But the logic necessary in updating the goal and termination sets in
Definitions~\ref{def:stdprob} and~\ref{def:stdplan}
correspond to the solution requirements (an event sequence reaching \emph{a} vertex
in $\Vterm$ will reach a set of vertices \emph{all} of which are in $\Vgoal$),
so they do hold.
\end{proof}
}

Now, given a plan $(P, \Pterm)$ and a planning problem $(W, \Vgoal)$, we can decide
whether $(P, \Pterm)$ solves $(W, \Vgoal)$ in a relatively straightforward way.
\textcolor{black}{
First, we convert both into state-determined presentations, as just described.%
}%
Then, the algorithm
conducts a forward search using a queue of ordered pairs $(v,w)$, in which $v
\in V(P)$ and $w \in V(W)$, beginning from the (unique, due to
Definition~\ref{defn:sd}) start states of each.  For each state pair $(v, w)$
reached by the search, we can test each of the properties required by
Definition~\ref{def:solves}:
\begin{itemize}
  \item If $P$ and $W$ are not akin, return false.
  \item If $(v, w)$ has been visited by the search before, then we have
  detected the possibility of returning to the same situation multiple times in
  a single execution.  This indicates that $P$ is not finite on $W$.  Return
  false.

  \item If $v$ and $w$ fail the conditions of Definition~\ref{def:safe} (that
  is, if $v$ is missing an observation that appears in $w$, or $w$ omits an
  action that appears in $v$) then $P$ is not safe on $W$.  Return false.

  \item If $v$ is a sink state  not in $\Pterm$, or $w$ is a sink state not in
  $\Vgoal$, then we have detected an execution that does not achieve the goal.
  (A vertex is a sink if it has no departing edges.)
  Return false.

  \item If $v \in \Pterm$ and $w \notin \Vgoal$, then the plan might terminate
  outside the goal region.  Return false.
\end{itemize}
If none of these conditions hold, then we continue the forward search, adding
to the queue each state pair $(v', w')$ reached by a single event from $(v,w)$.
Finally, if the queue is exhausted, then---knowing that no other state pair can
be reached by any execution---we can correctly conclude that $(P, \Pterm)$ does
solve $(W, \Vgoal)$.

It may perhaps be surprising that both planning problems and plans are defined
by giving a p-graph, along with a set of states at which executions should end.
We view this symmetry as a feature, rather than a bug, in the sense that it clearly
illuminates the duality between the robot and the environment with which it
interacts.  As alluded to in Section~\ref{sec:ilang}, observations can be
viewed as merely ``actions taken by nature'' and vice versa.  At an extreme,
the planning problem and the plan may be identical:

\begin{lemma}[self-solving plans]
  If $P$ is a p-graph which is acyclic and the set of 
  its sink nodes is \mbox{$V_{\rm sink}$},
  then the plan $(P,V_{\rm sink})$ solves the planning problem
  $(P,V_{\rm sink})$.
\end{lemma}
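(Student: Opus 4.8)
The plan is to check each clause of Definition~\ref{def:solves} with $(P,V_{\rm sink})$ serving as both the plan and the planning problem, so that $\Pterm=\Vgoal=V_{\rm sink}$ and the joint-executions of $P$ on $P$ are precisely the executions of $P$.

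First I would dispatch the two requirements that do not involve the distinguished vertex sets. That $P$ is \emph{safe on} $P$ is immediate from the reflexivity of the ``safe on'' relation observed in Section~\ref{sec:ilang}. That $P$ is \emph{finite on} $P$ follows from acyclicity: every execution of $P$ traces a directed walk through the finite graph $P$, and in an acyclic digraph a walk cannot revisit a vertex (a repeat would enclose a directed cycle), so every execution has length strictly less than $|V(P)|$; thus $|V(P)|$ uniformly bounds the lengths of the joint event sequences, which is exactly Definition~\ref{def:finite}.

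Next I would establish the liveness clause, namely that every joint-execution $\eseq$ either reaches a vertex of $V_{\rm sink}$ or is a prefix of one that does. Let $v$ be a vertex reached by $\eseq$. If $v$ is a sink we are done. Otherwise $v$ has an outgoing edge, and since its label is non-empty we may append a representative event and continue; because $P$ is acyclic and finite this extension cannot recur to an already-visited vertex and cannot proceed forever, so it must halt at some sink. The sequence it produces is again an execution of $P$, hence a joint-execution of $P$ on $P$, it reaches $V_{\rm sink}$, and it has $\eseq$ as a prefix.

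Finally, the correctness clause---that every joint-execution which reaches a vertex $v\in\Pterm$ also reaches a vertex $w\in\Vgoal$---is immediate, since the plan and the planning problem are literally the same p-graph $P$: the vertex reached on the plan side is the vertex reached on the problem side, and $\Pterm=\Vgoal=V_{\rm sink}$. The only step that truly exercises the hypothesis is the liveness argument (together with the finiteness bound), where acyclicity is precisely what guarantees that greedily extending an execution toward a sink terminates; the one small point to be careful about there is the tacit assumption that edge labels are non-empty, which is what lets the extension step always proceed from a non-sink vertex.
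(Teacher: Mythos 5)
Your proof is correct and follows essentially the same route as the paper's (which is far terser): establish finiteness and safety, observe that joint-executions of $P$ on $P$ are just executions of $P$, and use acyclicity to argue that every execution reaches a sink or is a prefix of one that does. Your explicit remark about needing non-empty edge labels for the extension step is a detail the paper's one-line argument silently assumes, but it does not change the structure of the argument.
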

\begin{proof}
  The plan is obviously finite and safe on itself. Because the set of
  joint-executions is simply the set of executions, the result follows from the
  fact that every execution on $P$ either reaches an element of
  $V_{\textrm{sink}}$, or is the prefix of one that does.
\end{proof}

We have described, in Definitions~\ref{defn:pairproduct} and \ref{defn:sd},
operations to construct new p-graphs out of old ones.  We can extend these in
natural ways to apply to plans.\footnote{\ldots and---via the symmetry between
Definitions~\ref{def:problem} and \ref{def:plan}---in the same stroke, to
planning problems, though in this paper we'll use these operations only on
plans.}

\begin{definition}[$\cup$-product of plans]
  The \emph{$\cup$-product} of two plans $(P, \Pterm)$ and $(Q,\Qterm)$, with $P$
  and $Q$ akin, is a plan $(P \uplus Q, \Pterm \cup \Qterm$.
\end{definition}




\begin{theorem}[state-determined $\cup$-products] 
    Given plans $(P, \Pterm)$ and $(Q, \Qterm)$, with $P$ and $Q$ akin,
    construct a new plan whose p-graph, denoted $R$, is the expansion of $P
    \uplus Q$ into a state-determined presentation. 
    Recall that the expansion means that every state $s\in V(R)$ corresponds to
    sets $P_s\subseteq V(P)$ and $Q_s\subseteq V(Q)$ of states in the original
    p-graphs (either set can be empty, but never both).
    Define a termination region $\Rterm$ as follows:
    \vspace*{-2pt}
    {\small
        \[ \Rterm \defeq \left\{ s \in V(R) \;\left|\;  \left( P_s \neq \emptyset \wedge P_s\setminus\Pterm = \emptyset \right) \vee \left( Q_s \neq \emptyset \wedge  Q_s\setminus\Qterm = \emptyset \right) \right. \right\}.\vspace*{-2pt} \]
    }
    Then $(R, \Rterm)$ is equivalent to $(P \uplus Q, \Pterm \cup \Qterm)$
    in the sense of having identical sets of executions. 
    Moreover, any planning problem solved by the former is also solved by the
    latter.


\end{theorem}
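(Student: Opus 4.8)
The plan is to separate the statement into the ``same executions'' part and the ``solubility'' part, and to handle the second by reduction to the machinery for state-determined presentations already in hand.

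For the executions, note that the set of executions of a plan depends only on its underlying p-graph, not on the chosen termination region, so it is enough to check $\langof{R}=\langof{P\uplus Q}$. Since $R$ is by construction the output of Algorithm~\ref{alg:expansion} applied to $P\uplus Q$, and that algorithm preserves the induced interaction language, this equality holds; by the earlier theorem $\langof{P}\cup\langof{Q}=\langof{P\uplus Q}$ it also equals $\langof{P}\cup\langof{Q}$. The same observation shows that $R$ and $P\uplus Q$ are finite on, and safe on, exactly the same p-graphs, because both relations are defined purely through joint event sequences.

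For the solubility claim, suppose $(R,\Rterm)$ solves a planning problem $(W,\Vgoal)$. By the lemma that state-determined presentations preserve solubility, $(P\uplus Q,\Pterm\cup\Qterm)$ solves $(W,\Vgoal)$ iff its state-determined presentation does, and by Definition~\ref{def:stdplan} that presentation is $(R,\widehat{\Rterm})$ where $\widehat{\Rterm}$ is the \emph{existential} analogue of $\Rterm$: $s\in\widehat{\Rterm}$ iff $P_s\cap\Pterm\neq\emptyset$ or $Q_s\cap\Qterm\neq\emptyset$. Observe $\Rterm\subseteq\widehat{\Rterm}$. I would then run through the clauses of Definition~\ref{def:solves} for $(R,\widehat{\Rterm})$ on $(W,\Vgoal)$: finiteness and safety are unchanged; the clause ``every joint-execution reaches the termination region, or is a prefix of one that does'' is monotone under \emph{enlarging} the termination region, so it transfers from $\Rterm$ up to $\widehat{\Rterm}$; the only clause left is the goal clause for the enlarged region.

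That goal clause is the main obstacle. I must show: whenever a joint-execution $\eseq$ of $R$ on $W$ reaches some $s\in\widehat{\Rterm}$, the vertex it reaches in $W$ lies in $\Vgoal$. For $s\in\Rterm$ this is exactly the hypothesis on $(R,\Rterm)$. The delicate case is an \emph{ambiguous} state $s\in\widehat{\Rterm}\setminus\Rterm$, where, say, $P_s$ meets $\Pterm$ but is not contained in it and $Q_s$ is empty or also fails containment --- at such $s$ the ``forall''-style $\Rterm$ deliberately declines to call $s$ terminal even though the corresponding trace through $P\uplus Q$ does. The approach I would take is to use the reachability clause together with $R$ being finite on $W$ to extend $\eseq$ along the component that already witnesses termination, follow it to the first downstream state genuinely in $\Rterm$, and argue via the componentwise form of $\Rterm$ and safety that the $W$-vertex cannot have left $\Vgoal$ in between. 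I expect controlling precisely that --- how the $W$-vertex may move between an ambiguous state and the next true $\Rterm$-state below it --- to be where essentially all the work lies; in the clean special case where $P$ and $Q$ are already in state-determined presentation, each $P_s$ and $Q_s$ is a singleton, so $\Rterm=\widehat{\Rterm}$ and the claim collapses at once onto the solubility lemma, and I would check carefully whether the general statement in fact requires that proviso.
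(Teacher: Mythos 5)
Your decomposition is sound as far as it goes: the language-equality claim, and with it finiteness and safety, follow exactly as you say from the language preservation of Algorithm~\ref{alg:expansion} together with $\langof{P}\cup\langof{Q}=\langof{P\uplus Q}$, and routing the solubility claim through the state-determined-presentation lemma---with the key observation that the theorem's $\Rterm$ is the \emph{universal} variant of the existential $\widehat{\Rterm}$ prescribed by Definition~\ref{def:stdplan}---is already more explicit than what the paper offers, since the paper's entire proof is the single sentence that the result ``follows directly'' from the expansion and the definition of the $\cup$-product. The monotonicity argument for the reachability clause is also correct.

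However, the step you flag as delicate is a genuine gap, and the repair you sketch cannot close it. The goal clause of Definition~\ref{def:solves} constrains the $W$-vertex \emph{at the moment} an execution reaches a terminal plan vertex; chasing the execution forward to the first downstream state lying in $\Rterm$ only tells you that the \emph{extended} execution lands in $\Vgoal$, and says nothing about where $W$ was when the ambiguous state was visited. Your closing suspicion is in fact well founded. Take $P$ with initial action vertices $a_1,a_2$, edges $a_i\xrightarrow{\{u\}}b_i$ and $b_i\xrightarrow{\{y\}}c_i$, and $\Pterm=\{b_1,c_1,c_2\}$, with $Q$ a disjoint copy; let $W$ be the three-vertex chain $A\xrightarrow{\{u\}}B\xrightarrow{\{y\}}C$ with $\Vgoal=\{C\}$. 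The execution $u$ reaches $b_1\in\Pterm$ in $P\uplus Q$ while $W$ sits at $B\notin\Vgoal$, so the goal clause fails for $(P\uplus Q,\Pterm\cup\Qterm)$; yet in $R$ the same execution reaches the single state $s$ with $P_s=\{b_1,b_2\}\not\subseteq\Pterm$, so $s\notin\Rterm$, and one checks that $(R,\Rterm)$ satisfies every clause of Definition~\ref{def:solves}. So under a literal reading of the definitions the ``moreover'' direction requires an extra hypothesis---for instance that no state of $R$ mixes terminal and non-terminal vertices of the same component, which holds in particular when $P$ and $Q$ are already state-determined---and no amount of forward-chasing along the witnessing component will substitute for it.
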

\begin{proof}
  This follows directly from the executions underlying the
  state-determined expansion, and the definition of the $\cup$-product.
\end{proof}

This result illustrates how the state-determined expansion is useful\,---\,it
permits a construction that captures the desired behavioral properties and, by
working from a standardized presentation, can do this directly by examining
states rather than posing questions quantified over the set of executions.

\subsection{Homomorphic solutions}
\label{sec:homomorophicsolns}

The following are a subclass of all solutions to a planning problem.


\begin{definition}[homomorphic solution]\label{def:homomorphic}
  For a plan $(P, \Vterm)$ that solves planning problem $(W, \Vgoal)$, consider the
  relation $ R \subseteq V(P) \times V(W)$, in which $(v, w) \in R$ if and only
  if there exists a joint-execution on $P$ and $W$ that can end at $v$ in $P$
  and in $w$ in $W$.
  A plan for which this relation is a function is called a
  \emph{homomorphic solution}.
\end{definition}



The name for this class of solutions comes via analogy to the
homomorphisms\,---that is, structure-preserving maps---\,which arise in algebra.  In
this context, a homomorphic solution is one for which each state in the plan
corresponds to exactly one state in the planning problem.

\begin{example}
  Recall Example~\ref{ex:35}, which shows a cyclic solution that involves
  tracing around the cyclic planning problem multiple times (until the least
  common multiple of their cycle lengths is found, in this case a series of 30
  states in each graph).
  This plan is \emph{not} a homomorphic solution because each plan state corresponds
  to multiple problem states.
  However, a simpler plan, depicted in Figure~\ref{fig:penta-plan}, can be
  formed in which each plan state maps to only one problem state.  This
  solution is therefore a homomorphic one.
  \begin{figure}[t]
    \centering
    \includegraphics[width=0.4\textwidth]{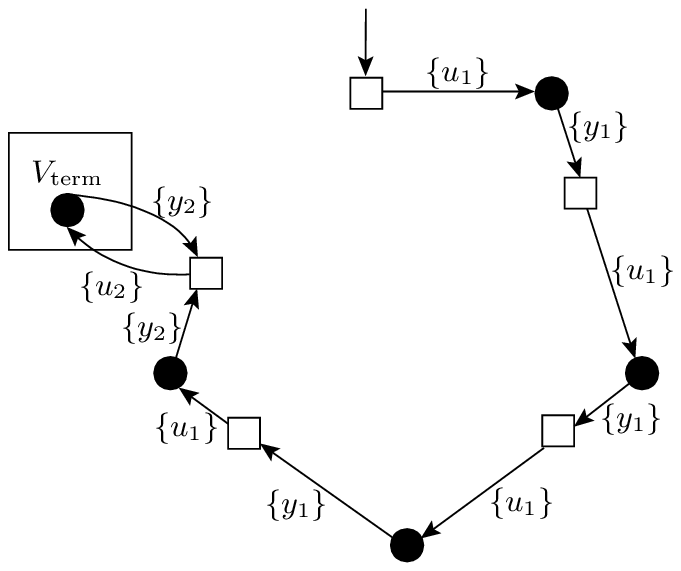}
    \caption{
      An alternative, more direct plan that solves the problem of navigating
      Figure~\ref{fig:pentaworld}'s robot to its charger.  This plan is a
      homomorphic solution.
      \label{fig:penta-plan}
    }
  \end{figure}
\end{example}

The preceding example is a particular instance of a more general pattern.

\begin{theorem}\label{thm:plan-implies-homplan}
  If there exists a plan to solve a \textcolor{black}{state-determined} planning problem, then there exists a
  homomorphic solution.
\end{theorem}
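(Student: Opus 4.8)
The plan is to take the given solution $(P,\Pterm)$ and form a \emph{synchronized product} with the planning problem $(W,\Vgoal)$: a new plan whose states are pairs $(v,w)$ with $v\in V(P)$, $w\in V(W)$, so that its plan-to-problem relation (Definition~\ref{def:homomorphic}) is forced to send $(v,w)\mapsto w$ and hence is a function. The work is then to choose the termination region correctly and verify that the product still solves $(W,\Vgoal)$. As a preliminary normalization, note that $W$ is state-determined by hypothesis, and we may replace $P$ by its state-determined presentation, since the lemma that state-determined presentations preserve solubility tells us the resulting plan still solves $(W,\Vgoal)$; thus we may assume both $P$ and $W$ are state-determined, so that every execution of $P$ (resp.\ $W$) traces a unique state sequence and ends at a unique vertex.

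Now construct the product $R$. Its vertices are exactly the pairs $(v,w)$ reached, simultaneously in $P$ and in $W$, by some joint execution of $P$ and $W$ --- precisely the set enumerated by the forward search described for the ``solves'' decision procedure, and a finite subset of $V(P)\times V(W)$. The initial vertex of $R$ is the pair of initial vertices. At an action vertex $(v,w)$ of $R$ (so $v,w$ are both action vertices, $P$ and $W$ being akin), for each out-edge of $v$ in $P$ carrying an action $u$ (leading to the unique $v'$), extending the current joint execution by $u$ yields an execution of $P$, so since $P$ is safe on $W$ at this observation-terminal point it is also an execution of $W$; hence $W$ admits $u$ from $w$, leading to a unique $w'$, and we add the edge $(v,w)\xrightarrow{u}(v',w')$, merging labels of parallel edges as needed. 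Dually, at an observation vertex $(v,w)$, for each observation $y$ that $W$ admits from $w$ (leading to unique $w'$), safety of $P$ on $W$ at this action-terminal point forces $P$ to admit $y$ from $v$ (leading to unique $v'$), and we add $(v,w)\xrightarrow{y}(v',w')$. Set $\Rterm\defeq\{(v,w)\in V(R)\mid v\in\Pterm\}$. A routine induction on length, using safety of $P$ on $W$ in both directions exactly as above, shows that $\langof{R}$ equals the set of joint executions of $P$ and $W$, and that the $R$-execution corresponding to a joint execution $s$ ends at $(v,w)$ where $v$ and $w$ are the unique vertices reached by $s$ in $P$ and in $W$.

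It remains to check that $(R,\Rterm)$ solves $(W,\Vgoal)$ and is homomorphic. For ``solves'' (Definition~\ref{def:solves}): $R$ is akin to $W$; the joint executions of $R$ on $W$ coincide with those of $P$ on $W$, so $R$ inherits finiteness-on-$W$ from $P$ and, by the construction of $R$'s edges, is safe on $W$ (every observation $W$ can produce from $w$ is matched by an $R$-edge, and every action $R$ offers was offered by $P$ and hence admitted by $W$). Every joint execution of $R$ on $W$ is a joint execution of $P$ on $W$, which by Definition~\ref{def:solves} for $(P,\Pterm)$ either reaches a vertex of $\Pterm$ --- giving an $R$-execution reaching $\Rterm$ --- or is a prefix of one that does, which is itself a joint execution and hence an $R$-execution reaching $\Rterm$; and any $R$-execution reaching $(v,w)\in\Rterm$ has $v\in\Pterm$, so the ``moreover'' clause of Definition~\ref{def:solves} for $(P,\Pterm)$ gives that $w$, the unique $W$-vertex reached, lies in $\Vgoal$. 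For homomorphicity: the relation of Definition~\ref{def:homomorphic} for $(R,\Rterm)$ relates $(v,w)$ to $w^\star$ iff some joint execution of $R$ and $W$ ends at $(v,w)$ in $R$ and $w^\star$ in $W$; such a joint execution is an execution of $R$, whose $W$-endpoint is uniquely determined ($W$ being state-determined) and equals the second coordinate $w$, so the relation is the function $(v,w)\mapsto w$. Hence $(R,\Rterm)$ is a homomorphic solution.

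I expect the main obstacle to be the bookkeeping behind the claim $\langof{R}=\{\text{joint executions of }P\text{ and }W\}$ together with the safety verification for $R$ on $W$: this is exactly where both halves of ``$P$ is safe on $W$'' are needed --- actions proposed by $P$ must be legal moves for $W$, and observations producible by $W$ must be legal moves for $P$ --- and it is easy to conflate the two quantifier directions in Definition~\ref{def:safe}. A secondary subtlety is confirming that defining $\Rterm$ through $\Pterm$ rather than through $\Vgoal$ still certifies goal-reaching, which is precisely where the ``moreover'' clause of ``solves'' for the original plan gets consumed.
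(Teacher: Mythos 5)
Your construction is correct, but it is a genuinely different one from the paper's. The paper builds the homomorphic plan $Q$ directly on the problem's own vertex set ($V(Q)=V(W)$, $V_0(Q)=V_0(W)$): it introduces a ``last visit'' relation $R_{\mathrm{last}}\subset R$, unions the action labels that $P$ offers at the last visit to each $w$, and carries over the edges of $W$ restricted to those labels; the plan-to-problem relation then collapses to (a restriction of) the identity on $V(W)$. You instead form the reachable synchronized product on $V(P)\times V(W)$ and read the homomorphism off the second coordinate. The trade-offs: the paper's projection yields a small plan (at most $\card{V(W)}$ states) whose relation is essentially the identity, but the $R_{\mathrm{last}}$ device and the re-verification that the projected plan still solves $(W,\Vgoal)$ are the delicate steps --- the projected plan has a different execution language than $P$, so solubility must be re-argued from scratch. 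Your product is larger (up to $\card{V(P)}\cdot\card{V(W)}$ states) but more mechanical: since $\langof{R}$ is exactly the set of joint executions of $P$ and $W$, finiteness, safety, prefix-extension to $\Rterm$, and the ``moreover'' goal clause all transfer verbatim from the hypothesis that $(P,\Pterm)$ solves $(W,\Vgoal)$, and homomorphicity is immediate from state-determinedness of $W$. You also correctly isolate where both quantifier directions of Definition~\ref{def:safe} are consumed (actions of $P$ admitted by $W$, observations of $W$ admitted by $P$), and where the hypothesis that $W$ is state-determined is genuinely needed (uniqueness of the $W$-endpoint, which makes $(v,w)\mapsto w$ well-defined) --- the same hypothesis the paper's argument relies on implicitly. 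Both proofs are valid; yours is the more standard product-automaton argument and, if anything, easier to audit.
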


\begin{proof}
Suppose $(P, \Pterm)$ is a solution to $(W, \Vgoal)$%
.
If every joint-execution arriving
at $v$ in $P$ arrives at the same $w$ in $W$, then $(v,w)\in R$ is a function, so
$(P, \Pterm)$ is a homomorphic solution. 
Thus, consider the cases for which there are elements $(v,w)\in R$ and
$(v,y)\in R$, with $w \neq y$. 
Let $R_\textrm{last} \subset R$ be the relation where $(v_p, v_w) \in R_\textrm{last}$ iff there is a
joint-execution $\eseq$ arriving at 
$v_p$ on $P$  and $v_w$ on $W$, and there are no joint-executions which 
extend the execution (e.g., $e_1 \cdots e_k \cdots e_m$, $m>k$) that arrive at $v_w$ again.
Then construct a new plan $(Q, \Qterm)$ with $V(Q) = V(W)$ and  $V_0(Q) =
V_0(W)$. 
For all edges departing $v\in P$
associated with $w\in Q$ where $(v, w) \in R_\textrm{last}$, we collect the 
label sets by unioning them to form $V_e$.
Then edges departing $w$ are included in $Q$ by carrying over edges from $W$,
intersecting $V_e$ with all the labels of edges departing $w$, and dropping those
for which the result is empty.
Finally, an element $w$ is included in $\Qterm$ if there is a 
$v\in \Pterm$ with $(v, w) \in R_\textrm{last}$.
Then $(Q, \Qterm)$ is a solution to $(W, \Vgoal)$ because, though 
$(P, \Pterm)$ and $(Q, \Qterm)$ have different sets of executions, every 
execution on $P$ that reaches $\Pterm$ is transformed into another on
$Q$ reaching $\Qterm$ (and $\Vgoal$).
Moreover, this
ensures that the relation from Definition~\ref{def:homomorphic} is a bijection, so that $(Q, \Qterm)$ is a homomorphic
solution to $(W, \Vgoal)$.
\end{proof}

\subsection{Destructive or not?}

If a label map can express a change in a p-graph, the question is whether
this change matters. One can pose this question meaningfully for planning
problems as the added ingredients provide semantics that yield the notion of
solubility. 

\begin{definition}[destructive and non-destructive on plans]
  A label map $h$ is \emph{destructive} on a set of solutions $S$ to planning
  problem $(G,\Vgoal)$ if, for every plan $(P,\Vterm)\in S$, $(h(P),\Vterm)$
  cannot solve $(h(G), \Vgoal)$.  
  We say that $h$ is \emph{non-destructive} on $S$ if for every 
  $(P,\Vterm)\in S$, the plan $(h(P),\Vterm)$ does solve $(h(G), \Vgoal)$.  
  \label{defn:nondestructive}
\end{definition}

Intuitively, destructiveness requires that the label map break all
existing solutions; non-destructiveness requires that the label map
break none of them.

\begin{example}[single plans]
  If $S=\{s\}$ is a singleton set, then we can determine whether $h$ is
  destructive on $S$ by applying the label map $h$ ---recall
  Definition~\ref{def:set-map}--- to compute $h(s)$ and $h(G)$, and then
  testing whether $h(s)$ solves $h(G)$ ---recall the algorithm described in
  Section~\ref{sec:plans}.  If $h(s)$ solves $h(G)$, then $h$ is nondestructive on
  $S$; otherwise, $h$ is destructive on $S$.
  In this singleton case, we say simply that $h$ is (non-)destructive on $s$.
\end{example}
\vspace*{-2pt}

Definition~\ref{defn:nondestructive} depends on a selection of some class of
solutions.  Of particular interest is the maximal case, in which every solution
is part of the class.

\vspace*{-2pt}
\begin{definition}[strongly destructive and strongly non-destructive]
  A label map $h$ is \emph{strongly (non-)destructive} on a planning problem
  $(G, \Vgoal)$ if it is \mbox{(non-)destructive} on the set of all solutions to $(G,
  \Vgoal)$.  \label{defn:nondestructive2}
\end{definition}
\vspace*{-2pt}

Note that, while strong destructiveness may be decided by attempting to
generate a plan for $h(G)$ (perhaps by backchaining from $\Vgoal$), strong
non-destructiveness may be quite difficult to verify in general, if only due to
the sheer variety of extant solutions.  (Recall Example~\ref{ex:35}, which
solves its problem in an unexpected way.)  The next results, while not
sufficient in general to decide whether a map is strongly non-destructive, do
perhaps shed some light on how that might be accomplished.

\vspace*{-3pt}
\begin{lemma}[label maps preserve safety]\label{lem:map-safe}
  If $P$ is safe on $G$, then for any label map $h$, $h(P)$ is safe on $h(G)$.
\end{lemma}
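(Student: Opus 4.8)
The plan is to push the whole argument down to the level of induced interaction languages, exploiting the fact that a label map only relabels edges and leaves the underlying bipartite graph, its edge set, and its initial states untouched.

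First I would record how $h$ acts on induced languages. Since $h$ merely replaces each edge label $E$ by $h(E)=\bigcup_{e\in E}h(e)$, a sequence $f_1\cdots f_k$ is an execution of $h(P)$ exactly when there is an execution $e_1\cdots e_k$ of $P$ along the same path with $f_i\in h(e_i)$ for each $i$; that is,
\[
  \langof{h(P)}=\{\, f_1\cdots f_k \mid \exists\, e_1\cdots e_k\in\langof{P},\ f_i\in h(e_i)\ \text{for all } i\,\},
\]
and similarly for $G$. Akinness of $h(P)$ and $h(G)$ is then immediate, since $h$ sends actions to actions and observations to observations, and so preserves whether a p-graph is action-first or observation-first.

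Next I would verify the two clauses of Definition~\ref{def:safe}. Fix a joint event sequence $s=f_1\cdots f_k$ of $h(P)$ and $h(G)$ and a successor $s'=s\,f_{k+1}$; consider the clause in which $s$ is action-terminal and $s'\in\langof{h(G)}$, the observation-terminal clause being symmetric with the roles of $P$ and $G$ exchanged. Using the characterization, pull $s'$ back to a preimage $g_1\cdots g_{k+1}\in\langof{G}$; its length-$k$ prefix $g_1\cdots g_k$ is an execution of $G$. If $g_1\cdots g_k$ is also an execution of $P$, then it is a joint, action-terminal event sequence of $P$ and $G$, and safety of $P$ on $G$ (clause 2) hands us $g_1\cdots g_k\,g_{k+1}\in\langof{P}$; feeding this back through the characterization gives $s'\in\langof{h(P)}$, as wanted.

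The load-bearing step is thus replaying the $\langof{G}$-preimage inside $P$, and the real obstacle is the non-injectivity of $h$: membership $s\in\langof{h(P)}\cap\langof{h(G)}$ only furnishes a preimage of $s$ in $\langof{P}$ and a \emph{possibly different} preimage in $\langof{G}$, so a $\langof{G}$-preimage need not be a $P$-execution, and Definition~\ref{def:safe} (which quantifies over sequences lying in \emph{both} languages) cannot be applied to it verbatim. I would dispatch this by an induction on $k$ that constructs a single common preimage of $s$ lying in $\langof{P}\cap\langof{G}$ and agreeing with the chosen preimage of $s'$: the alternation of the languages fixes whose ``turn'' each event is, and safety of $P$ on $G$ says that at every joint observation-terminal prefix the actions available to $P$ are among those available to $G$, while at every joint action-terminal prefix the observations available to $G$ are among those available to $P$; hence the next preimage event can always be chosen valid in both p-graphs simultaneously. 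The one place demanding care is the very start of the interaction, where neither clause of Definition~\ref{def:safe} fires; beyond that the induction is routine.
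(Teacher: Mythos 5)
Your reduction to induced languages is sound as far as it goes---the characterization of $\langof{h(P)}$ as the set of $h$-images of executions of $P$ is correct---and you have put your finger on exactly the right obstacle: a joint event sequence of $h(P)$ and $h(G)$ need not arise from any joint event sequence of $P$ and $G$. But the step you defer as ``the one place demanding care,'' together with the induction you call routine, is where the argument actually fails, and it cannot be repaired. Safety of $P$ on $G$ constrains only sequences already lying in $\langof{P}\cap\langof{G}$; nothing lets you steer a $\langof{P}$-preimage and a $\langof{G}$-preimage of the same mapped string toward a common one. Concretely, take $\set{Y}=\{y_1,y_2\}$, let $\langof{P}$ be the prefix closure of $\{a\,y_1\,c,\ a\,y_2\}$ and $\langof{G}$ the prefix closure of $\{a\,y_2\,d\}$, each realized by a small p-graph. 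The joint sequences are $\emptyseq$, $a$, and $a\,y_2$, and $P$ is safe on $G$: the only observation $G$ offers after $a$ is $y_2$, which $P$ accepts, and $P$ proposes no action after $a\,y_2$. Now let $h$ conflate the observations, $h(y_1)=h(y_2)=\{y\}$, fixing everything else. Then $a\,y$ is joint for $h(P)$ and $h(G)$, and its successor $a\,y\,c$ lies in $\langof{h(P)}$ (via the preimage $a\,y_1\,c$) but not in $\langof{h(G)}$, so $h(P)$ is not safe on $h(G)$. Your induction breaks here in both possible ways: the common preimage of $a\,y$ is $a\,y_2$, which does not support the extension by $c$ inside $P$, while the preimage that does support it, $a\,y_1$, is not in $\langof{G}$. (A still simpler example, with $\langof{P}$ and $\langof{G}$ sharing only $\emptyseq$ and $h$ conflating their distinct first actions, kills your base case outright, since the empty sequence is neither action- nor observation-terminal and so safety imposes no constraint on it.)

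For comparison, the paper's proof is a two-line state-level argument: for each pair of states $(v,w)$ reached by a joint execution of $P$ and $G$, the inclusion between the unions of outgoing labels is preserved because $h$ acts monotonically on sets. That argument quietly restricts attention to state pairs reached by joint executions of the \emph{original} graphs, which is precisely the assumption your obstacle concerns, so it does not close the gap either; the example above shows the statement needs an additional hypothesis (for instance, that every joint execution of $h(P)$ and $h(G)$ is the $h$-image of a joint execution of $P$ and $G$). In short: you identified the correct crux and were right to flag it, but it should not have been waved through---it is fatal to this line of proof as written.
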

\vspace*{-3pt}
\begin{proof}
  Consider each pair of states $(v, w)$, with $v \in V(P)$ and $w \in V(G)$
  reached by some joint-execution on $P$ and $G$.
  Suppose for simplicity that $v$ is an action state.  (The observation case is
  similar.)
  Let $E_1$ denote the union of all labels for edges outgoing from $v$, and
  likewise $E_2$ for labels of edges outgoing from $w$.  Since $P$ is safe on
  $G$, we have $E_1 \subseteq E_2$.
  Then, in $h(P)$ and $h(G)$, observe that

  \vspace{-8pt}
    $$h(E_1) = \bigcup_{e \in E_1} h(e) \subseteq \bigcup_{e \in E_2} h(e) = h(E_2), $$
  \vspace{-8pt}

  \noindent and conclude that $h(P)$ is safe on $h(G)$.
\end{proof}


\begin{lemma}[label maps never introduce homomorphism]\label{lem:maps-no-homo}
  If $(P,\Pterm)$ is a non-homomorphic solution to $(G,\Vgoal)$ then 
  no label map $h$ results in $(h(P),\Pterm)$ being a homomorphic solution
  to $(h(G),\Vgoal)$.
\end{lemma}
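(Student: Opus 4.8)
The plan is to argue directly that the relation from Definition~\ref{def:homomorphic}, built for the pair $(h(P),h(G))$, fails to be a function whenever it fails to be one for $(P,G)$. Since a homomorphic solution is, in particular, a solution, the case where $(h(P),\Pterm)$ does not solve $(h(G),\Vgoal)$ is immediate; the only case requiring work is when it does solve, where I must defeat the functionality requirement on the relation.

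First I would record the two structural facts about label maps that do the heavy lifting: (i) a label map changes neither the vertex set, nor the set of initial states, nor the action/observation type of any vertex---it merely replaces each edge label $E$ by $h(E)=\bigcup_{e\in E}h(e)$; and (ii) consequently, if an event $e$ lies in the label $E$ of an edge $\edge{a}{E}{b}$ of $P$ (or of $G$), then every $e'\in h(e)$ lies in the label $h(E)$ of the corresponding edge of $h(P)$ (resp.\ $h(G)$), because $h(e)\subseteq h(E)$. From (i) and (ii) it follows that any event sequence transitioning along a path of states $a_0,a_1,\ldots,a_k$ in $P$ (or in $G$) can be re-labeled to transition along that same path in $h(P)$ (resp.\ $h(G)$): for each index $i$ choose some $e_i'\in h(e_i)$, which is possible because label maps take nonempty-set values. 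In particular $h(P)$ and $h(G)$ are again akin, so the relation for the mapped pair is well defined.

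Now, $(P,\Pterm)$ being a non-homomorphic solution means its relation $R$ is not a function, so there is a state $v\in V(P)$ and there are distinct $w_1,w_2\in V(G)$ together with joint-executions $s_1,s_2$ of $P$ on $G$ such that $s_1$ ends at $v$ in $P$ and at $w_1$ in $G$, while $s_2$ ends at $v$ in $P$ and at $w_2$ in $G$. Re-labeling $s_1$ along its two witnessing paths simultaneously---a single choice of representatives $e_i'\in h(e_i)$ works for both, since the same event sequence is being traced in $P$ and in $G$---yields a sequence $s_1'$ valid from a start state of $h(P)$ and from a start state of $h(G)$, hence a joint-execution of $h(P)$ on $h(G)$ ending at $v$ in $h(P)$ and at $w_1$ in $h(G)$; similarly $s_2$ yields $s_2'$ ending at $v$ and $w_2$. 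Thus the relation for $(h(P),h(G))$ contains both $(v,w_1)$ and $(v,w_2)$ with $w_1\ne w_2$, so it is not a function, and $(h(P),\Pterm)$ is not a homomorphic solution to $(h(G),\Vgoal)$.

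I do not anticipate a real obstacle; the proof is essentially bookkeeping built on facts (i) and (ii). The one spot deserving care is verifying that the re-labeled sequence is genuinely an execution of each mapped p-graph---this is exactly where preservation of vertices and initial states, together with $h(e)\subseteq h(E)$, is needed, so that the original witnessing state-paths survive verbatim under $h$.
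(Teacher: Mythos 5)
Your proof is correct and follows essentially the same route as the paper's: both take the two witnessing joint-executions ending at the same plan state $v$ but at distinct problem states, push them through $h$ by choosing arbitrary representatives from each $h(e_i)$ (using $h(e)\subseteq h(E)$ and the fact that $h$ leaves vertices and initial states untouched), and conclude that the relation for $(h(P),h(G))$ still fails to be a function. Your explicit handling of the case where $(h(P),\Pterm)$ fails to solve $(h(G),\Vgoal)$ at all is a small extra care the paper leaves implicit, but it is not a different argument.
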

\begin{proof}
Since $(P,\Pterm)$ is a non-homomorphic solution to $(G,\Vgoal)$, there exist
two joint-executions $\eseq$ and $\epseq$ on $P$ and $G$ such that both arrive
at $v\in V(P)$ in $P$, but on $G$, the former arrives at $w\in V(G)$ and the
latter arrives at $w'\in V(G)$ with $w \neq w'$.  Now, given any $h(\cdot)$,
pick any particular sequence ${(h_1\in h(e_1)) \cdots (h_k\in h(e_k))}$, and
${(h'_1\in h(e'_1)) \cdots (h'_m\in h(e'_m))}$, making choices arbitrarily.
These are joint-executions on $h(P)$ and $h(G)$.  Application of the label
map means there is a way of tracing both ${(h_1\in h(e_1)) \cdots (h_k\in
h(e_k))}$ and ${(h'_1\in h(e'_1)) \cdots (h'_m\in h(e'_m))}$ on $h(P)$ to arrive
at $v$, while there is a way of tracing the former on $h(G)$ to arrive at $w$,
and the latter at $w'$.  
So $(h(P),\Pterm)$ cannot be a homomorphic solution to
$(h(G),\Vgoal)$.  \end{proof}

\newcommand{\allhom}{\mathcal{H}}

\begin{theorem}[extensive destructiveness]\label{thm:ed}
  For a \textcolor{black}{state-determined} planning problem $(G, \Vgoal)$, let
  $\allhom$ denote the set of homomorphic solutions that problem.
  Then any label map that is destructive on $\allhom$ is strongly destructive.
\end{theorem}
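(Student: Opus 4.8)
The plan is to prove the contrapositive: assuming $h$ is \emph{not} strongly destructive on $(G,\Vgoal)$, I will exhibit a homomorphic solution showing that $h$ is not destructive on $\allhom$. Unfolding Definitions~\ref{defn:nondestructive} and~\ref{defn:nondestructive2}, the hypothesis hands us a solution $(P,\Pterm)$ of $(G,\Vgoal)$ for which $(h(P),\Pterm)$ solves $(h(G),\Vgoal)$; the target is a \emph{homomorphic} solution $(Q,\Qterm)$ of $(G,\Vgoal)$ for which $(h(Q),\Qterm)$ also solves $(h(G),\Vgoal)$. If $(P,\Pterm)$ is itself homomorphic there is nothing to do, so suppose it is not.

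First I would run the construction from the proof of Theorem~\ref{thm:plan-implies-homplan} on the solution $(P,\Pterm)$ and the (state-determined, by hypothesis) problem $(G,\Vgoal)$. This yields a homomorphic solution $(Q,\Qterm)$ of $(G,\Vgoal)$ whose p-graph has $V(Q)=V(G)$ and $V_0(Q)=V_0(G)$, and each of whose edges is an edge of $G$ whose label has merely been intersected with one of the sets $V_e$ assembled from $P$. Three consequences will be used repeatedly: (i) since $G$ is state-determined, so is $Q$; (ii) $\langof{Q}\subseteq\langof{G}$; and (iii) applying $h$, every edge of $h(Q)$ is an edge of $h(G)$ bearing a subset of the matching $h(G)$-label (because $E\subseteq E'$ implies $h(E)\subseteq h(E')$), so $h(Q)$ is a ``label-restricted sub-p-graph'' of $h(G)$, $\langof{h(Q)}\subseteq\langof{h(G)}$, and every joint-execution of $h(Q)$ on $h(G)$ is just an execution of $h(Q)$.

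Next I would check the clauses of Definition~\ref{def:solves} for $(h(Q),\Qterm)$ against $(h(G),\Vgoal)$. Akin-ness is immediate, since label maps send actions to actions and observations to observations and leave initial states alone. Safety of $h(Q)$ on $h(G)$ is Lemma~\ref{lem:map-safe} applied to the fact that $Q$, being a solution, is safe on $G$. For finiteness: an execution of $h(Q)$ traces a path in $Q$, and choosing a \textsc{Representative} from each (nonempty) label along that path turns it into an execution of $Q$ of the same length; since $Q$ is finite on $G$ and $\langof{Q}\subseteq\langof{G}$, those lengths are bounded, hence so are the joint-executions of $h(Q)$ on $h(G)$. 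The ``reaches $\Qterm$, or is a prefix of a joint-execution that does'' clause transfers the same way: the $Q$-path traced by an execution of $h(Q)$ either ends in $\Qterm$ or, because $Q$ solves $G$ and $Q$ is state-determined, extends within $Q$ to one that does, and taking $h$-representatives along the extension keeps the result inside $\langof{h(Q)}\subseteq\langof{h(G)}$.

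The last clause is the delicate one, and I expect it to be the main obstacle: every joint-execution of $h(Q)$ on $h(G)$ that reaches a vertex of $\Qterm$ in $h(Q)$ must reach only vertices of $\Vgoal$ in $h(G)$. Along the single $Q$-path such an execution traces, the endpoint lies in $\Vgoal$ because $(Q,\Qterm)$ solves the state-determined problem $(G,\Vgoal)$; the difficulty is that $h$ may have destroyed state-determinism, so in $h(G)$ the very same event sequence can also reach other vertices along other paths, and those are not governed by $Q$. This is exactly where the hypothesis that $(h(P),\Pterm)$ \emph{solves} $(h(G),\Vgoal)$ enters. The plan is to lift such a joint-execution of $h(Q)$ on $h(G)$ to a joint-execution of $h(P)$ on $h(G)$ reaching $\Pterm$ in $h(P)$ (using the correspondence built into the proof of Theorem~\ref{thm:plan-implies-homplan} between executions of $Q$ reaching $\Qterm$ and joint-executions of $P$ on $G$ reaching $\Pterm$ via the $R_{\mathrm{last}}$ associations, together with the fact that each $Q$-edge label lies inside the union $V_e$ of the $P$-edge labels of the associated $P$-vertices, so $h$ of it lies inside the union of the $h$-images of those labels and every step can be realized in $h(P)$), and then to invoke the goal clause of ``$(h(P),\Pterm)$ solves $(h(G),\Vgoal)$'' to conclude that every $h(G)$-vertex reached by that event sequence lies in $\Vgoal$ --- and those are precisely the $h(G)$-vertices reached by the original $h(Q)$-execution. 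With that, $(h(Q),\Qterm)$ solves $(h(G),\Vgoal)$, so $(Q,\Qterm)\in\allhom$ witnesses that $h$ is not destructive on $\allhom$, completing the contrapositive. The real work is the bookkeeping in that lifting step: keeping the $R_{\mathrm{last}}$-based correspondence between the states of $P$ and $Q$ coherent once $h$ has been applied, and ruling out the possibility that $h$ introduces, in $h(G)$, a reachable branch that $h(Q)$ retains but $h(P)$ did not have to reckon with.
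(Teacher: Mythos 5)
Your overall strategy is genuinely different from the paper's: the paper argues directly, using Lemma~\ref{lem:maps-no-homo} (label maps never introduce homomorphism) together with Theorem~\ref{thm:plan-implies-homplan} applied to the \emph{mapped} problem, to conclude that $(h(G),\Vgoal)$ admits no homomorphic solution and hence no solution at all. You instead prove the contrapositive and never touch Lemma~\ref{lem:maps-no-homo}; your burden is to exhibit a surviving homomorphic witness, and you propose the plan $Q$ obtained by homomorphizing $P$ over $G$. The contrapositive framing is logically fine, and your handling of akin-ness, safety, and finiteness for $(h(Q),\Qterm)$ is plausible.

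The gap is exactly where you suspect it is, and it is not mere bookkeeping: the lifting of joint-executions of $h(Q)$ on $h(G)$ to joint-executions of $h(P)$ on $h(G)$ can fail to exist. In the construction of Theorem~\ref{thm:plan-implies-homplan}, the label $V_e$ at a state $w$ of $Q$ is the \emph{union} of out-labels over all $P$-vertices $v$ with $(v,w)\in R_{\mathrm{last}}$. If two distinct branches of $P$ reach the same $G$-vertex $w$ at $P$-vertices $v_1$ and $v_2$ prescribing different actions, then $Q$ permits either action from $w$ regardless of which branch the execution actually traced; consequently $\langof{Q}\not\subseteq\langof{P}$ in general. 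Your parenthetical argument shows only that each individual event of an $h(Q)$-execution lies in the $h$-image of \emph{some} out-label of \emph{some} associated $P$-vertex; it does not show that these choices chain into a single path in $P$ (equivalently $h(P)$) from an initial state, because the $P$-vertex reached after step $i$ need not be among the $R_{\mathrm{last}}$-associates of $w_{i+1}$ that contribute the label used at step $i+1$. Without a genuine lift, the final appeal to ``$(h(P),\Pterm)$ solves $(h(G),\Vgoal)$'' cannot be made for the crucial goal clause, and the argument that $(h(Q),\Qterm)$ solves $(h(G),\Vgoal)$ is incomplete. To close the argument along the paper's lines you would instead want to rule out, via Lemma~\ref{lem:maps-no-homo}, the possibility that the homomorphic solution guaranteed by Theorem~\ref{thm:plan-implies-homplan} for the mapped problem arises from a non-homomorphic one, rather than trying to transport your particular $Q$ through $h$.
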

\begin{proof}
  Since $h$ is destructive on $\allhom$, we know that $(h(G), \Vgoal)$ can only
  have homomorphic solutions if some formerly non-homomorphic solution can
  become a homomorphic one under $h$, but Lemma~\ref{lem:maps-no-homo}
  precludes that eventuality.
  This implies, via Theorem~\ref{thm:plan-implies-homplan}, that \emph{no} plan
  solves $h(G)$.  Therefore $h$ is strongly destructive on $(G, \Vgoal)$.
\end{proof}

The interesting thing here is that Theorem~\ref{thm:ed} shows that the class of
homomorphic solutions play a special role in the space of all plans: By
examining the behavior of $h$ on $\allhom$, we can gain some insight into its
behavior on the space of all plans.  Informally, $\allhom$ seems to function
as a `kernel' of the space of all plans.

\section{Related work}\label{sec:related}

In earlier sections of the paper we have interspersed precise connections to
specific prior work. This section supplements those links by taking a wider
view; the purpose is not merely coverage, but rather broader context. 

\smallskip

This work builds most directly on, and is strongly influenced by, the
combinatorial filtering perspective, with its use of simple, discrete objects
that generalize beyond the methods used in traditional estimation theory, which
has a strong reliance on probabilistic models.  A gap still remains between the
theory of discrete combinatorial filters and the probabilistic, typically
recursive Bayes formulations, employed most often in practice on robots today.
Both types have a long history.  The probabilistic filters go back
to~\citet{kalman60}, having found use in several important problems in mobile
robotics, including estimation of robot pose and map
information~\citep{dissanayake01,smith90}.  This class of filters is well-known
within the community, with a vast surge of interest catalyzed by the
publication of the book by~\citet{thrun05}.  The discrete filters we focus on
in this paper have their roots in the minimalist manipulation work of
\citet{ErdMas88} and \citet{Gol93}.  They were formalized more generally by
\citet{Lav06}, though this paper evolves those models in a new direction.

Discrete filters and their related sorts of representations are recherch\'{e}
rather than simply obscure: They have been employed in the form of combinatorial
filters to successfully solve a wide a range of useful tasks; recent examples
include target tracking~\citep{yu12shadow}, mobile robot
navigation~\citep{lopez12optimal,tovar07gnt}, and
manipulation~\citep{kristek12}.  Both \citet{lavalle12sensing}, which
provides a tutorial introduction and overview to the approach, and the
substantial paper on the topic \citet{TovCoh+14}, recognize that more work is
needed to extend the theory.  There are two directions which have demanded attention.
The first, which the authors of both of the preceding papers identify, is
that, thus far, the approach provides only for inference and more work is
needed in order to express aspects of feedback-aware control for achieving
tasks.  Section~\ref{sec:plan} has begun to address this gap. 

The second direction is born of the observation that all the combinatorial
filters in the existing work deal with extremely simple sensors. How might
combinatorial filters and other discrete models scale up to larger problems?
One may quite rightly criticize such filters on the basis of their size or 
expression complexity.  An important contribution of the present paper is
increasing in the complexity of sensors that may be treated by discrete filters
without necessitating an enormous growth of filter size.  Previously, when sets
of observations were treated, they required duplication (usually of an edge in
a graph structure), causing substantial blow-up of the model.  The form of
filters we examine does not assume that the set of possible observations is
finite.  And we have described results for filters with infinite (though
finitely described) subsets of $\real$ as labels.  This idea was inspired by
\citet{veanes12symbolictransducers}, who developed symbolic finite transducers
that are concise and expressive for processing strings over large alphabets.
It is also worth nothing that a set of techniques have been developed, along
quite separate lines, to reduce or simplify the representation of information
within such filters~\citep{OKa11,OKaShe17,SonOKa12}. 


Both directions have demanded generalization in slightly different ways.
We
believe that one of the most useful aspects of the formalism arising from this
generalization has been the notion of label maps.  These functions allow one
to degrade models, starting (as we did in the iRobot Create case study) with
physically unrealizable idealizations, and gradually exploring how behavior is
altered.
There is, in fact, a long history and existing precedent for
studying intelligent systems under sensor perturbations.  Psychologist George
Stratton (shown in Figure~\ref{fig:stratton}) pioneered the study of perception
in human vision by having subjects wear special glasses that inverted
images~\cite{Str97}.
Stratton observed that after a relatively short adaptation
period, the subjects began to perceive the world normally, in spite of the
vertical inversion.
This is effectively a sensor map $(x,y) \mapsto (x,-y)$ for suitably
chosen coordinates.
It is a continuous transformation, satisfying the monotonicity requirement
identified in Example~\ref{example:monot}, and ---as Stratton observed--- the
map is not destructive.

\begin{figure}
  \centering
  \includegraphics[scale=0.2]{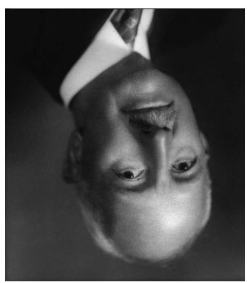}
  \caption{George Stratton, as he would have appeared in the first few minutes
  of wearing the inversion glasses he pioneered.
  ``If a subject is made to wear
  glasses which correct the retinal images, the whole landscape at first
  appears unreal and upside down; on the second day of the experiment normal
  perception begins to reassert itself\ldots'' \citep[p.285]{merleauPonty62} }
  \label{fig:stratton}
\end{figure}


One recent formulation that emphasizes action from the outset is Erdmann's more
recent work on strategy complexes~\citep{erdmann12topology,erdmann10topology},
as referenced in Example~\ref{ex:erdmann}.  He uses tools from classic and
computational topology to relate plans, formulated broadly to include sources
of non-determinism, to high-dimensional objects---his loopback
complexes---whose homotopy type provides information about whether the planning
problem can be solved.  We speculate that preservation of plan existence under
label maps might be productively studied across planning problems by examining
the map's operation on loopback complexes: classes of maps that can be shown to
preserve the homotopy type of such complexes (perhaps over restricted classes
of planning problems) can be declared non-destructive.  

An alternative approach, with goals similar to our own ---namely of identifying
representational basis for objects that can manipulated by algorithms in order
to guide the design process--- is due to \citet{censi17co}.  He poses and
solves co-design problems; ascertaining the maximal task set achievable for a
given set of resources.  He shows that, given a network of monotone
constraints, the selection of components is a process that can be efficiently
automated.  Part of the present interest in studying labels maps is that they
can model aspects of different components.

Also adopting an algorithmic stance on the design process, are methods based on
hybrid automata, which blend discrete and continuous elements.  Powerful
synthesis and verification techniques are known for these
models~\cite{belta07,RamPit+15,DecKre16}. Despite some similarities, including
extensive use of non-determinism, the relationship between p-graphs and hybrid
automata is somewhat involved: guard expressions in a rich logical
specification language have structure missing from the label sets we study; the
action labels in p-graphs are not intended to model continuous dynamics.

\textcolor{black}{
Though the present paper has focused on generalization and idealization to a degree
perhaps uncommon in the robotics literature, this abstract style of approach
in fundamental treatments of behavior appears in other settings.  The natural question
is how these treatments are related.  At least for the question of
bisimulation, a notion of equivalence employed in process algebras (and,
importantly with respect to the present study, along with some generalizations
to systems with continuous dynamics,
see~\cite{haghverdi05}), one of the authors has recently obtained a clear
result on the relationship between the bisimulation relation and filter
reduction. \cite{RahmOKa18} show that filter reduction  (see
Section~\ref{sec:hardness}) can be achieved by quotienting an input filter by
some relation and that bisimilarity is not the correct
notion of equivalence for some types of filters. 
}

\section{Conclusion}\label{sec:conclusion}
This paper introduced and explored formalisms for reasoning about interactions
between robots and their environments, including interaction languages,
p-graphs, and label maps.
We believe that the most crucial intellectual contributions of the present work
are in attaining a degree abstractness missing from prior ideas in two ways.

First, we separate those entities which have been formalized in robotics because
they have some interpretation that is useful (e.g., the idea of a plan, a
filter), from their representation.  The p-graph, in and of itself, lacks an
obvious interpretation.  Its definition does not include semantics belying a
single anticipated use, rather context and any specific interpretation are only
added for the special subclasses. In this sense, it is identical to the abstract
treatment of computation as the constructive process of realizing a
correspondence from inputs to outputs.

Second, even if something like a p-graph is a representation that is general
enough to express many items of interest, it is not a {\em
canonical form}.  This paper engenders an important mental shift in lifting
most of the notions of equivalence up to sets of executions, via interaction
languages, rather than depending on operations on some specific graph.  The
present work continues to separate the notion of behavior from presentation.
This helps establish a foundation for the semantics of the coupled
robot-environment system.  


The theoretical groundwork laid by this paper for reasoning about sensors and
actuators, and their associated estimation and planning processes, aims to
strengthen the link between idealized models and practical systems.  It is
imperative that we close the gap between robotics science and robotics
practice, and more work remains to be done. We submit that it should be work aimed
unambiguously and explicitly at that gap.

\section*{Acknowledgements} 
This material is based upon work supported by the National Science Foundation
under Grants IIS-1527436, IIS-1526862, IIS-0953503, IIS-1453652.  We thank
Yulin Zhang for his close reading of earlier versions of this work.

\bibliographystyle{SageH}
\bibliography{bibijrr}

\end{document}